\def\isarxiv{1} 

\ifdefined\isarxiv
\documentclass[11pt]{article}

\usepackage[numbers]{natbib}

\else
\documentclass{article}
\usepackage{algorithm}
\usepackage{microtype}
\usepackage{graphicx}
\usepackage{subfig}
\usepackage{booktabs}
\usepackage{hyperref}
\usepackage{icml2025}

\usepackage{amsmath}
\usepackage{amssymb}
\usepackage{mathtools}
\usepackage{amsthm}
\usepackage{algorithm}
\usepackage{algpseudocode}

\fi

\ifdefined\isarxiv
\usepackage{amsmath}
\usepackage{amsthm}
\usepackage{amssymb}
\usepackage{algorithm}
\usepackage{subfig}
\usepackage{algpseudocode}
\usepackage{graphicx}
\usepackage{grffile}
\usepackage{wrapfig,epsfig}
\usepackage{url}
\usepackage{xcolor}
\usepackage{epstopdf}

\usepackage{bbm}
\usepackage{dsfont}

\usepackage{booktabs}
\fi

\allowdisplaybreaks

\ifdefined\isarxiv

\usepackage{tikz}
\usepackage{hyperref}  
\hypersetup{colorlinks=true,citecolor=blue,linkcolor=blue} 
\usetikzlibrary{arrows}
\usepackage[margin=1in]{geometry}

\else

\usepackage{microtype}
\usepackage{hyperref}
\definecolor{mydarkblue}{rgb}{0,0.08,0.45}
\hypersetup{colorlinks=true, citecolor=mydarkblue,linkcolor=mydarkblue}

\fi

\definecolor{darkgreen}{HTML}{385E0F}
\definecolor{darkorange}{HTML}{FF8000}
\definecolor{darkblue}{HTML}{0000FF}
 
\graphicspath{{./figs/}}

\theoremstyle{plain}
\newtheorem{theorem}{Theorem}[section]
\newtheorem{lemma}[theorem]{Lemma}
\newtheorem{definition}[theorem]{Definition}

\newtheorem{fact}[theorem]{Fact}

\newcommand{\N}{\mathcal{N}}
\newcommand{\R}{\mathbb{R}}

\DeclareMathOperator{\diag}{diag}

\makeatletter
\newcommand*{\RN}[1]{\expandafter\@slowromancap\romannumeral #1@}
\makeatother

\usepackage{lineno}

\ifdefined\isarxiv
\else
\icmltitlerunning{RichSpace: Enriching Text-to-Video Prompt Space via Text Embedding Interpolation}
\fi
\begin{document}

\ifdefined\isarxiv

\date{}

\title{RichSpace: Enriching Text-to-Video Prompt Space via Text Embedding Interpolation}
\author{
Yuefan Cao\thanks{\texttt{ ralph1997off@gmail.com}. Zhejiang University.}
\and
Chengyue Gong\thanks{\texttt{ cygong17@utexas.edu}. The University of Texas at Austin.}
\and
Xiaoyu Li\thanks{\texttt{
xiaoyu.li2@student.unsw.edu.au}. University of New South Wales.}
\and
Yingyu Liang\thanks{\texttt{
yingyul@hku.hk}. The University of Hong Kong. \texttt{
yliang@cs.wisc.edu}. University of Wisconsin-Madison.} 
\and
Zhizhou Sha\thanks{\texttt{ shazz20@mails.tsinghua.edu.cn}. Tsinghua University.}
\and
Zhenmei Shi\thanks{\texttt{
zhmeishi@cs.wisc.edu}. University of Wisconsin-Madison.}
\and
Zhao Song\thanks{\texttt{ magic.linuxkde@gmail.com}. Simons Institute for the Theory of Computing, University of California, Berkeley.}
}

\else


\twocolumn[


\icmltitle{RichSpace: Enriching Text-to-Video Prompt Space \\
via Text Embedding Interpolation}


\icmlsetsymbol{equal}{*}

\begin{icmlauthorlist}
\icmlauthor{Aeiau Zzzz}{equal,to}
\icmlauthor{Bauiu C.~Yyyy}{equal,to,goo}
\icmlauthor{Cieua Vvvvv}{goo}
\icmlauthor{Iaesut Saoeu}{ed}
\icmlauthor{Fiuea Rrrr}{to}
\icmlauthor{Tateu H.~Yasehe}{ed,to,goo}
\icmlauthor{Aaoeu Iasoh}{goo}
\icmlauthor{Buiui Eueu}{ed}
\icmlauthor{Aeuia Zzzz}{ed}
\icmlauthor{Bieea C.~Yyyy}{to,goo}
\icmlauthor{Teoau Xxxx}{ed}\label{eq:335_2}
\icmlauthor{Eee Pppp}{ed}
\end{icmlauthorlist}

\icmlaffiliation{to}{Department of Computation, University of Torontoland, Torontoland, Canada}
\icmlaffiliation{goo}{Googol ShallowMind, New London, Michigan, USA}
\icmlaffiliation{ed}{School of Computation, University of Edenborrow, Edenborrow, United Kingdom}

\icmlcorrespondingauthor{Cieua Vvvvv}{c.vvvvv@googol.com}
\icmlcorrespondingauthor{Eee Pppp}{ep@eden.co.uk}

\icmlkeywords{Machine Learning, ICML}
\vskip 0.3in
]
\printAffiliationsAndNotice{\icmlEqualContribution} 
\fi

\ifdefined\isarxiv
\begin{titlepage}
  \maketitle
  \begin{abstract}
Text-to-video generation models have made impressive progress, but they still struggle with generating videos with complex features. This limitation often arises from the inability of the text encoder to produce accurate embeddings, which hinders the video generation model. In this work, we propose a novel approach to overcome this challenge by selecting the optimal text embedding through interpolation in the embedding space. We demonstrate that this method enables the video generation model to produce the desired videos. Additionally, we introduce a simple algorithm using perpendicular foot embeddings and cosine similarity to identify the optimal interpolation embedding. Our findings highlight the importance of accurate text embeddings and offer a pathway for improving text-to-video generation performance.

  \end{abstract}
  \thispagestyle{empty}
\end{titlepage}

{\hypersetup{linkcolor=black}
\tableofcontents
}
\newpage

\else

\begin{abstract}

\end{abstract}

\fi

\section{Introduction} \label{sec:intro}

Text-to-video models have developed rapidly in recent years, driven by the advancement of Transformer architectures \cite{v17} and diffusion models \cite{hja20}. Early attempts at text-to-video generation focused on scaling up Transformers, with notable works such as CogVideo \cite{hdz+22} and Phenaki \cite{vbk+22}, which demonstrated promising results. More recently, the appearance of DiT \cite{px23}, which incorporates Transformers as the backbone of Diffusion Models, has pushed the capabilities of text-to-video generation models to new heights. Models like Sora \cite{sora}, MovieGen \cite{moviegen}, CogVideoX \cite{ytz+24}, and Veo 2 \cite{veo} have further showcased the potential of these approaches.
Despite the impressive progress made in recent years, current state-of-the-art text-to-video generation models still face challenges in effectively following complex instructions in user-provided text prompts. For instance, when users describe unusual real-world scenarios, such as ``a tiger with zebra-like stripes walking on the grassland,'' the text encoder may struggle to fully capture the intended meaning. This results in text embeddings that fail to guide the video generation model toward producing the desired output. This issue is also observed in the text-to-image generation domain, where a notable work, Stable Diffusion V3 \cite{ekb+24}, addresses this challenge by incorporating multiple text encoders to improve understanding. Although their approach, which combines embeddings from different encoders, yields effective results, it comes at a significant computational cost due to the need to compute embeddings from multiple sources.

In this work, we first study the problem that prompt space is not enough to cover all video space from a theoretical perspective. We provide an informal theorem of our theoretical findings as follows:

\begin{theorem}[Word Embeddings being Insufficient to Represent for All Videos, informal version of Theorem~\ref{thm:any_function_bound_in_d_dimension}] \label{thm:any_function_bound_in_d_dimension:informal}
Let $n,d$ denote two integers, where $n$ denotes the maximum length of the sentence, and all videos are in $\R^d$ space. 
Let $V \in \mathbb{N}$ denote the vocabulary size. 
Let $\mathcal{U} = \{u_1, u_2, \cdots, u_V\}$ denote the word embedding space, where for $i \in [V]$, the word embedding $u_i \in \R^k$. 
Let $\delta_{\min} = \min_{i, j \in [V], i \neq j} \| u_i - u_j\|_2$ denote the minimum $\ell_2$ distance of two word embedding. 
Let $f : \R^{nk} \rightarrow \R^d$ denote the text-to-video generation model, which is also a mapping from sentence space (discrete space $\{u_1, \dots, u_V\}^n$) to video space $\R^d$.
Let $M := \max_x \| f(x) \|_2, m := \min_x \| f(x) \|_2$.
Let $\epsilon = ((M^d - m^d) / V^n)^{1/d}$. 
Then, we can show that there exits a video $y \in \R^d$, satisfying $m \leq \| y \|_2 \leq M$, such that for any sentence $x \in \{u_1, u_2, \cdots, u_V\}^n$, $\|f(x) - y \|_2 \geq \epsilon$.
\end{theorem}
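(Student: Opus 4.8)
The plan is to prove the statement by a volume (pigeonhole) comparison, exploiting the fact that the sentence space is \emph{finite}. Since every sentence belongs to the discrete set $\{u_1,\dots,u_V\}^n$, there are at most $V^n$ of them, so $S:=\{f(x):x\in\{u_1,\dots,u_V\}^n\}\subseteq\R^d$ is a finite set with $|S|\le V^n$, and each point of $S$ lies in the spherical shell $A:=\{z\in\R^d:m\le\|z\|_2\le M\}$. I would argue by contradiction: if no video $y$ as in the statement exists, then every $y\in A$ satisfies $\|f(x)-y\|_2<\epsilon$ for some sentence $x$, i.e.\ $A\subseteq\bigcup_{s\in S}B(s,\epsilon)$, where $B(s,\epsilon)$ denotes the open Euclidean ball of radius $\epsilon$ about $s$.

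Next I would compare Lebesgue measures. Letting $c_d$ be the volume of the unit ball in $\R^d$, the shell has volume $\mathrm{vol}(A)=c_d(M^d-m^d)$, while $\mathrm{vol}\big(\bigcup_{s\in S}B(s,\epsilon)\big)\le|S|\,c_d\epsilon^d\le V^n c_d\epsilon^d$. With the given $\epsilon=((M^d-m^d)/V^n)^{1/d}$, the latter bound is exactly $c_d(M^d-m^d)=\mathrm{vol}(A)$, so this only yields $\mathrm{vol}(A)\le\mathrm{vol}(A)$ --- the borderline equality is the one genuinely delicate point. The fix is a compactness argument: $g(y):=\dist(y,S)$ is continuous (a minimum of finitely many norms) and $A$ is compact, so under the contradiction hypothesis the maximum $r:=\max_{y\in A}g(y)$ is attained and $r<\epsilon$. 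Then $A\subseteq\bigcup_{s\in S}\overline{B}(s,r)$ with \emph{closed} balls of the strictly smaller radius $r$, giving $c_d(M^d-m^d)=\mathrm{vol}(A)\le V^n c_d r^d<V^n c_d\epsilon^d=c_d(M^d-m^d)$, an honest contradiction. Hence some $y\in A$ has $g(y)\ge\epsilon$, which is exactly a video with $m\le\|y\|_2\le M$ and $\|f(x)-y\|_2\ge\epsilon$ for every sentence $x$. The degenerate case $M=m$ (so $\epsilon=0$) is trivial: any $y$ with $\|y\|_2=m$ works.

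The main obstacle, as indicated, is purely this equality case; passing to closed balls of the attained radius $r<\epsilon$ resolves it, and the remaining facts (volume of a ball, of a shell, finite subadditivity of Lebesgue measure) are routine. I also note that $k$ and $\delta_{\min}$ never enter the argument: the result is essentially a counting fact --- a shell of volume $c_d(M^d-m^d)$ cannot be $\epsilon$-covered by $V^n$ balls when $\epsilon^d<(M^d-m^d)/V^n$, and even the boundary case fails by the compactness refinement --- with $S$ playing no role beyond being finite and contained in the shell.
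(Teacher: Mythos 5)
Your proposal is correct and follows essentially the same route as the paper's proof of Theorem~\ref{thm:any_function_bound_in_d_dimension:restatement}: a counting/volume argument showing that $V^n$ balls of radius $\epsilon$ cannot cover the shell $\{y: m\le\|y\|_2\le M\}$ of volume $c_d(M^d-m^d)$. Your compactness refinement (passing to the attained covering radius $r<\epsilon$) is a welcome tightening of the borderline step that the paper states as a strict inequality in Eq.~\eqref{eq:d_dimensional_space} even though the two sides there are literally equal; otherwise the two arguments coincide.
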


Additionally, we take a different approach by exploring whether we can obtain a powerful text embedding capable of guiding the video generation model through interpolation within the text embedding space. Through empirical experiments, we demonstrate that by selecting the optimal text embedding, the video generation model can successfully generate the desired video. Additionally, we propose an algorithm that leverages perpendicular foot embeddings and cosine similarity to capture both global and local information in order to identify the optimal interpolation text embedding (Fig.~\ref{fig:mixture_of_text_embed} and Algorithm~\ref{alg:find_optimal_interpolation}).

\begin{figure*}[!ht]
    \centering
    \includegraphics[width=1\linewidth]{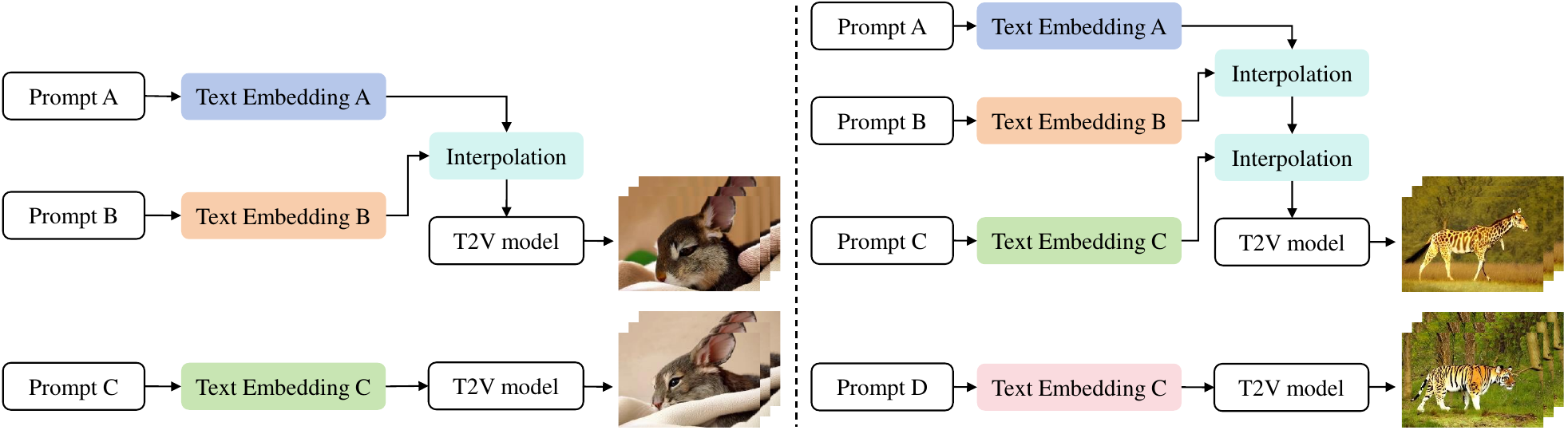}
    \caption{
    Two kinds of Text Prompts Mixture. {\bf Left: Mixture of Two Prompts.} We set two prompts, A and B, and apply linear interpolation to two corresponding text embeddings. After that, we use one of the interpolation results to generate a video. To evaluate the effect of video interpolation, we set another prompt C, which describes the generated video to generate a video to compare with the interpolated video. {\bf Right: Mixture of Three Prompts}. We set two prompts A and B and apply linear interpolation to two corresponding text embeddings. We manually choose one text embedding interpolated from A and B, then apply linear interpolation to this text embedding and text embedding C.  After that, we use one of the interpolation results to generate a video. To evaluate the effect of video interpolation, we set another prompt D which describes the generated video to generate a video to compare with the interpolated video.}
    \label{fig:mixture_of_text_embed}
\end{figure*}

In summary, our main contributions are as follows:
\begin{itemize}
    \item We demonstrate that selecting the correct text embedding can effectively guide a video generation model to produce the desired video.
    \item We propose a simple yet effective algorithm to find the optimal text embedding through the use of perpendicular foot embeddings and cosine similarity.
\end{itemize}

\paragraph{Roadmap.} Our paper is organized as follows: In Section~\ref{sec:related_work}, we review related literature. Section~\ref{sec:main_results} introduces our main algorithm for finding the optimal interpolation embedding. Section~\ref{sec:experiments} presents the experiment result of this work. Section~\ref{sec:preliminary} presents the theoretical analysis, including the preliminary of our notations, key concepts of our video algorithm, model formulation, and our definition of optimal interpolation embedding finder. 
In Section~\ref{sec:conclusion}, we conclude our paper.
\section{Related Work} \label{sec:related_work}
\paragraph{Text-to-Video Generation.}
Text-to-video generation \cite{sph+22, vjp22, brl+23}, as a form of conditional video generation, focuses on the synthesis of high-quality videos using text descriptions as conditioning inputs. Most recent works on video generation jointly synthesize multiple frames based on diffusion models \cite{ssk+20,hja20,lzw+24,ssz+24_dit,hwsl24,hwl+24}. Diffusion models implement an iterative refinement process by learning to gradually denoise a sample from a normal distribution, which has been successfully applied to high-quality text-to-video generation. In terms of training strategies, one of the existing approaches uses pre-trained text-to-image models and inserts temporal modules \cite{gnl+23, azy+23}, such as temporal convolutions and temporal attention mechanisms into the pre-trained models to build up correlations between frames in the video \cite{sph+22, gwz+23, gyr+23}. PYoCo \cite{gnl+23} proposed a noise prior approach and leveraged a pre-trained eDiff-I \cite{bnh+22} as initialization. Conversely, other works \cite{brl+23,zwy+22} build upon Stable Diffusion \cite{rbl+22} owing to the accessibility of pre-trained models. This approach aims to leverage the benefits of large-scale pre-trained text-to-image models to accelerate convergence. However, it may lead to unsatisfactory results due to the potential distribution gap between images and videos. Other approaches are training the entire model from scratch on both image and video datasets \cite{hcs+22}. Although this method can yield high-quality results, it demands tremendous computational resources.

\paragraph{Enrich Prompt Space.}
In the context of conditional tasks, such as text-to-image and text-to-video models, prompts worked as conditions can have a significant influence on the performance of the models. For text-conditioned tasks, refining the user-provided natural provided natural language prompts into keyword-enriched prompts has gained increasing attention. Several recent works have explored the prompt space by the use of prompt learning, such as CoCoOp \cite{zyll22}, which uses conditional prompts to improve the model’s generalization capabilities. AutoPrompt \cite{srl+20} explores tokens with the most significant gradient changes in the label likelihood to automate the prompt generation process. Fusedream~\cite{lgw+21} manipulates the CLIP~\cite{rkh+21} latent space by using GAN~\cite{gpm+14} optimization to enrich the prompt space.  Specialist Diffusion \cite{ltw+23} augments the prompts to define the same image with multiple captions that convey the same meaning to improve the generalization of the image generation network. Another work \cite{lzy+23} proposes to generate random sentences, including source and target domain, in order to calculate a mean difference that will serve as a direction while editing. The iEdit \cite{bgb+23} generates target prompts by changing words in the input caption in order to retrieve pseudo-target images and guide the model. 
The TokenCompose~\cite{wsd+24} and OmniControlNet~\cite{wxz+24} control the image generation in the token-level space. 
Compared to the prior works, our work takes a different approach by exploring whether we can obtain a powerful text embedding capable of guiding the video generation model through interpolation within the text embedding space.

\begin{algorithm}[!ht]\caption{Find Optimal Interpolation}\label{alg:find_optimal_interpolation}
\begin{algorithmic}[1]
\State {\bf datastructure} \textsc{OptimalInterpFinder} 
\State {\bf members}
\State \hspace{4mm} $n \in \mathbb{N}$: the length of input sequence.
\State \hspace{4mm} $n_{\mathrm{ids}} \in \mathbb{N}$: the ids length of input sequence.
\State \hspace{4mm} $d \in \R$: the hidden dimension.
\State \hspace{4mm} $E_{t_a}, E_{t_b}, E_{t_c} \in \R^{n \times d}$: the text embedding.
\State \hspace{4mm} $\phi_{\mathrm{cos}}(X, Y)$: the cosine similarity calculator. \Comment{Definition~\ref{def:cosine_similarity_calculator}}
\State {\bf end members}
\State 
\Procedure{OptimalFinder}{$E_{t_a}, E_{t_b}, E_{t_c} \in \R^{n \times d}$,$n_{a_{\mathrm{ids}}}, n_{b_{\mathrm{ids}}}, n_{c_{\mathrm{ids}}} \in \mathbb{N}$} 
\State {\color{blue}/* 
Calculate the max ids length. */}
\State $n_{\mathrm{ids}} \gets \max \{ n_{a_{\mathrm{ids}}}, n_{b_{\mathrm{ids}}}, n_{c_{\mathrm{ids}}} \}$
\State {\color{blue}/* 
Truncated text embeddings. */} \label{line:truncated_text_embedding}
\State $E_{a_{\mathrm{truc}}} \in \R^{n_{\mathrm{ids}} \times d} \gets E_{t_a}[: n_{\mathrm{ids}}, :]$
\State $E_{b_{\mathrm{truc}}} \in \R^{n_{\mathrm{ids}} \times d} \gets E_{t_b}[: n_{\mathrm{ids}}, :]$
\State $E_{c_{\mathrm{truc}}} \in \R^{n_{\mathrm{ids}} \times d} \gets E_{t_c}[: n_{\mathrm{ids}}, :]$
\State {\color{blue}/* 
Calculate cosine similarity, Algorithm~\ref{alg:cosine_similarity_calculator}. */} \label{line:separate_cosine_similarity}
\State $L_{\mathrm{CosTruc}} \gets$ \textsc{CosineSim}$(E_{a_{\mathrm{truc}}}, E_{b_{\mathrm{truc}}}, E_{c_{\mathrm{truc}}})$
\State $L_{\mathrm{CosFull}} \gets$ \textsc{CosineSim}$(E_{t_a},E_{t_b},E_{t_c})$
\State {\color{blue}/* 
Add ConsineTruc and CosineFull. */}
\State $L_{\mathrm{CosAdd}} \gets [~]$
\For{$i = 1 \to k$}
    \State $L_{\mathrm{CosAdd}}[i] \gets L_{\mathrm{CosTruc}}[i] + L_{\mathrm{CosFull}}[i]$
\EndFor  
\State {\color{blue}/* 
Find the optimal interpolation index. */} \label{line:find_optimal_index}
\State $i_{\mathrm{opt}} \gets \mathrm{maxindex}(L_{\mathrm{CosAdd}})$
\State {\color{blue}/* 
Calculate optimal interpolation embedding. */}
\State $E_{\mathrm{opt}} \gets \frac{i_{\mathrm{opt}}}{k} \cdot E_{t_c} + \frac{k-i_{\mathrm{opt}}}{k} \cdot E_{t_b}$
\State Return $E_{\mathrm{opt}}$
\EndProcedure
\end{algorithmic}
\end{algorithm}

\section{Our Methods} \label{sec:main_results}

Section~\ref{sub:problem_formulation} introduces the problem formulation.
In Section~\ref{sec:main_results_optimal_interpolation_finder}, we present our algorithm for finding the optimal interpolation embedding.

\subsection{Problem Formulation} \label{sub:problem_formulation}

In this section, we introduce the formal definition for finding the optimal interpolation embedding as follows:
\begin{definition}[Finding Optimal Interpolation Embedding Problem] \label{def:find_optimal_interpolation_embedding_problem}
Let $P_a, P_b, P_c$ denote three text prompts. 
Our goal is to generate a video that contains features mentioned in $P_a$ and $P_b$, and $P_c$ is a text description of the feature combination of $P_a$ and $P_b$.  
Let $E_{t_a}, E_{t_b}, E_{t_c} \in \R^{n \times d}$ denote the text embedding of $P_a, P_b, P_c$.  
Let $f_\theta(E_t, z)$ be defined in Definition~\ref{def:text_to_video_generation_model}. 
We define the ``Finding optimal interpolation embedding'' problem as: According to $E_{t_a}, E_{t_b}, E_{t_c}$, find the optimal interpolation embedding $E_{\mathrm{opt}}$ that can make the text-to-video generation model $f_\theta(E_{\mathrm{opt}}, z)$ generate video contains features mentioned in $P_a$ and $P_b$. 
\end{definition}

We would like to refer the readers to Figure~\ref{fig:two_prompts_example} (a) as an example of Definition~\ref{def:find_optimal_interpolation_embedding_problem}. In Figure~\ref{fig:two_prompts_example} (a), we set prompt $P_a$ to \textit{``The tiger, moves gracefully through the forest, its fur flowing in the breeze.''} and prompt $P_b$ to:\textit{``The zebra, moves gracefully through the forest, its fur flowing in the breeze.''}. Our goal is to generate a video that contains both features of ``tiger'' and ``zebra'', where we set prompt $P_c$ to \textit{``The tiger, with black and white stripes like zebra, moves gracefully through the forest, its fur flowing in the breeze.''}, to describe the mixture features of tiger and zebra. However, the text-to-video model fails to generate the expected video. Therefore, it is essential to find the optimal interpolation embedding $E_{\mathrm{opt}}$ to make the model generate the expected video. In Figure~\ref{fig:two_prompts_example} (a), the $E_{\mathrm{opt}}$ is the $14$-th interpolation embedding of $E_{t_a}$ and $E_{t_b}$.

\begin{algorithm}[!ht]\caption{Calculate Cosine Similarity}\label{alg:cosine_similarity_calculator}
\begin{algorithmic}[1]
\State {\bf datastructure} \textsc{CosineSimilarityCalculator} 
\State {\bf members}
\State \hspace{4mm} $n \in \mathbb{N}$: the length of input sequence.
\State \hspace{4mm} $d \in \mathbb{N}$: the hidden dimension.
\State \hspace{4mm} $E_{t_a}, E_{t_b}, E_{t_c} \in \R^{n \times d}$: the text embedding.
\State \hspace{4mm} $\phi_{\mathrm{cos}}(X, Y)$: the cosine similarity calculator. \Comment{Definition~\ref{def:cosine_similarity_calculator}}
\State {\bf end members}
\State 
\Procedure{PerpendicularFoot}{$E_{t_a}, E_{t_b}, E_{t_c} \in \R^{n \times d}$} \label{line:perpendicular_foot}
\State {\color{blue}/* 
Find perpendicular foot of $E_{t_c}$ on $E_{t_b} - E_{t_b}$. */}
\State $E_{ac} \gets E_{t_c} - E_{t_a}$
\State $E_{ab} \gets E_{t_b} - E_{t_a}$
\State {\color{blue}/* 
Calculate the projection length. */}
\State $l_{\mathrm{proj}} \gets \langle E_{ab}, E_{ac} \rangle / \langle E_{ab}, E_{ab} \rangle$
\State {\color{blue}/* 
Calculate the projection vector. */}
\State $E_{\mathrm{proj}} \gets l_{\mathrm{proj}} \cdot E_{ab}$
\State {\color{blue}/* 
Calculate the perpendicular foot. */}
\State $E_{\mathrm{foot}} \gets E_{t_a} + E_{\mathrm{proj}}$
\State Return $E_{\mathrm{foot}}$
\EndProcedure
\State 
\Procedure{CosineSim}{$E_{t_a}, E_{t_b}, E_{t_c} \in \R^{n \times d}$} \label{line:cosine_similarity}
\State {\color{blue}/* 
Calculate perpendicular foot. */}
\State $E_{\mathrm{foot}} \gets$ \textsc{PerpendicularFoot}$(E_{t_a}, E_{t_b}, E_{t_c})$
\State {\color{blue}/* 
Init cosine similarity list. */}
\State $L_{\mathrm{CosSim}} \gets [~]$
\For{$i = 1 \to k$}
    \State {\color{blue} /* Compute interpolation embedding. */}
    \State $E_{\mathrm{interp}} \gets \frac{i}{k} \cdot E_{t_1} + \frac{k-i}{k} \cdot E_{t_2}$
    \State {\color{blue}/* Calculate and store cosine similarity. */}
    \State $L_{\mathrm{CosSim}}[i] \gets \phi_{\mathrm{cos}}(E_{\mathrm{interp}}, E_{\mathrm{foot}})$ 
\EndFor  
\State Return $L_{\mathrm{CosSim}}$
\EndProcedure
\end{algorithmic}
\end{algorithm}

\subsection{Optimal Interpolation Embedding Finder} \label{sec:main_results_optimal_interpolation_finder}

In this section, we introduce our main algorithm (Algorithm~\ref{alg:video_interpolation} and Algorithm~\ref{alg:find_optimal_interpolation}), which is also depicted in Fig.~\ref{fig:mixture_of_text_embed}. The algorithm is designed to identify the optimal interpolation embedding (as defined in Definition~\ref{def:find_optimal_interpolation_embedding_problem}) and generate the corresponding video.
The algorithm consists of three key steps: 
\begin{enumerate}
    \item Compute the perpendicular foot embedding (Line~\ref{line:perpendicular_foot} in Algorithm~\ref{alg:cosine_similarity_calculator}).
    \item Calculate the cosine similarity between the interpolation embeddings and the perpendicular foot embedding (Line~\ref{line:cosine_similarity} in Algorithm~\ref{alg:cosine_similarity_calculator}).
    \item Select the optimal interpolation embedding based on the cosine similarity results (Algorithm~\ref{alg:find_optimal_interpolation}).
\end{enumerate}
We will now provide a detailed explanation of each part of the algorithm and the underlying intuitions.

\paragraph{Perpendicular Foot Embedding.}
As outlined in the problem definition (Definition~\ref{def:find_optimal_interpolation_embedding_problem}), our objective is to identify the optimal interpolation embedding that allows the text-to-video generation model to generate a video containing the features described in $P_a$ and $P_b$. The combination of these features is represented by $P_c$, which typically does not lead to the desired video output. Consequently, we seek an interpolation embedding of $E_{t_a}$ and $E_{t_b}$ guided by $E_{t_c}$. The first step involves finding the perpendicular foot of $E_{t_c}$ onto the vector $E_{t_b} - E_{t_a}$, also known as the projection of $E_{t_c}$. This perpendicular foot embedding, denoted as $E_{\mathrm{foot}}$, is not the optimal embedding in itself, as the information within $E_{t_c}$ alone does not enable the generation of the expected video. However, $E_{\mathrm{foot}}$ serves as a useful anchor, guiding us toward the optimal interpolation embedding. Further details of this approach will be discussed in the subsequent paragraph.

\paragraph{Cosine Similarity and Optimal Interpolation Embedding.}
To assess the similarity of each interpolation embedding to the anchor perpendicular foot embedding $E_{\mathrm{foot}}$, we employ the straightforward yet effective metric of cosine similarity (Definition~\ref{def:cosine_similarity_calculator}). It is important to note that the input text prompts are padded to a fixed maximum length, $n = 266$, before being encoded by the T5 model. However, in real-world scenarios, the actual length of text prompts is typically much shorter than $n = 266$, which results in a substantial number of padding embeddings being appended to the original text prompt. The inclusion or exclusion of these padding embeddings can lead to significant differences in the perpendicular foot embedding, as their presence introduces a shift in the distribution of the text embeddings. To account for this, we treat text embeddings with and without padding separately. Specifically, we define ``full text embeddings'' $E_{a_t}, E_{b_t}, E_{c_t} \in \R^{n \times d}$ to represent the embeddings that include padding, and ``truncated text embeddings'' $E_{a_{\mathrm{truc}}}, E_{b_{\mathrm{truc}}}, E_{c_{\mathrm{truc}}} \in \R^{n_{\mathrm{ids}} \times d}$ to represent the embeddings without padding (Line~\ref{line:truncated_text_embedding} in Algorithm~\ref{alg:find_optimal_interpolation}). The full-text embeddings capture global information, whereas the truncated text embeddings focus on local information. We compute the perpendicular foot and cosine similarity separately for both types of text embeddings (Line~\ref{line:separate_cosine_similarity}) and then combine the results by summing the cosine similarities from the full and truncated embeddings. Finally, we select the optimal interpolation embedding based on the aggregated cosine similarity scores (Line~\ref{line:find_optimal_index}).

\section{Experiments} \label{sec:experiments}

\begin{figure}[!ht]
    \centering
    \subfloat[]{\includegraphics[width=0.4\linewidth]{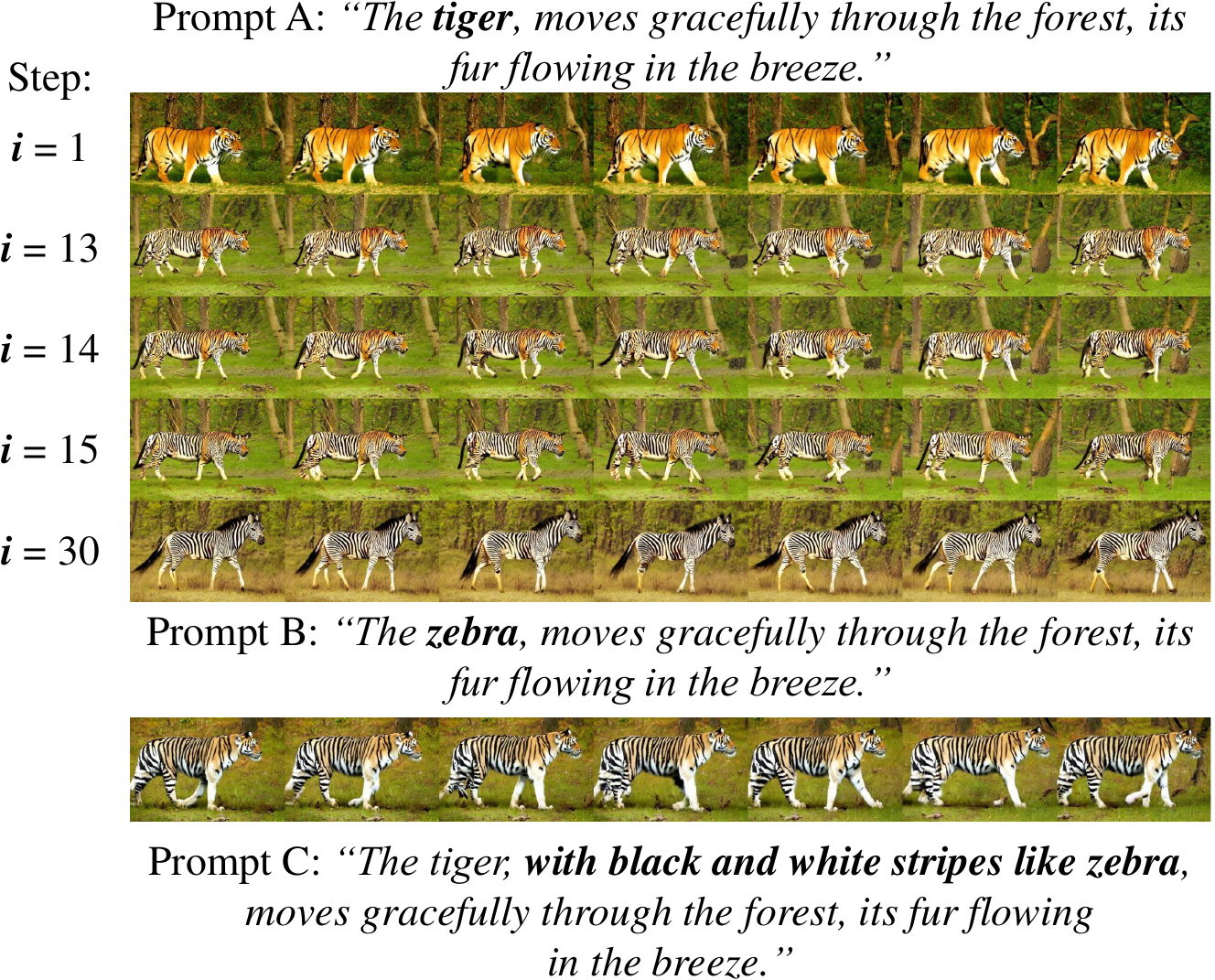}} \\
    \subfloat[]{\includegraphics[width=0.4\linewidth]{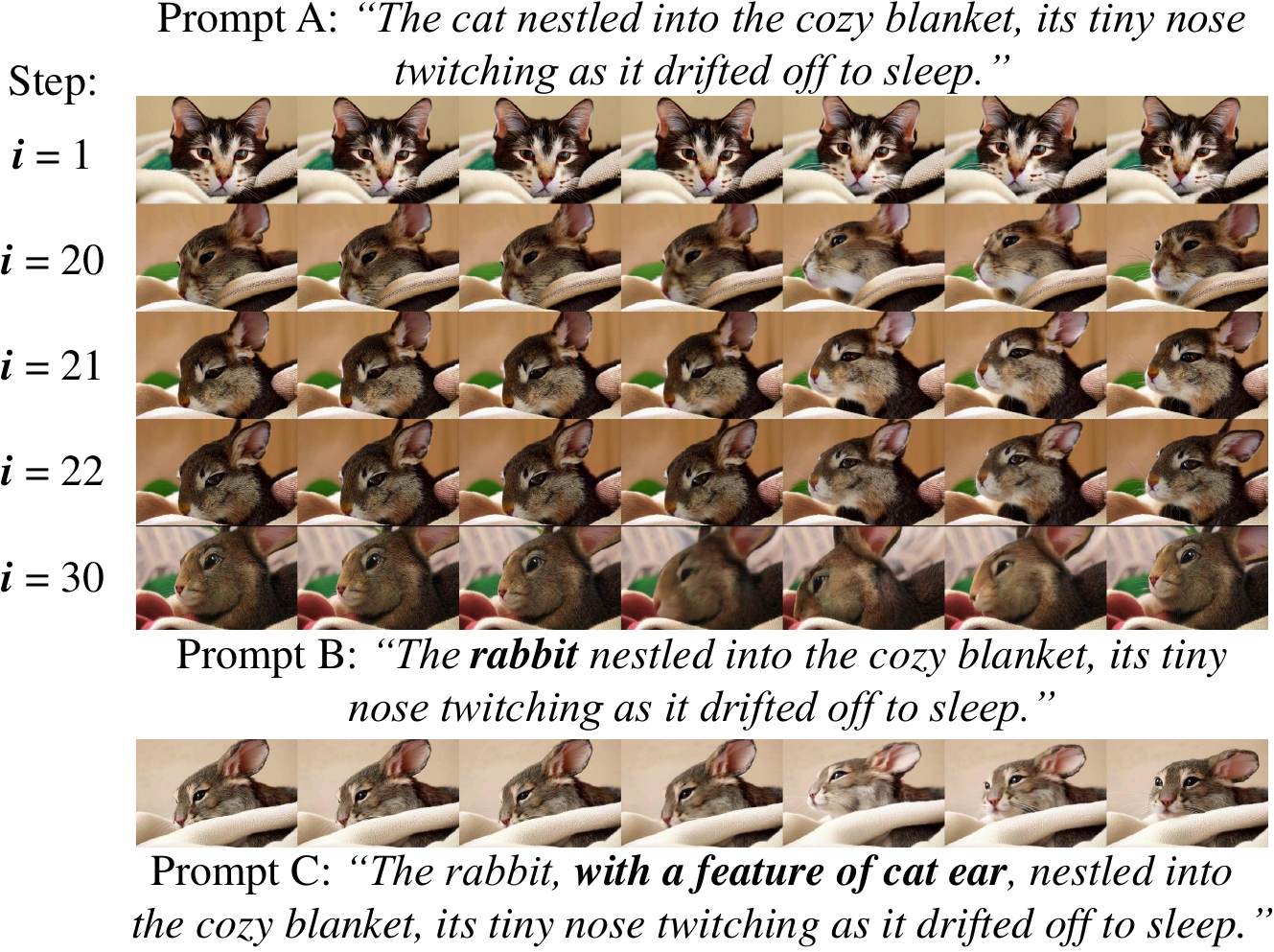}} \\
    \subfloat[]{\includegraphics[width=0.4\linewidth]{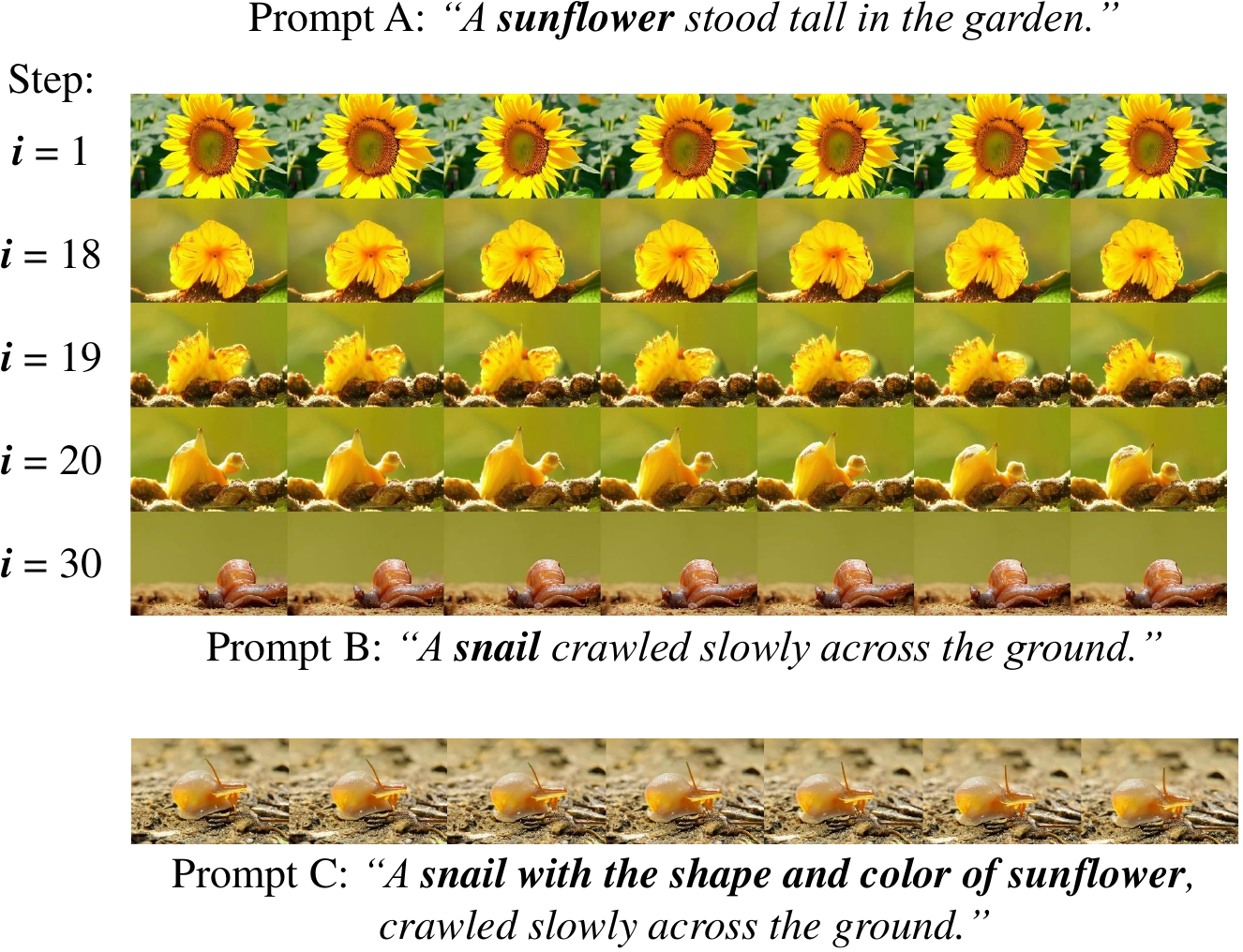}} 
    \caption{
    \textbf{Qualitative results of mixture of two features.}
    Figure (a): Mixture of [``Tiger''] and [``Zebra'']; Figure (b): Mixture of [``Cat''] and [``Rabbit'']; Figure (c): Mixture of [``Sunflower''] and [``Snail'']. Our objective is to mix the features described in Prompt A and Prompt B with the guidance of Prompt C. We set the total number of interpolation steps to $30$. Using Algorithm~\ref{alg:find_optimal_interpolation}, we identify the optimal embedding and generate the corresponding video. The video generated directly from Prompt C does not exhibit the desired mixed features from Prompts A and B.
}
\label{fig:two_prompts_example}
\end{figure}

\begin{figure}[!ht]
    \centering
    \subfloat[]{\includegraphics[width=0.4\linewidth]{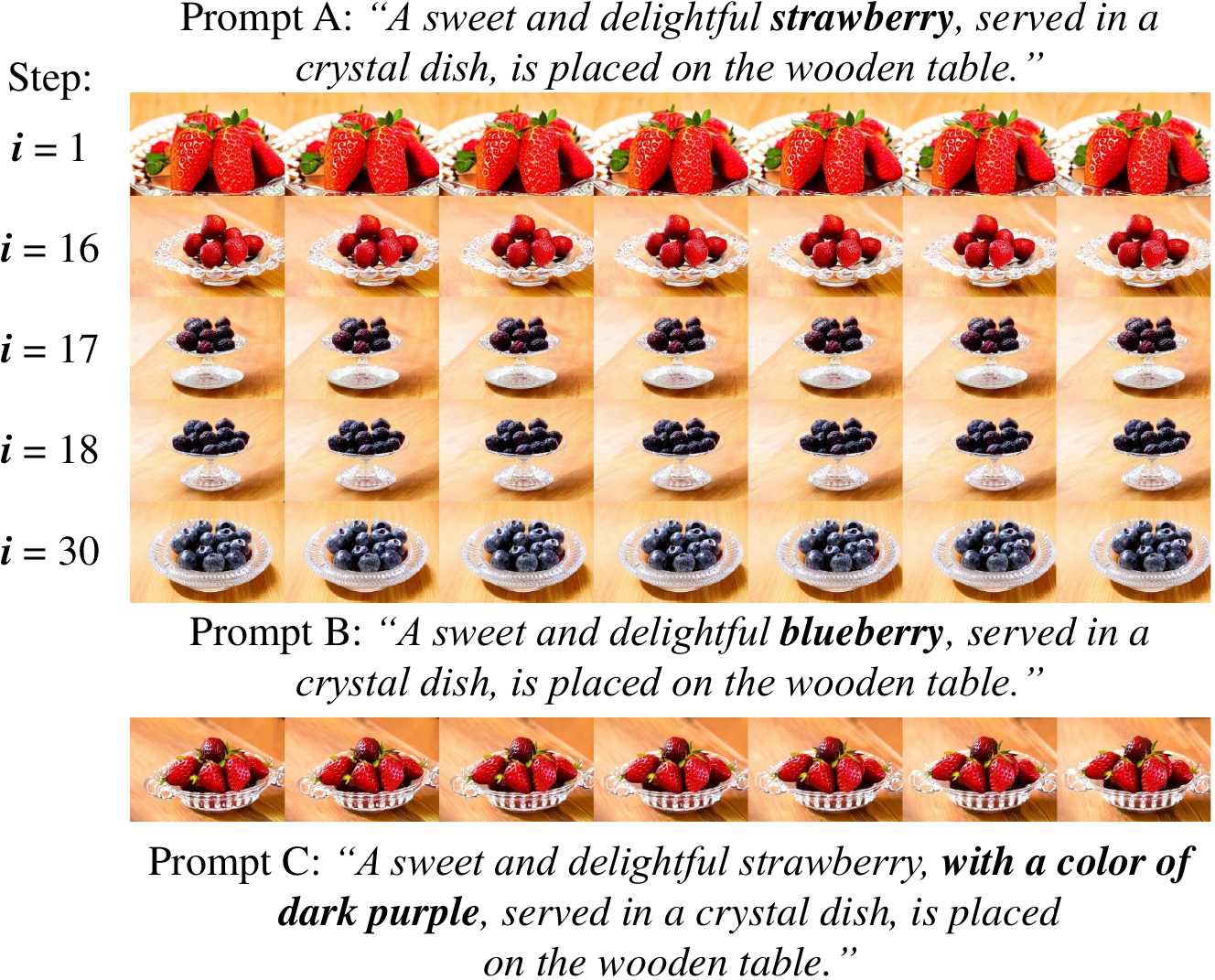}} \\
    \subfloat[]{\includegraphics[width=0.4\linewidth]{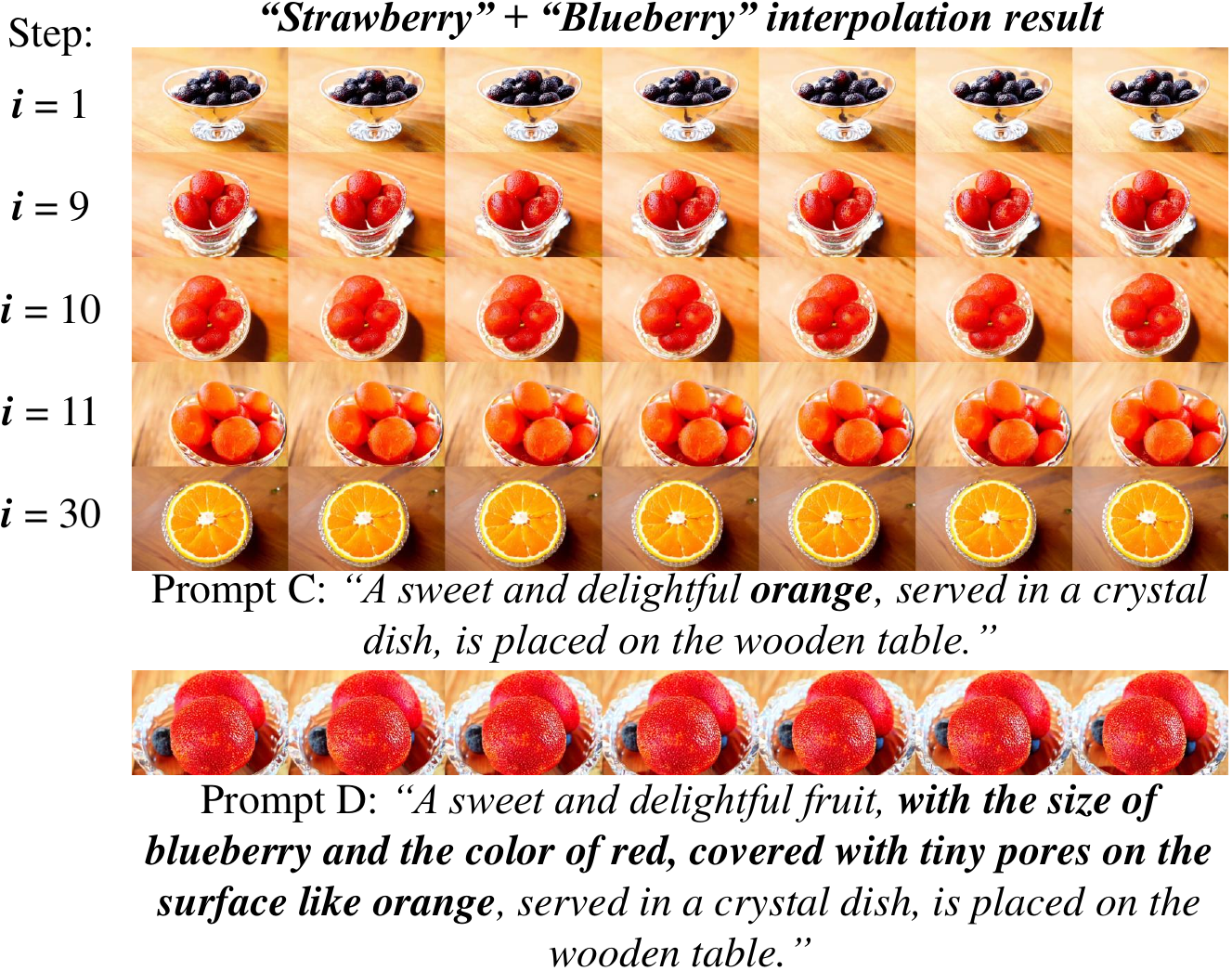}} \\
    \subfloat[]{\includegraphics[width=0.4\linewidth]{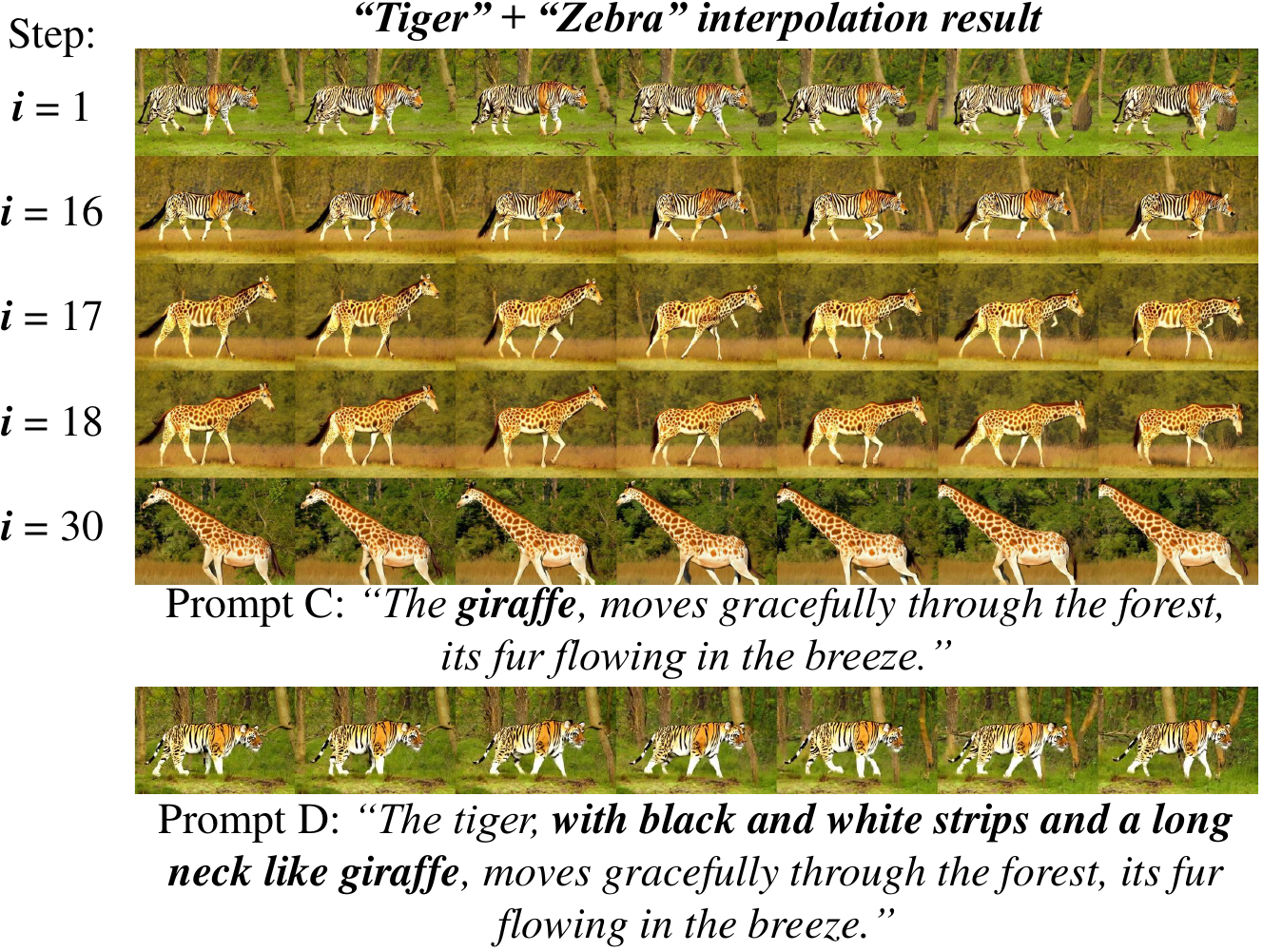}}
    \caption{
    \textbf{Extending from two prompts mixture to three prompts mixture.}
    Figure (a): Mixture of [``Strawberry''] and [``Blueberry'']. 
    Figure (b): Mixture of [``Strawberry'' + ``Blueberry''] and [``Orange'']. We further apply Algorithm~\ref{alg:find_optimal_interpolation} to that optimal embedding and Prompt C embedding, with the guidance of Prompt D. We identify $10$-th interpolation embedding as the optimal embedding of [``Strawberry'' + ``Blueberry''] and [``Orange''] and generate the corresponding video. The video generated directly from Prompt D does not exhibit the desired mixed features.
    Figure (c): Mixture of [``Tiger'' + ``Zebra''] and [``Giraffe'']. We present another example of a mixture of three prompts to demonstrate the effectiveness of our algorithm. 
    }
    \label{fig:three_prompts_example}
\end{figure}

In this section, we will first present our qualitative evaluation results of the proposed method in Section~\ref{sub:qualitative_evaluation}. Then, in Section~\ref{sub:quantitative_evluation}, we present our quantitative evaluation.

\subsection{Qualitative Evaluation} \label{sub:qualitative_evaluation}

Our experiments are conducted on the CogVideoX-2B~\cite{ytz+24}. We investigate the performance of our optimal embedding finder algorithm in the following two scenarios:
\paragraph{Mixture of Features from Two Initial Prompts.} 
As outlined in Definition~\ref{def:find_optimal_interpolation_embedding_problem}, we conduct experiments where the goal is to generate a mixture of features described in two text prompts, $P_a$ and $P_b$. We construct a third prompt, $P_c$, to specify the desired features. Following Algorithm~\ref{alg:find_optimal_interpolation}, we identify the optimal text embedding and use it for the text-to-video generation with our base model.
We conducted experiments using a variety of text prompts. In Figure~\ref{fig:two_prompts_example} (a) and Figure~\ref{fig:two_prompts_example} (b), we investigate the mixture of features from different animals, demonstrating that a video containing the mixture of tiger and zebra features, as well as the mixture of rabbit and cat features can only be generated using the optimal embedding, not directly from the text prompts. Similarly, in Figure~\ref{fig:three_prompts_example} (a), we show that a video combining features from strawberry and blueberry can only be generated through the optimal embedding, highlighting a similar phenomenon in the context of fruits. Furthermore, in Figure~\ref{fig:two_prompts_example} (c), we observe the same behavior in the domain of plants, specifically with the combination of rose and cactus features.

\paragraph{Mixture of Features from Three Initial Prompts.} 
We will investigate further to see if we can add one additional feature to the video. The high-level approach involves applying our optimal interpolation embedding algorithm (Algorithm~\ref{alg:find_optimal_interpolation}) twice. Given three text embeddings, $E_{t_a}$, $E_{t_b}$, and $E_{t_c}$, where we aim to blend their features in the generated video, we first apply Algorithm~\ref{alg:find_optimal_interpolation} to $E_{t_a}$ and $E_{t_b}$ to obtain the optimal interpolation embedding $E_{\mathrm{opt}_{ab}}$. Next, we apply Algorithm~\ref{alg:find_optimal_interpolation} again, this time on $E_{\mathrm{opt}_{ab}}$ and $E_{t_c}$, resulting in the final optimal interpolation embedding $E_{\mathrm{opt}}$. We then use this embedding in our base model to generate the desired video.
Following the method described above, we mix the giraffe feature with the tiger and zebra features, as shown in Figure~\ref{fig:three_prompts_example} (c). Only by using the optimal embedding identified by our algorithm can we enable the video generation model to produce the desired video. Directly generating the video from the text prompt results in the loss of at least one of the intended features. 
A similar phenomenon is observed in the case of mixing strawberry, blueberry, and orange features, as shown in Figure~\ref{fig:three_prompts_example} (b). The video generated directly from the text prompt always renders each object separately, failing to combine the features into a single coherent entity.

\subsection{Quantitative Evaluation} \label{sub:quantitative_evluation}

In the previous sections, we presented the qualitative results of our method. In this section, we provide a quantitative evaluation. Following the settings used by VBench \cite{hhy+24}, we evaluate the ``subject consistency'' and ``aesthetic quality''
of the generated videos. The results for mixtures of two prompts are presented in Table~\ref{tab:metrics_of_mixture_of_2_prompts}. 
The average Subject Consistency (SC) of the videos generated using optimal embeddings is $0.9787$, higher than the SC of the videos generated directly from the prompt description, which is $0.9748$. As for Aesthetic Quality (AQ), the videos generated by optimal embeddings achieve a score of $0.5163$, which is lower than the $0.5519$ obtained by the videos generated from prompts.

Our method generates videos with higher ``subject consistency'' than those produced directly from the prompt description (i.e., Prompt C). This suggests that the optimal embedding enables the video generation model to better combine the desired features while maintaining coherence in the generated videos. 

Another observation is that the ``aesthetic quality'' of videos generated using the optimal embeddings is lower than that of videos generated directly from text prompts. This indicates that our method better blends the desired features. The aesthetic model is trained on real-world videos, which leads to a bias toward scoring videos that resemble those found in real-world datasets. However, in our setting, we aim to expand the prompt space of the video generation model, enabling it to generate videos that are rarely observed in real-world datasets. Therefore, a lower aesthetic score reflects that our method aligns better with this goal.

\begin{table}[!ht] 
\centering
\caption{
\textbf{Quantitative Evaluations.} We evaluate the videos generated using our optimal embeddings and those generated directly from the text prompt with two metrics: ``Subject Consistency'' (SC) and ``Aesthetic Quality'' (AQ). Let $f$ represent the optimal embedding finding algorithm, and $g$ denote the video generation model. A higher SC score indicates better coherence in the video, which corresponds to higher quality. Conversely, a lower AQ score suggests that the video is rarely observed in the real world, implying that it aligns more closely with the mixture of desired features.
}
\label{tab:metrics_of_mixture_of_2_prompts}
\begin{center}
    \begin{tabular}{l|c|c}
\toprule
\textbf{Prompts} & \textbf{SC $(\uparrow)$}  & \textbf{AQ $(\downarrow)$} \\
\midrule
$g(f($Tiger, Zebra$))$ & { \bf 0.9751} & {  0.5472 } \\
$g($Tiger, Zebra $)$ & 0.9739 & {\bf 0.5424} \\
\hline
$g(f($Cat, Rabbit$))$ & { \bf 0.9688 } & {\bf 0.4649} \\
$g($Cat, Rabbit$)$ & 0.9608 & { 0.4821 } \\
\hline
$g(f($Strawberry, Blueberry$))$ & { \bf 0.9920 } & {\bf 0.5957} \\
$g($Strawberry, Blueberry$)$ & 0.9910 & {  0.7256 } \\
\hline
$g(f($Sunflower, Snail$))$ & { \bf 0.9790 } & {\bf 0.4573 } \\
$g($Sunflower, Snail$)$ & { 0.9734} & { 0.4575 } \\
\midrule
avg. $g(f(\mathrm{PromptA, PromptB}))$ & \textbf{0.9787} & \textbf{0.5163} \\
avg. $g(\mathrm{PromptC})$ & 0.9748 & 0.5519 \\
\bottomrule
\end{tabular}
\end{center}
\end{table}

\section{Theoretical Analysis} \label{sec:preliminary}

We first introduce some basic notations in Section~\ref{sub:notations}. 
In Section~\ref{sub:key_concepts}, we introduce formal definitions of key concepts. 
Then, we introduce the formal definition of each module in the CogvideoX model in Section~\ref{sub:model_formulation}. 
In Section~\ref{sec:main_result:word_embeddings_are_insufficient}, we provide our rigorous theoretical analysis showing that word embedding space is not sufficient to represent all videos. 

\subsection{Notations} \label{sub:notations}

For any $k \in \mathbb{N}$, let $[k]$ denote the set $\{1,2,\cdots, k\}$. For any $n \in \mathbb{N}$, let $n$ denote the length of the input sequence of a model. For any $d \in \mathbb{N}$, let $d$ denote the hidden dimension. For any $c \in \mathbb{N}$, let $c$ denote the channel of a video. For any $n_f \in \mathbb{N}$, we use $n_f$ to denote the video frames. For any $h \in \mathbb{N}$ and $w \in \mathbb{N}$, we use $h$ and $w$ to denote the height and width of a video.
For two vectors $x \in \R^n$ and $y \in \R^n$, we use $\langle x, y \rangle$ to denote the inner product between $x,y$. Namely, $\langle x, y \rangle = \sum_{i=1}^n x_i y_i$.
For a vector $x \in \R^n$, we use $\|x\|_2$ to denote the $\ell_2$ norm of the vector $x$, i.e., $\|x\|_2 := \sqrt{\sum_{i=1}^n x_i^2}$.
Let ${\cal D}$ represent a given distribution. The notation $x \sim {\cal D}$ indicates that $x$ is a random variable drawn from the distribution ${\cal D}$.

\begin{algorithm}[!ht]\caption{ Video Interpolation
}\label{alg:video_interpolation}
\begin{algorithmic}[1]
\State {\bf datastructure} \textsc{Interpolation}  
\State {\bf members}
\State \hspace{4mm} $n \in \mathbb{N}$: the length of input sequence
\State \hspace{4mm} $n_f \in \mathbb{N}$: the number of frames
\State \hspace{4mm} $h \in \mathbb{N}$: the hight of video
\State \hspace{4mm} $w \in \mathbb{N}$: the width of video
\State \hspace{4mm} $d \in \mathbb{N}$: the hidden dimension
\State \hspace{4mm} $c \in \mathbb{N}$: the channel of video
\State \hspace{4mm} $k \in \mathbb{N}$: the interpolation steps
\State \hspace{4mm} $T \in \mathbb{N}$: the number of inference step
\State \hspace{4mm} $E_{\mathrm{opt}} \in \R^{n \times d}$: the optimal interpolation embedding
\State \hspace{4mm} $E_{t} \in \R^{n \times d}$: the text embedding
\State \hspace{4mm} $f_\theta (z, E_t, t): \R^{n_f \times h \times w \times c} \times \R^{n \times d} \times \mathbb{N} \to  \R^{n_f \times h \times w \times c}$: the text-to-video generation model
\State {\bf end members}
\State 
\Procedure{Interpolation}{$E_{t_a}, E_{t_b}, E_{t_c} \in \R^{n \times d}, k \in \mathbb{N}, T \in \mathbb{N}$} 
\State {\color{blue} /* Find optimal interpolation embedding, Algorithm~\ref{alg:find_optimal_interpolation}. */}
\State $E_{\mathrm{opt}} \gets$ \textsc{OptimalFinder}$(E_{t_a}, E_{t_b}, E_{t_c})$
\State {\color{blue}/* Prepare initial latents.*/}
    \State $z \sim \mathbb{N}(0, I) \in \R^{n_f \times h \times w \times c}$ 
    \For{$t = T \to 0$}
    \State {\color{blue} /* One denoise step. */.}
    \State $z \gets f_\theta(z, E_{\mathrm{opt}}, t)$ 
    \EndFor
    \State Return $z$ 
\EndProcedure
\end{algorithmic}
\end{algorithm}

\subsection{Key Concepts} \label{sub:key_concepts}

We will introduce some essential concepts in this section. We begin with introducing the formal definition of linear interpolation.
\begin{definition}[Linear Interpolation] \label{def:linear_interpolation}
Let $x, y \in \R^d$ denote two vectors. 
Let $k \in \mathbb{N}$ denote the interpolation step.
For $i \in [k]$, we define the $i$-th interpolation result $z_i \in \R$ as follows:
\begin{align*}
    z_i := \frac{i}{k} \cdot x + \frac{k-i}{k} \cdot y
\end{align*}
\end{definition}

Next, we introduce another key concept used in our paper, the simple yet effective cosine similarity calculator.
\begin{definition}[Cosine Similarity Calculator] \label{def:cosine_similarity_calculator}
Let $X, Y \in \R^{n \times d}$ denote two matrices.
Let $X_i, Y_i \in \R^d$ denote $i$-th row of $X, Y$, respectively.
Then, we defined the cosine similarity calculator $\phi_{\mathrm{cos}}(X, Y): \R^{n \times d} \times \R^{n \times d} \to \R$ as follows
\begin{align*}
    \phi_{\mathrm{cos}}(X, Y) := \frac{1}{n} \sum_{i=1}^n \frac{\langle X_i, Y_i \rangle}{\| X_i \|_2 \| Y_i \|_2 }
\end{align*}
\end{definition}

Then, we introduce one crucial fact that we used later in this paper. 

\begin{fact}[Volume of a Ball in $d$-dimension Space] \label{fact:ball_volume_in_dimension_d_space}
The volume of a $\ell_2$-ball with radius $R$ in dimension $\R^d$ space is
\begin{align*}
    \frac{\pi^{d/2}}{(d/2)!} R^d
\end{align*}
\end{fact}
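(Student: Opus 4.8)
The plan is to pin down the unknown constant in the volume formula by evaluating a single auxiliary integral, the Gaussian integral over $\R^d$, in two different ways and matching the answers. By scaling, the volume of the radius-$R$ ball is $V_d(R) = \omega_d R^d$ for some dimensional constant $\omega_d$ (the volume of the unit ball), since dilating $\R^d$ by a factor $R$ multiplies $d$-dimensional volume by $R^d$. Thus the entire content of the statement is the identity $\omega_d = \pi^{d/2}/(d/2)!$, where $(d/2)!$ is read as $\Gamma(d/2+1)$, and everything reduces to computing that one constant.

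First I would compute $I := \int_{\R^d} e^{-\|x\|_2^2} \, \d x$ by Fubini, factoring it into a product of one-dimensional Gaussian integrals:
\begin{align*}
I = \prod_{i=1}^d \int_{-\infty}^{\infty} e^{-x_i^2} \, \d x_i = (\sqrt{\pi})^d = \pi^{d/2}.
\end{align*}
Next I would compute the same $I$ by integrating in radial ``shells.'' Writing the volume of the radius-$r$ ball as $\omega_d r^d$, the surface measure of the sphere of radius $r$ is its derivative $d\, \omega_d\, r^{d-1}$, so that
\begin{align*}
I = \int_0^\infty e^{-r^2} \cdot d\, \omega_d\, r^{d-1} \, \d r = d\, \omega_d \int_0^\infty r^{d-1} e^{-r^2} \, \d r.
\end{align*}

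Finally I would reduce the radial integral to the Gamma function via the substitution $t = r^2$, giving $\int_0^\infty r^{d-1} e^{-r^2}\, \d r = \tfrac12 \int_0^\infty t^{d/2-1} e^{-t}\, \d t = \tfrac12 \Gamma(d/2)$. Matching the two evaluations yields $\pi^{d/2} = d\,\omega_d \cdot \tfrac12 \Gamma(d/2) = \omega_d \, \Gamma(d/2+1)$, where the last step uses the functional equation $\tfrac{d}{2}\Gamma(d/2) = \Gamma(d/2+1)$. Hence $\omega_d = \pi^{d/2}/\Gamma(d/2+1) = \pi^{d/2}/(d/2)!$, and multiplying by $R^d$ gives the claimed formula. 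The main obstacle I anticipate is justifying the radial decomposition rigorously, that is, establishing that the surface measure of the radius-$r$ sphere is exactly $\tfrac{\d}{\d r}(\omega_d r^d)$ so that the shell integration is valid; this is a statement about the coarea formula / polar coordinates in $\R^d$ and is where care is needed, whereas the substitution and the Gamma-function identity are routine. A secondary point worth stating cleanly is the convention $(d/2)! := \Gamma(d/2+1)$ for odd $d$, which is what makes the formula meaningful in all dimensions.
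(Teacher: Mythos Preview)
Your proposal is a correct and standard derivation of the $d$-dimensional ball volume via the Gaussian integral trick, and the only delicate step you flag (the coarea / polar-coordinate justification) is indeed the right place to be careful.

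However, there is nothing to compare against: the paper does not prove this statement at all. It is introduced as a \emph{Fact} (``one crucial fact that we used later in this paper'') and then simply invoked in the proof of Theorem~\ref{thm:any_function_bound_in_d_dimension:restatement} to compute the volume of an $\epsilon$-ball around each $f(x)$. So you have supplied a full proof where the paper relies on the reader's background; your argument is the textbook one and would serve perfectly well as a supporting lemma if the paper wanted to be self-contained.
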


\subsection{Model Formulation}
\label{sub:model_formulation}

In this section, we will introduce the formal definition for the text-to-video generation video we use. 
We begin with introducing the formal definition of the attention layer as follows:
\begin{definition}[Attention Layer] \label{def:attention}
Let $X \in \mathbb{R}^{n \times d}$ denote the input matrix.
Let $W_K, W_Q, W_V \in \R^{d \times d}$ denote the weighted matrices. 
Let $Q = X W_Q \in \R^{n \times d}$ and $K = X W_K \in \R^{n \times d}$. 
Let attention matrix $A = Q K^\top$.
Let $D := \diag(A {\bf 1}_n) \in \mathbb{R}^{n \times n}$.
We define attention layer $\mathsf{Attn}$ as follows:
\begin{align*}
    \mathsf{Attn} (X) := & ~ D^{-1} A X W_V .
\end{align*}
\end{definition}

Then, we define the convolution layer as follows:
\begin{definition}[Convolution Layer] \label{def:conv_layer}
Let $h \in \mathbb{N}$ denote the height of the input and output feature map.
Let $w \in \mathbb{N}$ denote the width of the input and output feature map.
Let $c_{\rm in} \in \mathbb{N}$ denote the number of channels of the input feature map.
Let $c_{\rm out} \in \mathbb{N}$ denote the number of channels of the output feature map.
Let $X \in \R^{h \times w \times c_{\rm in}}$ represent the input feature map.
For $l \in [c_{\rm out}]$, we use $K^l \in \R^{3 \times 3 \times c_{\rm in}}$ to denote the $l$-th convolution kernel.
Let $p$ denote the padding of the convolution layer.
Let $s$ denote the stride of the convolution kernel.
Let $Y \in \R^{h \times w \times c_{\rm out}}$ represent the output feature map.
We define the convolution layer as follows:
We use $\phi_{\rm conv}(X, c_{\mathrm{in}}, c_{\mathrm{out}}, p, s): \R^{h \times w \times c_{\rm in}} \to \R^{h \times w \times c_{\rm out}}$ to represent the convolution operation. Let $Y = \phi_{\rm conv}(X, c_{\mathrm{in}}, c_{\mathrm{out}}, p, s)$. Then, for $i \in [h], j \in [w], l \in [c_{\rm out}]$, we have
\begin{align*}
    Y_{i,j,l} := \sum_{m=1}^3 \sum_{n=1}^3 \sum_{c = 1}^{c_{\rm in}} X_{i+m-1,j+n-1,c} \cdot K^l_{m,n,c}
\end{align*}

\end{definition}

We introduce the formal definition of linear projection layer as follows:
\begin{definition}[Linear Projection] \label{def:linear_projection}
Let $X \in \R^{n \times d_1}$ denote the input data matrix.
Let $W \in \R^{d_1 \times d_2}$ denote the weight matrix.
We define the linear projection $\phi_{\mathrm{linear}}: \R^{n \times d_1} \to \R^{n \times d_2}$ as follows:
\begin{align*}
    \phi_{\mathrm{linear}}(X) := X W
\end{align*}
\end{definition}

And we define the $3$D full attention layer as follows:
\begin{definition}[3D Attention] \label{def:3d_attention}
Let $\mathsf{Attn}(X)$ be defined as in Definition~\ref{def:attention}.
Let $\phi_{\mathrm{conv}}(X, c_{\mathrm{in}, \mathrm{out}, p, s})$ be defined in Definition~\ref{def:conv_layer}.
Let $\phi_{\mathrm{linear}}(X)$ be defined as in Definition~\ref{def:linear_projection}. 
We define the 3D attention $\phi_{\mathrm{3DAttn}}(E_t, E_v)$ containing three components: $\phi_{\mathrm{linear}}(X)$, $\mathsf{Attn}(X)$, $\phi_{\mathrm{conv}}(X, c_{\mathrm{in}}, c_{\mathrm{out}}, p, s)$. Its details are provided in Algorithm~\ref{alg:3d_attention}. 
\end{definition}

Finally, we provide the definition of the text-to-video generation model, which consists of a stack of multiple 3D attention layers, as introduced earlier.
\begin{definition}[Text-to-Video Generation Model] \label{def:text_to_video_generation_model}
Let $\phi_{\mathrm{3DAttn}}$ be defined as Definition~\ref{def:3d_attention}. 
Let $k_{\mathrm{3D}} \in \mathbb{N}$ denote the number of 3D attention layers in the text-to-video generation model.
Let $\theta$ denote the parameter in the text-to-video generation model.
Let $E_t \in \R^{n \times d}$ denote the text embedding.
Let $z \sim \mathbb{N}(0, I) \in \R^{n_f \times h \times w \times c}$ denote the initial random Gaussian noise. 
Then we defined the text-to-video generation model $f_\theta(E_t, z)$ as follows:
\begin{align*}
    f_\theta (E_t, z) := \underbrace{\phi_{\mathrm{3DAttn}} \circ \cdots \circ \phi_{\mathrm{3DAttn}}}_{k_{\mathrm{3D}}~\mathrm{layers}} (E_t, z). 
\end{align*}
\end{definition}

\subsection{Word Embedding Space being Insufficient to Represent for All Videos} \label{sec:main_result:word_embeddings_are_insufficient}

Since the text-to-video generation model only has a finite vocabulary size, it only has finite wording embedding space. 
However, the space for all videos is infinite. Thus, word embedding space is insufficient to represent all videos in video space. 
We formalize this phenomenon to a rigorous math problem and provide our findings in the following theorem.

\begin{theorem}[Word Embeddings being Insufficient to Represent for All Videos, formal version of Theorem~\ref{thm:any_function_bound_in_d_dimension:informal}] \label{thm:any_function_bound_in_d_dimension}
Let $n,d$ denote two integers, where $n$ denotes the maximum length of the sentence, and all videos are in $\R^d$ space. 
Let $V \in \mathbb{N}$ denote the vocabulary size. 
Let $\mathcal{U} = \{u_1, u_2, \cdots, u_V\}$ denote the word embedding space, where for $i \in [V]$, the word embedding $u_i \in \R^k$. 
Let $\delta_{\min} = \min_{i, j \in [V], i \neq j} \| u_i - u_j\|_2$ denote the minimum $\ell_2$ distance of two word embedding. 
Let $f : \R^{nk} \rightarrow \R^d$ denote the text-to-video generation model, which is also a mapping from sentence space (discrete space $\{u_1, \dots, u_V\}^n$) to video space $\R^d$. 
Let $M := \max_x \| f(x) \|_2, m := \min_x \| f(x) \|_2$.
Let $\epsilon = ((M^d - m^d) / V^n)^{1/d}$. 
Then, we can show that there exits a video $y \in \R^d$, satisfying $m \leq \| y \|_2 \leq M$, such that for any sentence $x \in \{u_1, u_2, \cdots, u_V\}^n$, we have
$
    \|f(x) - y \|_2 \geq \epsilon.
$
\end{theorem}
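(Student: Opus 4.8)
The plan is to exploit the mismatch between a finite set and an infinite one by a volume‑packing (pigeonhole) argument. The range of $f$ restricted to the discrete sentence space $\{u_1,\dots,u_V\}^n$ consists of at most $V^n$ points, and by definition of $M$ and $m$ every such point $f(x)$ lies in the closed annulus $\mathcal{A} := \{w \in \R^d : m \le \|w\|_2 \le M\}$. Suppose, for contradiction, that the conclusion fails: then for \emph{every} $y \in \mathcal{A}$ there is some sentence $x$ with $\|f(x) - y\|_2 < \epsilon$. Equivalently, the open balls $B(f(x),\epsilon)$ over all $x \in \{u_1,\dots,u_V\}^n$ cover $\mathcal{A}$. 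I would then compare volumes: the union of at most $V^n$ balls of radius $\epsilon$ has volume at most $V^n \cdot \frac{\pi^{d/2}}{(d/2)!}\,\epsilon^d$ by Fact~\ref{fact:ball_volume_in_dimension_d_space}, while the annulus $\mathcal{A}$ has volume $\frac{\pi^{d/2}}{(d/2)!}(M^d - m^d)$.

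First I would write down the covering inequality $\mathrm{vol}(\mathcal{A}) \le \sum_x \mathrm{vol}(B(f(x),\epsilon))$, which holds whenever the balls cover $\mathcal{A}$. Substituting the two volume formulas, the common factor $\pi^{d/2}/(d/2)!$ cancels, leaving $M^d - m^d \le V^n \epsilon^d$. Next I would plug in the chosen value $\epsilon = ((M^d - m^d)/V^n)^{1/d}$, so that $V^n \epsilon^d = M^d - m^d$ exactly; this makes the inequality an equality $M^d - m^d \le M^d - m^d$, which is not yet a contradiction. To get a strict contradiction I would use that a finite union of balls (each a closed or open Euclidean ball) cannot \emph{exactly} tile the annulus with no overlap and no slack — the balls are round and $\mathcal{A}$ is not a finite union of disjoint balls, so the covering is necessarily wasteful and the inequality $\mathrm{vol}(\mathcal{A}) < \sum_x \mathrm{vol}(B(f(x),\epsilon))$ is strict, giving $M^d - m^d < M^d - m^d$, a contradiction. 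Hence there exists $y \in \mathcal{A}$, i.e. with $m \le \|y\|_2 \le M$, that is at distance at least $\epsilon$ from every $f(x)$.

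The main obstacle is making the strictness airtight: the clean cancellation gives only a non‑strict inequality, so the argument needs an extra observation that the ball‑cover of the annulus cannot be perfectly efficient. The cleanest route is to note that a nontrivial annulus (or even a single ball) in $\R^d$ has positive‑measure overlaps whenever it is covered by finitely many balls of a fixed radius — or, even more simply, to observe that the balls $B(f(x),\epsilon)$ are centered at points of $\mathcal{A}$, so each ball sticks out past the sphere of radius $M$ (and past the sphere of radius $m$) by a set of positive volume, meaning $\sum_x \mathrm{vol}(B(f(x),\epsilon) \cap \mathcal{A}) < \sum_x \mathrm{vol}(B(f(x),\epsilon))$, which already forces $\mathrm{vol}(\mathcal{A}) < V^n \epsilon^d$ and the contradiction. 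Alternatively one can sidestep strictness entirely by defining $\epsilon$ with a slightly smaller constant, but since the theorem is stated with the sharp constant I would argue the strict inequality via the positive‑volume overflow of the boundary balls. The remaining steps — the two volume computations and the cancellation — are routine given Fact~\ref{fact:ball_volume_in_dimension_d_space}. Note that $k$ and $\delta_{\min}$ play no role in the argument; they appear in the statement only to fix the ambient setup.
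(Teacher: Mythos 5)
Your proof takes essentially the same route as the paper's: assume every $y$ in the annulus $m \le \|y\|_2 \le M$ is within $\epsilon$ of some $f(x)$, then compare the total volume of the at most $V^n$ balls of radius $\epsilon$ (via Fact~\ref{fact:ball_volume_in_dimension_d_space}) with the annulus volume $\frac{\pi^{d/2}}{(d/2)!}(M^d-m^d)$ — and you are in fact more careful than the paper's own write-up, which asserts the strict inequality $V^n \cdot \frac{\pi^{d/2}}{(d/2)!}(M^d-m^d)/V^n < \frac{\pi^{d/2}}{(d/2)!}(M^d-m^d)$ even though the two sides are literally equal, which is exactly the strictness gap you flag and repair. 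One small correction to your patch: it is not true that \emph{each} ball $B(f(x),\epsilon)$ overflows the annulus (a ball with $m+\epsilon \le \|f(x)\|_2 \le M-\epsilon$ lies entirely inside it), but it suffices that any ball covering a point of the sphere $\|y\|_2 = M$ must spill outside with positive volume, which already makes the covering inequality strict and completes your contradiction.
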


Theorem~\ref{thm:any_function_bound_in_d_dimension} indicates that there always exists a video $y$, where its $\ell_2$ distance to all videos can be represented by the prompt embeddings is larger than $\epsilon$ (Fig.~\ref{fig:mapping}). This means that there always exists a video that cannot be accurately generated by using only the prompt embeddings from the word embedding space. We defer the proof to Theorem~\ref{thm:any_function_bound_in_d_dimension:restatement} which is the restatement of Theorem~\ref{thm:any_function_bound_in_d_dimension} in the Appendix.

\begin{figure}[!ht]
    \centering
    \includegraphics[width=0.6\linewidth]{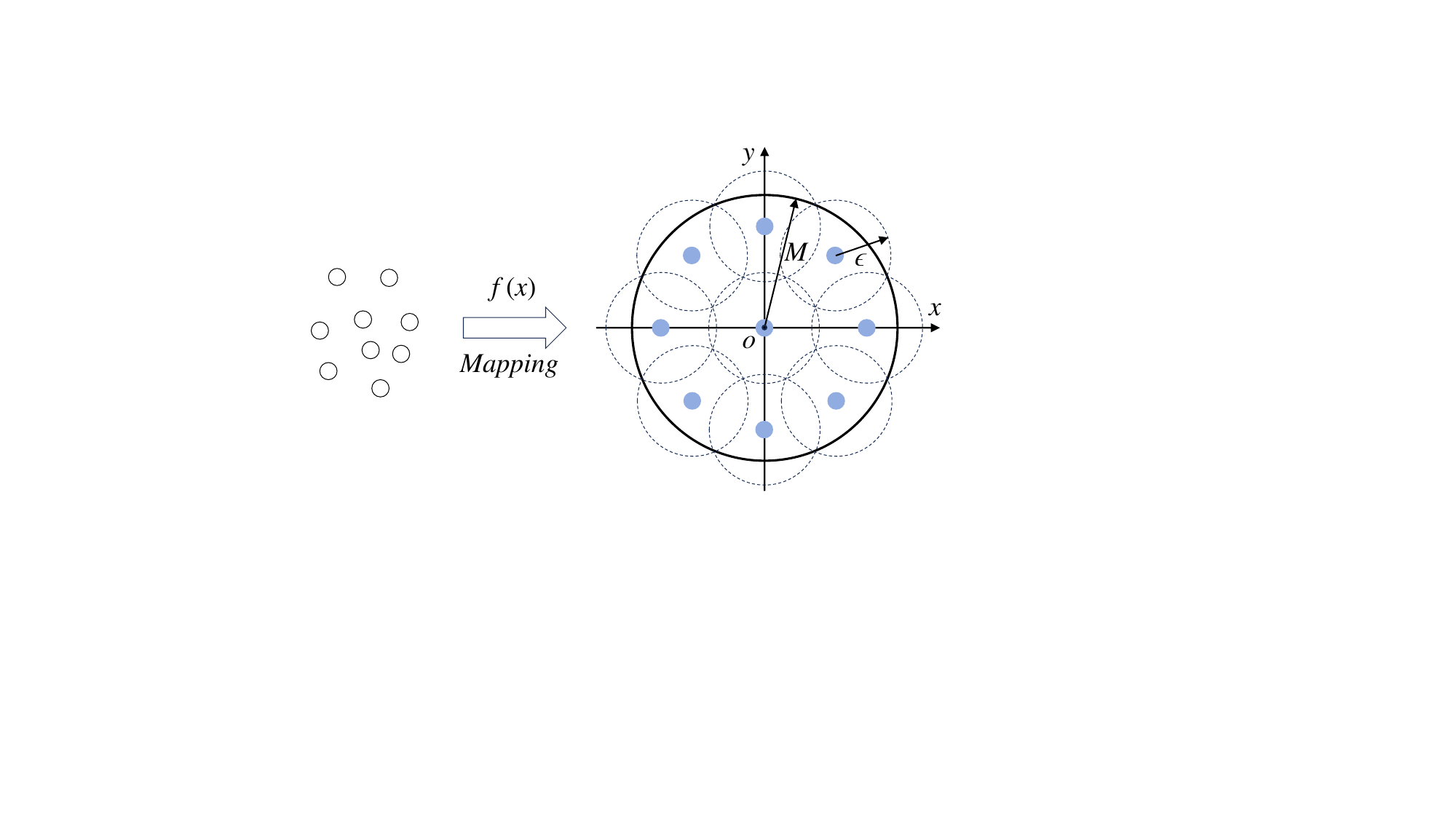}
    \caption{
    \textbf{Mapping from Prompt Space to Video Space.} This figure illustrates the mapping from a prompt space (with discrete prompts) to a video space (with continuous video embeddings) by a video generation model $f(x)$. Regardless of the specific form of the video generation model $f(x)$, there always exists a point in the video embedding space whose distance to all $f(x)$ is at least $\epsilon$.
}
    \label{fig:mapping}
\end{figure}

\section{Conclusion} \label{sec:conclusion}

In this work, we propose a novel algorithm to identify the optimal text embedding, enabling a video generation model to produce videos that accurately reflect the features specified in the initial prompts. Our findings reveal that the main bottleneck in text-to-video generation is the text encoder’s inability to generate precise text embeddings. By carefully selecting and interpolating text embeddings, we improve the model's ability to generate more accurate and diverse videos.
From the theoretical side, we show that text embeddings generated by the text encoder are insufficient to represent all possible video features, which explains why the text encoder becomes a bottleneck in generating videos with mixed desired features. Our proposed algorithm, based on perpendicular foot embeddings and cosine similarity, provides an effective solution to these challenges. These results highlight the importance of refining text embeddings to improve model performance and lay the foundation for future advancements in text-to-video generation by emphasizing the critical role of embedding optimization in bridging the gap between textual descriptions and video synthesis.

\ifdefined\isarxiv
\else 
\section*{Impact Statement}
This paper presents work whose goal is to advance the field of Machine Learning. There are many potential societal consequences of our work, none of which we feel must be specifically highlighted here.
\fi

\ifdefined\isarxiv
\bibliographystyle{alpha}
\bibliography{ref}
\else
\bibliography{ref}
\bibliographystyle{icml2025}

\fi

\newpage
\onecolumn
\appendix
\begin{center}
    \textbf{\LARGE Appendix }
\end{center}

\paragraph{Roadmap.} 
In Section~\ref{app:sec:word_embedding_insufficient}, we provide detailed proofs for the theorem showing that word embeddings are insufficient to represent all videos. 
In Section~\ref{app:sec:more_examples}, we provide more results of our experiments.

\section{Word Embedding Space being Insufficient to Represent for All Videos} \label{app:sec:word_embedding_insufficient}

In this section, we provide detailed proofs for Theorem~\ref{thm:word_embeddings_are_insufficient}, showing that word embeddings are insufficient for representing all videos. 
We begin with a $1$ dimensional case, where we assume all weights in function $f(x)$ are integers. 

\begin{lemma} [Integer function bound in $1$ dimension] 
If the following conditions hold:
\begin{itemize}
    \item Let $V \in \mathbb{N}$ denote a positive integer. 
    \item Let $f : [V]^n \rightarrow \R$ denote a linear function where weights are all integers. 
    \item Let $x \in [V]^n$ denote the input of function $f$. 
    \item Let $M := \max_x f(x), m := \min_x f(x)$.
    \item Let $\epsilon = 0.5$. 
\end{itemize}

Then we can show there exits a scalar $y \in [ m, M ]$ such that for any $x \in [V]^n$, $|f(x) - y | \geq \epsilon$.
\end{lemma}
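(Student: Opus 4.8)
The plan is to exploit a counting argument: the function $f$ takes at most $V^n$ distinct values (one per input sentence $x \in [V]^n$), yet these values all lie in the interval $[m, M]$, whose length is $M - m$. Since $f$ has integer weights, $f(x)$ is an integer for every integer input $x$, so the set $\{f(x) : x \in [V]^n\}$ consists of integers contained in $[m, M]$. If $M - m$ is large relative to $V^n$, this set cannot be ``dense,'' and there must be a gap; I want to locate a point $y$ at distance at least $\epsilon = 1/2$ from every such integer.

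Concretely, first I would observe that every value $f(x)$ is an integer lying in $\{m, m+1, \dots, M\}$. Now consider the candidate points $y = m + 1/2, m + 3/2, \dots$, i.e. the half-integers strictly between consecutive integers in $[m, M]$. Each such half-integer is at distance exactly $1/2$ from the two nearest integers and at distance $\geq 1/2$ from \emph{all} integers. Hence \emph{any} half-integer $y \in (m, M)$ satisfies $|f(x) - y| \geq 1/2 = \epsilon$ for every $x \in [V]^n$, and such a $y$ exists as long as $M - m \geq 1$ (so that the open interval $(m,M)$ contains a half-integer). So the argument reduces to checking that the range is nondegenerate; in the degenerate case $M = m$ the function is constant and one can still pick, e.g., $y = m$ if one allows equality, or note the statement is vacuous/trivial — I would handle this edge case with a remark. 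Actually, since $m \le y \le M$ is required and any half-integer in $(m,M)$ works, the only thing to verify is existence of a half-integer in that range, which holds whenever $M > m$; if $M = m$ then $f \equiv m$ and taking $y = m$ gives $|f(x) - y| = 0$, so we'd need $M > m$ for a nontrivial conclusion, which is the implicit assumption.

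The main obstacle — really the only subtlety — is making sure the counting/gap reasoning is airtight: I should emphasize that integrality of $f$ is what forces the values onto the integer lattice, and that a half-integer is automatically $\epsilon$-separated from the \emph{entire} lattice, not merely from the particular values $f$ happens to attain. This makes the bound independent of how the $V^n$ values are distributed. (Note this $1$-dimensional integer warm-up gives the crude bound $\epsilon = 1/2$; the general Theorem~\ref{thm:any_function_bound_in_d_dimension} instead uses a volume/pigeonhole argument via Fact~\ref{fact:ball_volume_in_dimension_d_space} to get the sharper $\epsilon = ((M^d - m^d)/V^n)^{1/d}$, covering balls of radius $\epsilon$ around each $f(x)$ and comparing total volume to that of the shell $m \le \|y\|_2 \le M$.) For the present lemma, however, no volume computation is needed — the integer structure does all the work, and the proof is essentially the two sentences above.
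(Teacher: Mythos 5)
Your proposal is correct and follows essentially the same route as the paper's own proof: integrality of the weights and inputs forces every $f(x)$ onto the integer lattice, so the half-integer $y = m + 0.5$ is at distance at least $0.5$ from every attained value. Your remark on the degenerate case $M = m$ (where $y=m+0.5$ would fall outside $[m,M]$) is a small point the paper's proof glosses over, but otherwise the two arguments coincide.
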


\begin{proof}
Since $x \in [V]^n$, all entries of $x$ are integers. Since function $f$ is a linear function where all weights are integers, the output $f(x) \in \mathcal{Z}$ can only be integer. 

Therefore, $m, M \in \mathcal{Z}$. We choose $y = m + 0.5$. Since for all $f(x)$ are integers, then we have $|f(x) - y| \geq 0.5$.  
\end{proof}

Then, we extend the above Lemma to $d$ dimensional case. 

\begin{lemma} [Integer function bound in $d$ dimension]
If the following conditions hold:
\begin{itemize}
    \item Let $V \in \mathbb{N}$ denote a positive integer. 
    \item Let $f : [V]^n \rightarrow \R^d$ denote a linear function where weights are all integers. 
    \item Let $x \in [V]^n$ denote the input of function $f$. 
    \item Let $M := \max_x \| f(x) \|_2, m := \min_x \| f(x) \|_2$.
    \item Let $\epsilon = 0.5 \sqrt{d}$. 
\end{itemize}

Then we can show there exits a vector $y \in \R^d$, satisfying $m \leq \| y \|_2 \leq M$, such that for any $x \in [V]^n$, $\| f(x) - y \|_2 \geq \epsilon$.
\end{lemma}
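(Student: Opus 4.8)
The plan is to reduce the $d$-dimensional claim to the $1$-dimensional lemma coordinatewise, or — more cleanly — to run a counting/volume argument directly in $\R^d$ that parallels the one-dimensional proof. First I would observe that because $f$ is linear with integer weights and the input $x$ ranges over $[V]^n$, each coordinate $f(x)_j$ lies in $\Z$; hence every image point $f(x)$ lies on the integer lattice $\Z^d$. The set of reachable points $S := \{f(x) : x \in [V]^n\}$ is therefore a finite subset of $\Z^d$, all of whose points have $\ell_2$ norm in $[m,M]$, i.e. $S$ sits inside the spherical shell $\{v : m \le \|v\|_2 \le M\}$.

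Next I would exhibit a lattice point $y \in \Z^d$ in that shell that is \emph{not} in $S$, or failing that, a point $y$ in the shell at $\ell_2$ distance $\ge 0.5\sqrt d$ from every element of $S$. The simplest route: since $S$ is finite, pick any $x_0$ achieving $\|f(x_0)\|_2 = m$, write $v_0 = f(x_0) \in \Z^d$, and set $y := v_0 + (0.5,0.5,\dots,0.5)^\top$, the center of a unit lattice cube adjacent to $v_0$. Then for every $w \in \Z^d$ we have $|y_j - w_j| \ge 0.5$ in each coordinate, so $\|y - w\|_2 \ge 0.5\sqrt d = \epsilon$; in particular $\|y - f(x)\|_2 \ge \epsilon$ for all $x$. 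Finally I would check the norm constraint $m \le \|y\|_2 \le M$: the lower bound needs an argument (adding a half-integer shift to a minimal-norm lattice point could in principle drop the norm), so I would instead choose the sign pattern of the half-integer shift to push \emph{away} from the origin — e.g. shift coordinate $j$ by $+0.5$ if $(v_0)_j \ge 0$ and by $-0.5$ otherwise — which guarantees $\|y\|_2 \ge \|v_0\|_2 = m$; the upper bound $\|y\|_2 \le M$ is the point that actually requires care and is where I expect the argument to be delicate.

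The main obstacle is precisely ensuring $\|y\|_2 \le M$: a half-integer perturbation of a norm-$m$ lattice point can easily exceed $M$ when the shell is thin (i.e. $M$ close to $m$), so the naive construction need not land in the shell. I would handle this either by (i) arguing that if \emph{no} admissible $y$ existed, then every lattice point of the shell would be in $S$, and then running a volume/counting estimate — the shell has volume $\frac{\pi^{d/2}}{(d/2)!}(M^d - m^d)$ by Fact~\ref{fact:ball_volume_in_dimension_d_space}, which for the right parameter regime exceeds $|S| \le V^n$, forcing a missed lattice point (this is also the natural bridge to the $\epsilon = ((M^d - m^d)/V^n)^{1/d}$ bound appearing in Theorem~\ref{thm:any_function_bound_in_d_dimension}); or (ii) more simply, noting that the lemma only asserts existence of \emph{some} $y$ in the closed shell, and picking a point near the \emph{inner} sphere rather than the outer one so that the outer bound $M$ is automatically respected, then separately verifying the $0.5\sqrt d$ separation survives. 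I expect the clean writeup to lean on option (i), since it foreshadows the packing argument needed for the general (non-integer) theorem and makes the role of $V^n$ and the ball-volume fact transparent.
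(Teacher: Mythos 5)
Your core construction is exactly the paper's: the paper's proof also takes a minimizer $x_{\min}$, notes that $f(x_{\min}) \in \Z^d$ (indeed all of $f([V]^n)$ lies on the integer lattice), sets $y = f(x_{\min}) + (0.5,\dots,0.5)^\top$, and concludes $\|f(x)-y\|_2 \ge 0.5\sqrt{d}$ from the coordinatewise half-integer separation. The difference is that the paper stops there: it never verifies the shell constraint $m \le \|y\|_2 \le M$, which you correctly single out as the delicate point. Your proposed fixes (choosing the signs of the half-integer shift to push away from the origin for the lower bound, and either a volume/counting argument via Fact~\ref{fact:ball_volume_in_dimension_d_space} or an inner-sphere placement for the upper bound) are extra machinery the paper does not use; the counting route is essentially the argument the paper reserves for the general, non-integer Theorem~\ref{thm:any_function_bound_in_d_dimension:restatement}, not for this lemma. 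So you have reproduced the paper's proof and, in addition, identified a genuine unaddressed issue in it (e.g., when $M$ is close to $m$ the shifted point need not satisfy $\|y\|_2 \le M$, and with an unlucky sign pattern it could also dip below $m$); in a careful writeup one would either add your sign-choice argument and an explicit hypothesis ruling out a too-thin shell, or fall back on the counting argument as you suggest.
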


\begin{proof}
Let $x_{\min} \in [V]^n$ denote the vector which satisfies $f(x_{\min}) = m$. 
Since all entries in $x$ and $f$ are integers, all entries in $f(x_{\min})$ are all integers. 

For $i \in [d]$, let $z_i \in \mathcal{Z}$ denote the $i$-th entry of $f(x_{\min})$. 

Then, we choose the vector $y \in \R^d$ as 
\begin{align*}
    y = 
    \begin{bmatrix}
        z_1 + 0.5 \\
        z_2 + 0.5 \\
        \vdots \\
        z_d + 0.5 
    \end{bmatrix}
\end{align*}

Then, since all entries of $f(x)$ are integers, we have $\| f(x) - y \|_2 \geq 0.5 \sqrt{d}$. 
\end{proof}
Then, we move on to a more complicated case, in which we do not make any assumptions about the function $f(x)$. We still begin by considering the $1$ dimensional case. 

\begin{definition} [Set Complement] \label{def:set_complement}
If the following conditions hold:
\begin{itemize}
    \item Let $A, U$ denote two sets. 
\end{itemize}

Then, we use $U \backslash A$ to denote the complement of $A$ in $U$:
\begin{align*}
    U \backslash A := \{x \in U : x \notin A\}
\end{align*}
\end{definition}

\begin{definition} [Cover] \label{def:cover}
If the following conditions hold:
\begin{itemize}
    \item Let $X$ denote a set.
    \item Let $A$ denote an index set.
    \item For $\alpha \in A$, let $U_\alpha \subset X$ denote the subset of $X$, indexed by $A$.
    \item Let Let $C = \{U_\alpha : \alpha \in A\}$.
\end{itemize}
Then we call $C$ is a cover of $X$ if the following holds:
\begin{align*}
    X \subseteq \cup_{\alpha \in A} U_\alpha 
\end{align*}
\end{definition}

\begin{lemma} [Any function bound in $1$ dimension] \label{lem:any_function_bound_in_dimension_1}
If the following conditions hold:
\begin{itemize}
    \item Let $V \in \mathbb{N}$ denote a positive integer. 
    \item Let $f : [V]^n \rightarrow \R$ denote a function. 
    \item Let $x \in [V]^n$ denote the input of function $f$. 
    \item Let $M := \max_x f(x), m := \min_x f(x)$.
    \item Let $\epsilon = (M - m) / (2 V^n)$. 
\end{itemize}

Then we can show there exits a scalar $y \in [ m, M ]$ such that for any $x \in [V]^n$, $|f(x) - y | \geq \epsilon$.
\end{lemma}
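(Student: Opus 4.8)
The plan is to use a simple counting/pigeonhole argument on the interval $[m, M]$. The function $f : [V]^n \to \R$ has a domain of size exactly $V^n$, so its image $\{f(x) : x \in [V]^n\}$ consists of at most $V^n$ distinct real values, all lying in $[m, M]$. These finitely many points partition $[m,M]$ into at most $V^n$ subintervals (gaps between consecutive image points, including the degenerate endpoints $m$ and $M$ which are themselves in the image). Since the total length of $[m,M]$ is $M - m$, by averaging there must exist at least one gap of length at least $(M-m)/V^n$; pick $y$ to be the midpoint of the longest such gap, so that $y$ is at distance at least $(M-m)/(2V^n) = \epsilon$ from every image point, and clearly $y \in [m,M]$.

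First I would enumerate the image points as $m = p_1 < p_2 < \cdots < p_r = M$ where $r \le V^n$. Then I would observe $\sum_{j=1}^{r-1} (p_{j+1} - p_j) = M - m$, so $\max_j (p_{j+1} - p_j) \ge (M-m)/(r-1) \ge (M-m)/(V^n)$ (handling the trivial case $r = 1$, i.e. $M = m$, separately, where $\epsilon = 0$ and the claim is vacuous). Choosing $j^\star$ to attain this maximum and setting $y := (p_{j^\star} + p_{j^\star+1})/2$, for any $x \in [V]^n$ we have $f(x) = p_j$ for some $j$, and $|f(x) - y| = |p_j - y| \ge (p_{j^\star+1} - p_{j^\star})/2 \ge (M-m)/(2V^n) = \epsilon$.

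I do not anticipate a genuine obstacle here: this is a clean pigeonhole estimate on the line. The only points requiring a little care are the edge cases ($M = m$, or $r = 1$), and making sure the endpoints $m, M$ are correctly counted as image points so that the gaps genuinely tile all of $[m,M]$ rather than a strict subinterval — otherwise the averaging bound would be off by the contribution of the two end segments. The real work of the paper comes afterward, in lifting this one-dimensional bound to $\R^d$ via the ball-volume packing estimate of Fact~\ref{fact:ball_volume_in_dimension_d_space}, which is where the exponent $1/d$ and the shell volume $M^d - m^d$ enter.
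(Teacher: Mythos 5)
Your proof is correct, and it is organized a bit differently from the paper's. The paper argues by contradiction via a covering-length count: it assumes every $y \in [m,M]$ lies within $\epsilon$ of some image point, notes that each of the at most $V^n$ image points can then ``protect'' only an interval of length less than $(M-m)/V^n$, so the union of these regions has total length less than $M-m$ and cannot cover $[m,M]$. You instead argue directly: sort the at most $V^n$ image values $m = p_1 < \cdots < p_r = M$, observe the consecutive gaps sum to $M-m$ so the largest gap has length at least $(M-m)/V^n$, and take $y$ to be its midpoint. The quantitative content is the same length-counting pigeonhole in both cases, but your version buys two small things: it is constructive (it exhibits $y$ explicitly rather than inferring its existence from a non-covering), and it sidesteps the mildly delicate point in the paper's argument of why regions of total length bounded by $M-m$ cannot cover the closed interval $[m,M]$, which the paper treats only informally. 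You also correctly isolate the degenerate case $M=m$ (where $\epsilon=0$ and the conclusion is vacuous), which the paper does not discuss, and your closing remark matches the paper: the $d$-dimensional statement is obtained by replacing interval lengths with the ball volumes of Fact~\ref{fact:ball_volume_in_dimension_d_space}, giving the $((M^d-m^d)/V^n)^{1/d}$ bound.
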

\begin{proof}

Assuming for all $y \in [m, M]$, there exists one $f(x)$, such that $|f(x) - y| < (M - m) / (2 V^n)$. 

The overall maximum cover of all $V^n$ points should satisfy
\begin{align} \label{eq:strict_less}
    2 \cdot V^n \cdot |f(x) - y| < (M - m)
\end{align}
where the first step follows from there are total $V^n$ possible choices for $f(x)$, and each choice has a region with length less than $2 |f(x) - y|$. This is because the $y$ can be either left side of $f(x)$, or can be on the right side of $f(x)$, for both case, we need to have $|f(x) - y| < (M - m) / (2 V^n)$. So the length for each region of $f(x)$ should at least be $2 |f(x) - y|$.

Eq~\eqref{eq:strict_less} indicates the overall regions of $V^n$ points can not cover all $[m, M]$ range, i.e. cannot become a cover (Definition~\ref{def:cover}) of $[m, M]$. This is because each points can cover at most $2 |f(x) - y| < (M - m) / V^n$ length, and there are total $V^n$ points. So the maximum region length is less than $V^n \cdot (M - m) / V^n = (M - m)$. Note that the length of the range $[m, M]$ is $(M - m)$. Therefore, $V^n$ points cannot cover all $[m , M]$ range.

We use $\mathcal{S}$ to denote the union of covers of all possible $f(x)$. 
Since the length of $\mathcal{S}$ is less than $M - m$, there exists at least one $y$ lies in $[m, M] \backslash \mathcal{S}$ such that $|f(x) - y| \geq (M - m) / (2 V^n)$. Here $\backslash$ denotes the set complement operation as defined in Definition~\ref{def:set_complement}.

Then, we complete our proof. 
\end{proof}

Here, we introduce an essential fact that states the volume of a $\ell_2$-ball in $d$ dimensional space.

Then, we extend our $1$ dimensional result on any function $f(x)$ to $d$ dimensional cases.

\begin{theorem} [Word embeddings are insufficient to represent for all videos, restatement of Theorem~\ref{thm:any_function_bound_in_d_dimension}] \label{thm:any_function_bound_in_d_dimension:restatement}
If the following conditions hold:
\begin{itemize}
    \item Let $n,d$ denote two integers, where $n$ denotes the maximum length of the sentence, and all videos are in $\R^d$ space. 
    \item Let $V \in \mathbb{N}$ denote the vocabulary size. 
    \item Let $\mathcal{U} = \{u_1, u_2, \cdots, u_V\}$ denote the word embedding space, where for $i \in [V]$, the word embedding $u_i \in \R^k$. 
    \item Let $\delta_{\min} = \min_{i, j \in [V], i \neq j} \| u_i - u_j\|_2$ denote the minimum $\ell_2$ distance of two word embedding. 
    \item Let $f : \R^{nk} \rightarrow \R^d$ denote the mapping from sentence space (discrete space $\{u_1, \dots, u_V\}^n$) to video space $\R^d$. 
    \item Let $M := \max_x \| f(x) \|_2, m := \min_x \| f(x) \|_2$.
    \item Let $\epsilon = ((M^d - m^d) / V^n)^{1/d}$. 
\end{itemize}

Then, we can show that there exits a video $y \in \R^d$, satisfying $m \leq \| y \|_2 \leq M$, such that for any sentence $x \in \{u_1, u_2, \cdots, u_V\}^n$, $\|f(x) - y \|_2 \geq \epsilon$.
\end{theorem}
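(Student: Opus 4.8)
The plan is to generalize the volume/covering argument of Lemma~\ref{lem:any_function_bound_in_dimension_1} from the interval $[m,M] \subset \R$ to the annulus (spherical shell) $\{y \in \R^d : m \le \|y\|_2 \le M\} \subset \R^d$. The function $f$ takes at most $V^n$ distinct values on the discrete domain $\{u_1,\dots,u_V\}^n$ (since there are $V^n$ sentences), so the image $f(\{u_1,\dots,u_V\}^n)$ is a finite set of at most $V^n$ points. For each such point $p$, the set of $y$ in the annulus within distance $\epsilon$ of $p$ is contained in an $\ell_2$-ball of radius $\epsilon$, which by Fact~\ref{fact:ball_volume_in_dimension_d_space} has volume $\frac{\pi^{d/2}}{(d/2)!}\epsilon^d$. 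Hence the total volume covered by the $\epsilon$-neighborhoods of all image points is at most $V^n \cdot \frac{\pi^{d/2}}{(d/2)!}\epsilon^d$.

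The second step is to compute the volume of the annulus itself: by Fact~\ref{fact:ball_volume_in_dimension_d_space} it equals $\frac{\pi^{d/2}}{(d/2)!}(M^d - m^d)$, the difference of the volumes of the radius-$M$ and radius-$m$ balls. Substituting the choice $\epsilon = ((M^d - m^d)/V^n)^{1/d}$, we get that the covered volume is at most $V^n \cdot \frac{\pi^{d/2}}{(d/2)!} \cdot \frac{M^d - m^d}{V^n} = \frac{\pi^{d/2}}{(d/2)!}(M^d - m^d)$, which is exactly the volume of the annulus. To get a strict gap — i.e. a point genuinely uncovered — I would argue as in Lemma~\ref{lem:any_function_bound_in_dimension_1}: a point $y$ realizing $\|f(x)-y\|_2 < \epsilon$ for some $x$ must lie in the open ball of radius $\epsilon$ around $f(x)$, and the union of $V^n$ open balls each of volume strictly less than the per-point budget cannot cover a closed set of the full volume; alternatively one notes the $\epsilon$-balls are wasteful because parts of them stick outside the annulus near the inner and outer spheres. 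Either way there exists $y$ in the annulus with $\|f(x)-y\|_2 \ge \epsilon$ for every sentence $x$.

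The main obstacle — really the only subtle point — is making the "volumes are equal, so there must still be an uncovered point" step rigorous, since equal total volume does not by itself preclude a cover (a cover can overlap itself and waste volume, yet still cover everything if it has no deficit). The clean fix is the one already used in the $1$-dimensional lemma: observe that each $f(x)$ needs an $\epsilon$-ball of Lebesgue measure $\frac{\pi^{d/2}}{(d/2)!}\epsilon^d$, but to \emph{strictly} cover its neighborhood in the sense required one only gets credit for the portion inside the annulus, and for the outermost/innermost image points a positive-measure fraction of the ball falls outside $\{m \le \|y\|_2 \le M\}$; hence the effective covered measure is \emph{strictly} below the annulus volume, leaving a positive-measure (in particular nonempty) uncovered set. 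I would phrase this exactly parallel to the proof of Lemma~\ref{lem:any_function_bound_in_dimension_1}, replacing "length" by "$d$-dimensional volume" and the interval by the annulus, so that the argument reads as a direct higher-dimensional analogue.
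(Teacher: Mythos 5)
Your proposal follows essentially the same volume/covering argument as the paper: bound the total measure covered by the $\epsilon$-neighborhoods of the at most $V^n$ image points using Fact~\ref{fact:ball_volume_in_dimension_d_space}, and compare it with the volume $\frac{\pi^{d/2}}{(d/2)!}(M^d - m^d)$ of the annulus $\{y : m \leq \|y\|_2 \leq M\}$. You are in fact more careful than the paper on the one delicate point: with this choice of $\epsilon$ the covered-volume bound equals the annulus volume exactly (the paper's Eq.~\eqref{eq:d_dimensional_space} asserts a strict inequality where there is equality), and your open-ball / boundary-waste observation is precisely what is needed to make the ``an uncovered point exists'' conclusion rigorous.
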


\begin{proof}
Assuming for all $y$ satisfying $m \leq \| y \|_2 \leq M$, there exists one $f(x)$, such that $|f(x) - y| < ((M^d - m^d) / V^n)^{1/d}$. 

Then, according to Fact~\ref{fact:ball_volume_in_dimension_d_space}, for each $f(x)$, the volume of its cover is $\frac{\pi^{d/2}}{(d/2)!} ((M^d - m^d) / V^n)$. 

There are maximum total $V^n$ $f(x)$, so the maximum volume of all covers is 
\begin{align} \label{eq:d_dimensional_space}
    V^n \cdot \frac{\pi^{d/2}}{(d/2)!} ((M^d - m^d) / V^n) < \frac{\pi^{d/2}}{(d/2)!} (M^d - m^d)
\end{align}

The entire space of a $d$-dimensional $\ell_2$ ball is $\frac{\pi^{d/2}}{(d/2)!} (M^d - m^d)$. However, according to Eq.~\eqref{eq:d_dimensional_space} the maximum volume of the regions generated by all $f(x)$ is less than $\frac{\pi^{d/2}}{(d/2)!} (M^d - m^d)$. Therefore
Eq.~\eqref{eq:d_dimensional_space} indicates the cover of all $V^n$ possible points does not cover the entire space for $y$. 

Therefore, there exists a $y$ satisfying $m \leq \| y \|_2 \leq M$, such that $\|f(x) - y \|_2 \geq ((M^d - m^d) / V^n)^{1/d}$. 

Then, we complete our proof.

\end{proof}

\begin{definition}[Bi-Lipschitzness] \label{def:bi_lipschitz}
    We say a function $f:\R^n \to \R^d$ is $L$-bi-Lipschitz if for all $x, y \in \R^n$, we have
    \begin{align*}
        L^{-1} \|x-y\|_2 \leq \|f(x)-f(y)\|_2 \leq L \|x - y\|_2.
    \end{align*}
\end{definition}

Then, we state our main result as follows
\begin{theorem}[Word embeddings are insufficient to represent for all videos, with Bi-Lipschitz condition] \label{thm:word_embeddings_are_insufficient}
If the following conditions hold:
\begin{itemize}
    \item Let $n,d$ denote two integers, where $n$ denotes the maximum length of the sentence, and all videos are in $\R^d$ space. 
    \item Let $V \in \mathbb{N}$ denote the vocabulary size. 
    \item Let $\mathcal{U} = \{u_1, u_2, \cdots, u_V\}$ denote the word embedding space, where for $i \in [V]$, the word embedding $u_i \in \R^k$. 
    \item Let $\delta_{\min} = \min_{i, j \in [V], i \neq j} \| u_i - u_j\|_2$ denote the minimum $\ell_2$ distance of two word embedding. 
    \item Let $f : \R^{nk} \rightarrow \R^d$ denote the text-to-video generation model, which is also a mapping from sentence space (discrete space $\{u_1, \dots, u_V\}^n$) to video space $\R^d$. 
    \item Assuming $f : \R^{nk} \rightarrow \R^d$ satisfies the $L$-bi-Lipschitz condition (Definition~\ref{def:bi_lipschitz}).
    \item Let $M := \max_x \| f(x) \|_2, m := \min_x \| f(x) \|_2$.
    \item Let $\epsilon = \max\{ 0.5 \cdot \delta_{\min} / L, ((M^d - m^d) / V^n)^{1/d}\}$. 
\end{itemize}

Then, we can show that there exits a video $y \in \R^d$, satisfying $m \leq \| y \|_2 \leq M$, such that for any sentence $x \in \{u_1, u_2, \cdots, u_V\}^n$, $\|f(x) - y \|_2 \geq \epsilon$.
\end{theorem}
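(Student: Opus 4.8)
The plan is to split on which of the two quantities realizes the maximum in $\epsilon = \max\{0.5\,\delta_{\min}/L,\ ((M^d-m^d)/V^n)^{1/d}\}$, and in each case to produce a single point $y$ in the target annulus $\{y\in\R^d : m \le \|y\|_2 \le M\}$ whose distance to every image $f(x)$ is at least the corresponding quantity; one of these two points then witnesses the claim for $\epsilon$. For the term $((M^d-m^d)/V^n)^{1/d}$ there is nothing to do: Theorem~\ref{thm:any_function_bound_in_d_dimension:restatement} (the volume/covering argument) already hands us such a $y$. So the whole task reduces to the term $A := 0.5\,\delta_{\min}/L$, and here the bi-Lipschitz hypothesis is what I would use.

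First I would establish that the images are well separated. Two distinct sentences $x\ne x'$ in $\{u_1,\dots,u_V\}^n$ must differ in at least one of their $n$ word slots, and in such a slot they carry two distinct word embeddings $u_a\ne u_b$; hence $\|x-x'\|_2 \ge \|u_a-u_b\|_2 \ge \delta_{\min}$. Feeding this into the lower bound of the $L$-bi-Lipschitz condition (Definition~\ref{def:bi_lipschitz}) yields $\|f(x)-f(x')\|_2 \ge L^{-1}\|x-x'\|_2 \ge \delta_{\min}/L = 2A$, so the (at most $V^n$) points $\{f(x)\}$ are pairwise at distance $\ge 2A$. As a side remark, combining this with $\|f(x)\|_2\le M$ and the triangle inequality gives $2A \le \|f(x)-f(x')\|_2 \le 2M$, i.e. $A\le M$, a bound I will need below.

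Next I would construct the witness point directly from the separation rather than from a volume count. Pick $x_0$ attaining $\|f(x_0)\|_2 = M$ and look at the two spheres $S_1=\{y:\|y\|_2=M\}$ and $S_2=\{y:\|y-f(x_0)\|_2=A\}$: their centres are at distance $M$, and since $0\le A\le 2M$ we have $|M-A|\le M\le M+A$, so $S_1\cap S_2$ is nonempty (for $d\ge 2$ it is a $(d-2)$-sphere or, in a boundary case, a single point). Any $y\in S_1\cap S_2$ has $\|y\|_2=M$, hence lies in the annulus, and $\|y-f(x_0)\|_2=A$; for any other sentence $x$, the triangle inequality and the separation bound give $\|y-f(x)\|_2\ge \|f(x)-f(x_0)\|_2-\|y-f(x_0)\|_2\ge 2A-A=A$. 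Thus $\|f(x)-y\|_2\ge A$ for every $x$, finishing this case and the theorem.

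The delicate point is precisely this last step. The covering argument of Theorem~\ref{thm:any_function_bound_in_d_dimension:restatement} is tight at the threshold $((M^d-m^d)/V^n)^{1/d}$ and yields nothing beyond it, so when $A$ is the dominant term the union of the $V^n$ radius-$A$ balls around the $f(x)$ may well fill the whole annulus and a volume bound cannot help. The resolution is to abandon the global count in favour of the local packing geometry: a single radius-$A$ sphere around one image is automatically $\ge A$ away from all images, so only the placement of that sphere inside the annulus remains, which is where the elementary estimate $A\le 2M$ enters. One minor caveat to flag: for $d=1$ a ``sphere'' degenerates to two points and $S_1\cap S_2$ can be empty, so that case would need a separate easy argument; since $d$ is the video dimension this is immaterial here.
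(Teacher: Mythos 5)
Your proof is correct (for $d \ge 2$, with the $d=1$ degeneracy you flag being immaterial, since a bi-Lipschitz $f:\R^{nk}\to\R$ cannot even exist once $nk\ge 2$), but in the case where $0.5\,\delta_{\min}/L$ dominates you take a genuinely different route from the paper. The paper chooses the witness as $y=f(\tfrac12(u_i+u_j))$, the image under $f$ of the midpoint of a closest pair of word embeddings, and applies the lower Lipschitz bound at that off-lattice point to get $\|y-f(u_k)\|_2\ge 0.5\,\delta_{\min}/L$; you instead first show that all $V^n$ sentence images are pairwise separated by at least $\delta_{\min}/L=2A$ (using that distinct sentences differ in some word slot), and then place $y$ on the intersection of the radius-$M$ sphere about the origin with the radius-$A$ sphere about a norm-maximizing image $f(x_0)$, finishing with the triangle inequality. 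Your construction buys two things the paper's choice does not obviously provide: the constraint $m\le\|y\|_2\le M$ holds by design (the paper's midpoint is not a sentence, so $\|f(\tfrac12(u_i+u_j))\|_2$ is not verified to lie in $[m,M]$), and you only invoke the lower Lipschitz bound on genuine length-$n$ sentence pairs, whereas the paper reasons with single word embeddings and needs $(u_i,u_j)$ to be the minimizing pair for its inequality $\|\tfrac12(u_i+u_j)-u_k\|_2\ge 0.5\,\delta_{\min}$ to hold (it asserts it ``for any $i,j$''). The paper's approach is shorter and avoids any sphere-intersection geometry, but your version is the more careful argument; both handle the other term of the maximum identically, by citing the volume-covering theorem.
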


\begin{proof}

Our goal is to prove that when the bi-Lipschitz condition (Definition~\ref{def:bi_lipschitz}) holds for $f(x)$, the statement can be held with $\epsilon = \max\{ 0.5 \cdot \delta_{\min} / L, ((M^d - m^d) / V^n)^{1/d}\}$. 

According to Lemma~\ref{thm:any_function_bound_in_d_dimension}, we have that $\epsilon \geq ((M^d - m^d) / V^n)^{1/d}$. 
Then, we only need to prove that when $0.5 \cdot \delta_{\min} / L > ((M^d - m^d) / V^n)^{1/d}$, holds, $\epsilon = \max\{ 0.5 \cdot \delta_{\min} / L, ((M^d - m^d) / V^n)^{1/d}\} = 0.5 \cdot \delta_{\min}$, our statement still holds.   

Since we have assume that the function $f(x)$ satisfies that for all $x, y \in \R^{nk}$, such that
\begin{align} \label{eq:f(xy)_bound}
    \|f(x)-f(y)\|_2 \geq  \|x - y\|_2 / L.
\end{align}

According to the definition of $\delta_{\min}$, we have for all $i, j \in [V], i \neq j$, such that
\begin{align} \label{eq:uij_bound}
    \| u_i - u_j \|_2 \geq \delta_{\min}
\end{align}

Combining Eq.~\eqref{eq:f(xy)_bound} and \eqref{eq:uij_bound}, we have for all $i, j \in [V], i \neq j$
\begin{align} \label{eq:leq_delta_min}
    \|f(u_i)-f(u_j)\|_2 \geq \delta_{\min} / L
\end{align}

We choose $y = f(\frac{1}{2}(u_i + u_j))$ for any $i, j \in [V], i \neq j$

Then, for all $k \in [V]$, we have
\begin{align*}
    \| y - f(u_i) \|_2 
    \geq & ~ \| \frac{1}{2}(u_i + u_j) - u_k  \|_2 / L \\
    \geq & ~  0.5 \cdot \delta_{\min} / L
\end{align*}
where the first step follows from $f(x)$ satisfies the bi-Lipschitz condition, the second step follows from Eq.~\eqref{eq:leq_delta_min}. 

Therefore, when we have $0.5 \cdot \delta_{\min} / L > ((M^d - m^d) / V^n)^{1/d}$ holds, then we must have $\epsilon = 0.5 \cdot \delta_{\min} / L$. 

Considering all conditions we discussed above, we are safe to conclude that $\epsilon = \max\{ 0.5 \cdot \delta_{\min} / L, ((M^d - m^d) / V^n)^{1/d}\}$
\end{proof}

\begin{table*}[!ht]\caption{
\textbf{Statement Reference Table.} This table shows the relationship between definitions and algorithms used in the paper, helping readers easily track where each term is defined and referenced.
}
\footnotesize 
\begin{center}
\begin{tabular}{ |c|c|c|c| }
    \hline
    Statements & Comment & Call & Called by \\ \hline
    Def.~\ref{def:linear_interpolation}& Define linear interpolation & None & Alg.~\ref{alg:cosine_similarity_calculator}, Alg.~\ref{alg:find_optimal_interpolation}\\ \hline
    Def.~\ref{def:cosine_similarity_calculator} & Define cosine similarity calculator & None & Alg.~\ref{alg:cosine_similarity_calculator}, Alg.~\ref{alg:find_optimal_interpolation}\\ \hline
    Def.~\ref{def:attention} & Define attention layer & None & Alg.~\ref{alg:3d_attention}, Def.~\ref{def:3d_attention}\\ \hline
    Def.~\ref{def:conv_layer} & Define convolution layer & None & Alg.~\ref{alg:3d_attention}, Def.~\ref{def:3d_attention}\\ \hline
    Def.~\ref{def:linear_projection} & Define linear projection & None & Alg.~\ref{alg:3d_attention}, Def.~\ref{def:3d_attention}\\ \hline
    Def.~\ref{def:3d_attention} & Define 3D attention & Def.~\ref{def:attention}, Def.~\ref{def:conv_layer},  Def.~\ref{def:linear_projection} & Alg.~\ref{alg:3d_attention}, Def.~\ref{def:text_to_video_generation_model} \\ \hline
    Def.~\ref{def:text_to_video_generation_model} & Define text to video generation model & Def.~\ref{def:3d_attention}  & Def.~\ref{def:find_optimal_interpolation_embedding_problem}\\ \hline
    Def.~\ref{def:find_optimal_interpolation_embedding_problem} & Define optimal interpolation embedding & Def.~\ref{def:text_to_video_generation_model}  & Alg.~\ref{alg:video_interpolation}\\ \hline
    Alg.~\ref{alg:3d_attention} & 3D Attention algorithm & Def.~\ref{def:attention}, Def.~\ref{def:conv_layer},  Def.~\ref{def:linear_projection}, Def.~\ref{def:3d_attention} &  None \\ \hline
    Alg.~\ref{alg:cosine_similarity_calculator} & Cosine similarity calculator algorithm & Def.~\ref{def:linear_interpolation}, Def.~\ref{def:cosine_similarity_calculator}&  Alg.~\ref{alg:find_optimal_interpolation}\\ \hline
    Alg.~\ref{alg:find_optimal_interpolation} & Find optimal interpolation algorithm & Def.~\ref{def:linear_interpolation}, 
    Def.~\ref{def:cosine_similarity_calculator}, Alg.~\ref{alg:cosine_similarity_calculator} & Alg.~\ref{alg:video_interpolation} \\ \hline
    Alg.~\ref{alg:video_interpolation} & Video interpolation algorithm & Alg.~\ref{alg:find_optimal_interpolation} & None \\ \hline
\end{tabular}
\end{center}
\end{table*}

\section{More Examples} \label{app:sec:more_examples}
In this section, we will show more experimental results that the video generated directly from the guidance prompt does not exhibit the desired mixed features from the prompts.

\begin{figure}[!ht]
    \centering
    \includegraphics[width=0.6\linewidth]{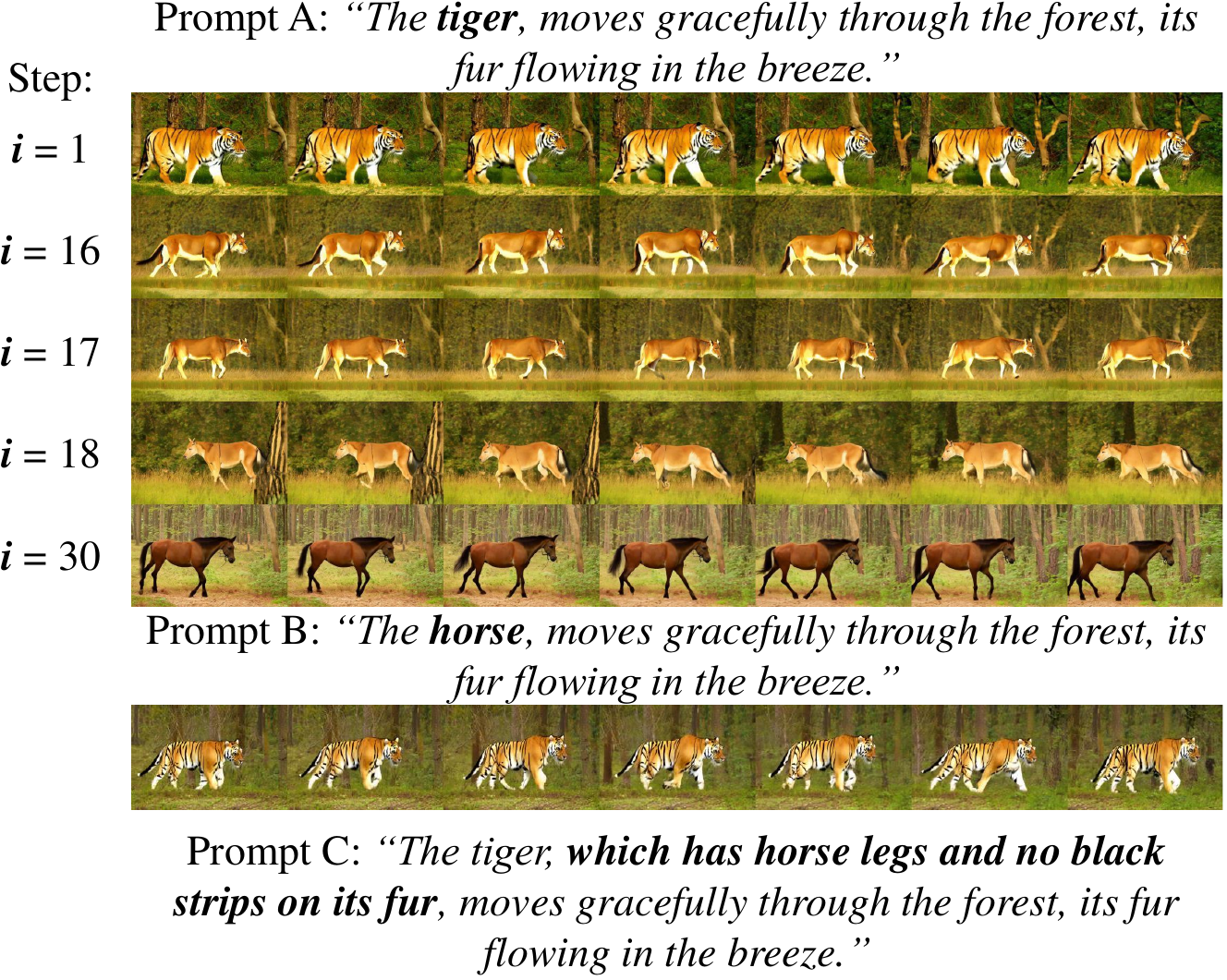}
    \caption{
    \textbf{Mixture of [``Tiger''] and [``Horse'']}. Our objective is to mix the features described in Prompt A and Prompt B with the guidance of Prompt C. We set the total number of interpolation steps to $30$. Using Algorithm~\ref{alg:find_optimal_interpolation}, we identify the $17$-th interpolation embedding as the optimal embedding and generate the corresponding video. The video generated directly from Prompt C does not exhibit the desired mixed features from Prompts A and B.}
    \label{fig:tiger_horse}
\end{figure}

\begin{figure}[!ht]
    \centering
    \includegraphics[width=0.6\linewidth]{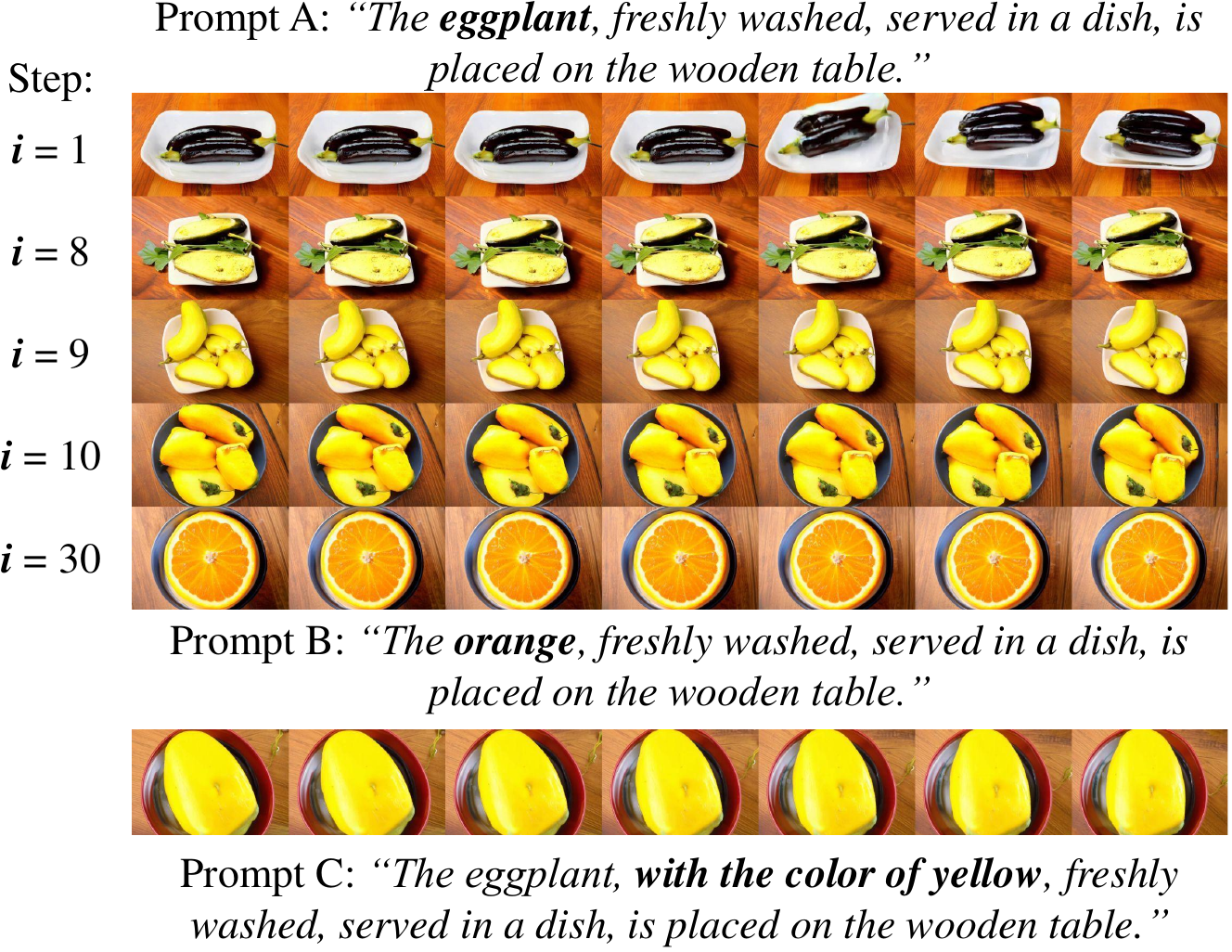}
    \caption{
    \textbf{Mixture of [``Eggplant''] and [``Orange'']}. Our objective is to mix the features described in Prompt A and Prompt B with the guidance of Prompt C. We set the total number of interpolation steps to $30$. Using Algorithm~\ref{alg:find_optimal_interpolation}, we identify the $9$-th interpolation embedding as the optimal embedding and generate the corresponding video. The video generated directly from Prompt C does not exhibit the desired mixed features from Prompts A and B.
    }
    \label{fig:eggplant_orange}
\end{figure}

\begin{figure}[!ht]
    \centering
    \includegraphics[width=0.6\linewidth]{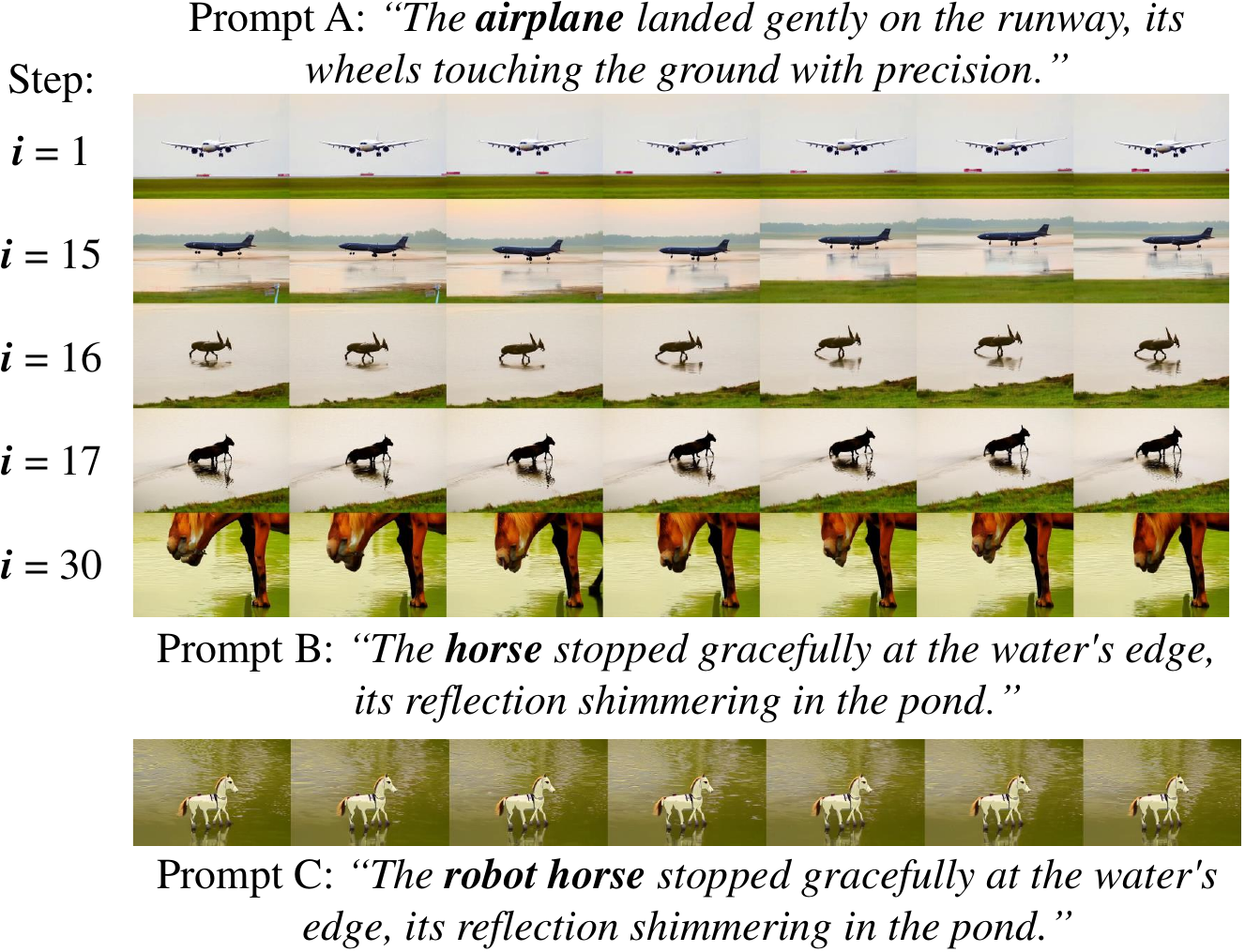}
    \caption{
    \textbf{Mixture of [``Airplane''] and [``Horse'']}. Our objective is to mix the features described in Prompt A and Prompt B with the guidance of Prompt C. We set the total number of interpolation steps to $30$. Using Algorithm~\ref{alg:find_optimal_interpolation}, we identify the $16$-th interpolation embedding as the optimal embedding and generate the corresponding video. The video generated directly from Prompt C does not exhibit the desired mixed features from Prompts A and B.
    }
    \label{fig:airplane_horse}
\end{figure}

\begin{figure}[!ht]
    \centering
    \includegraphics[width=0.6\linewidth]{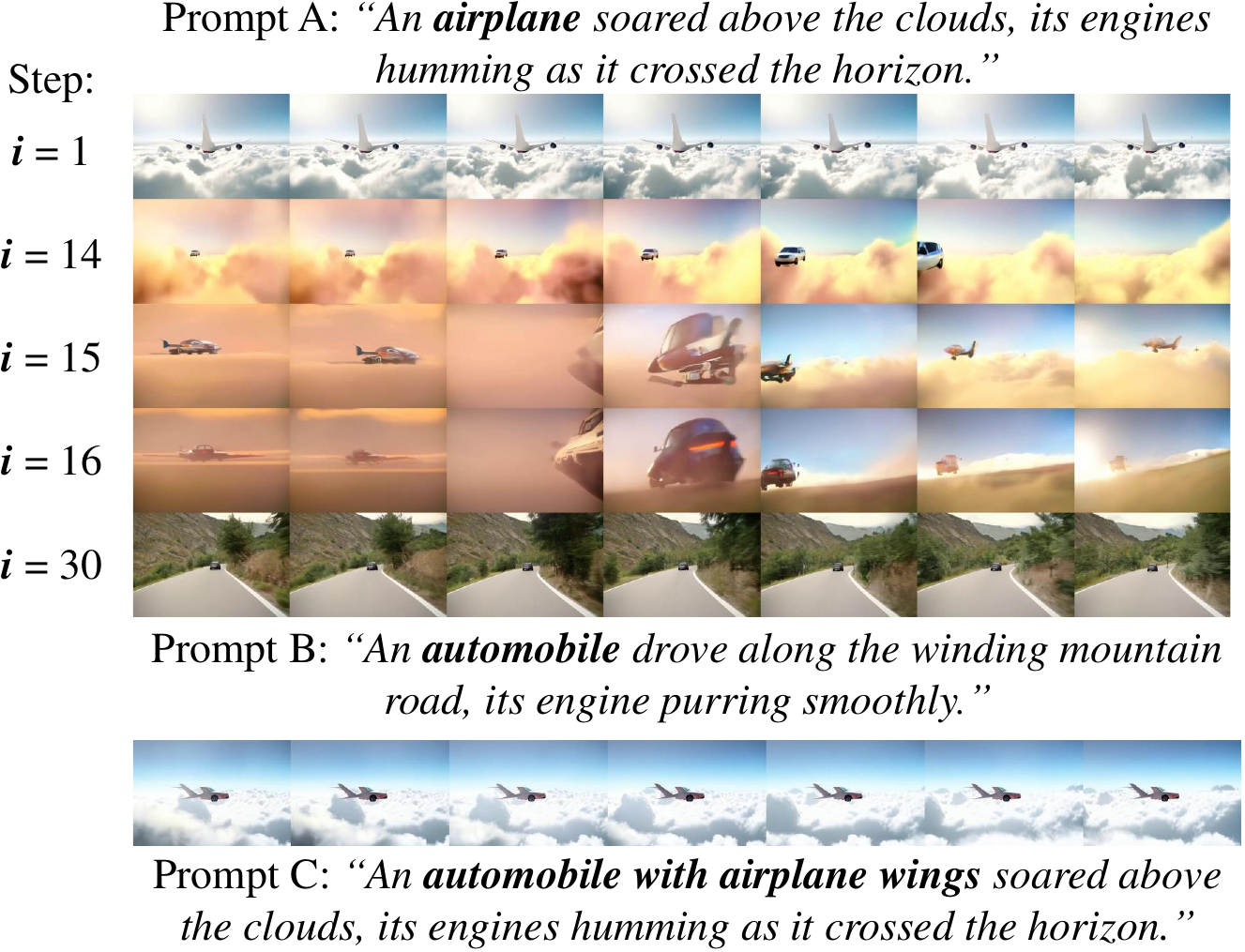}
    \caption{
    \textbf{Mixture of [``Airplane''] and [``Automobile'']}. Our objective is to mix the features described in Prompt A and Prompt B with the guidance of Prompt C. We set the total number of interpolation steps to $30$. Using Algorithm~\ref{alg:find_optimal_interpolation}, we identify the $15$-th interpolation embedding as the optimal embedding and generate the corresponding video. The video generated directly from Prompt C does not exhibit the desired mixed features from Prompts A and B.
    }
    \label{fig:airplane_automobile}
\end{figure}

\begin{figure}[!ht]
    \centering
    \includegraphics[width=0.6\linewidth]{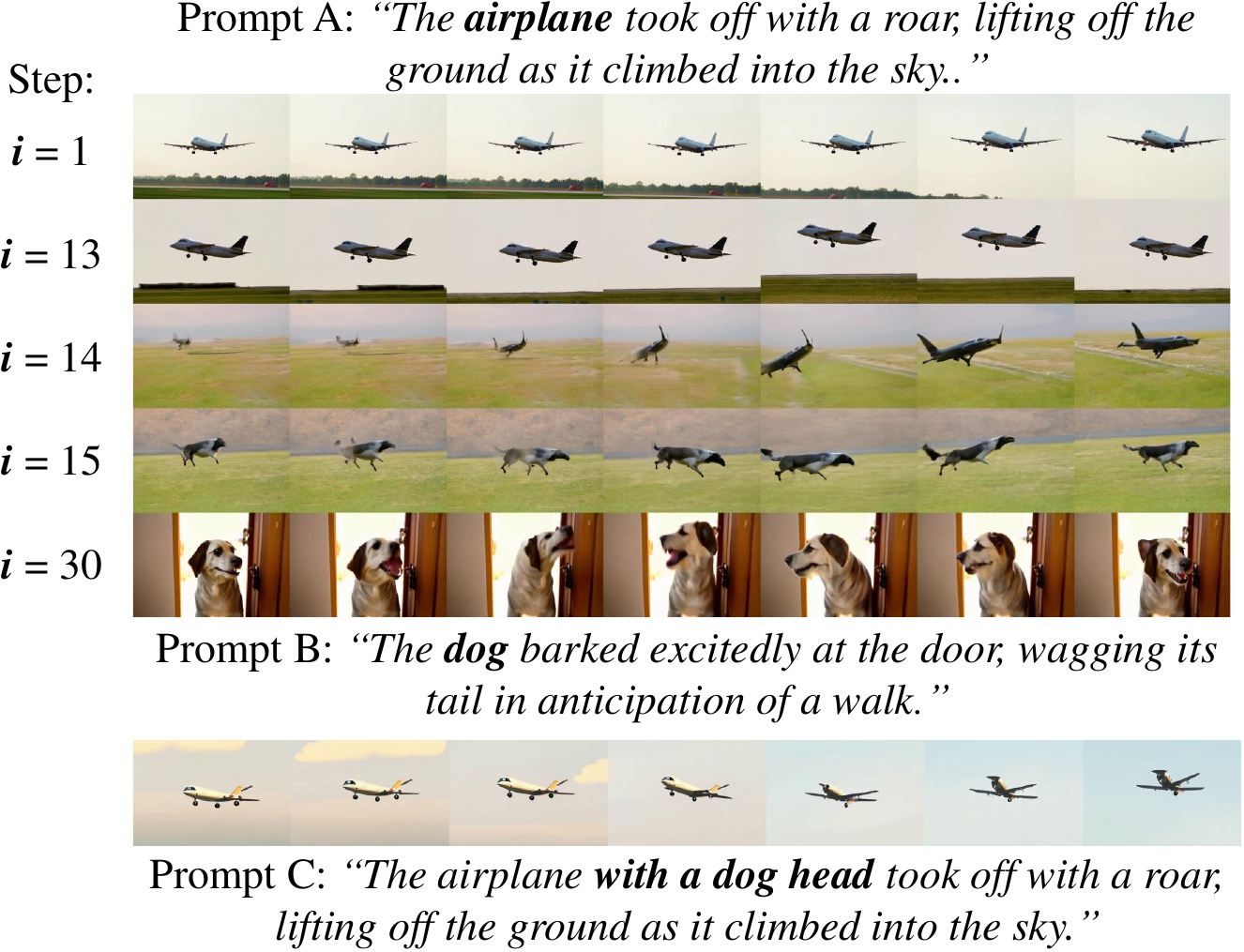}
    \caption{
    \textbf{Mixture of [``Airplane''] and [``Dog'']}. Our objective is to mix the features described in Prompt A and Prompt B with the guidance of Prompt C. We set the total number of interpolation steps to $30$. Using Algorithm~\ref{alg:find_optimal_interpolation}, we identify the $14$-th interpolation embedding as the optimal embedding and generate the corresponding video. The video generated directly from Prompt C does not exhibit the desired mixed features from Prompts A and B.
    }
    \label{fig:airplane_dog}
\end{figure}

\begin{figure}[!ht]
    \centering
    \includegraphics[width=0.6\linewidth]{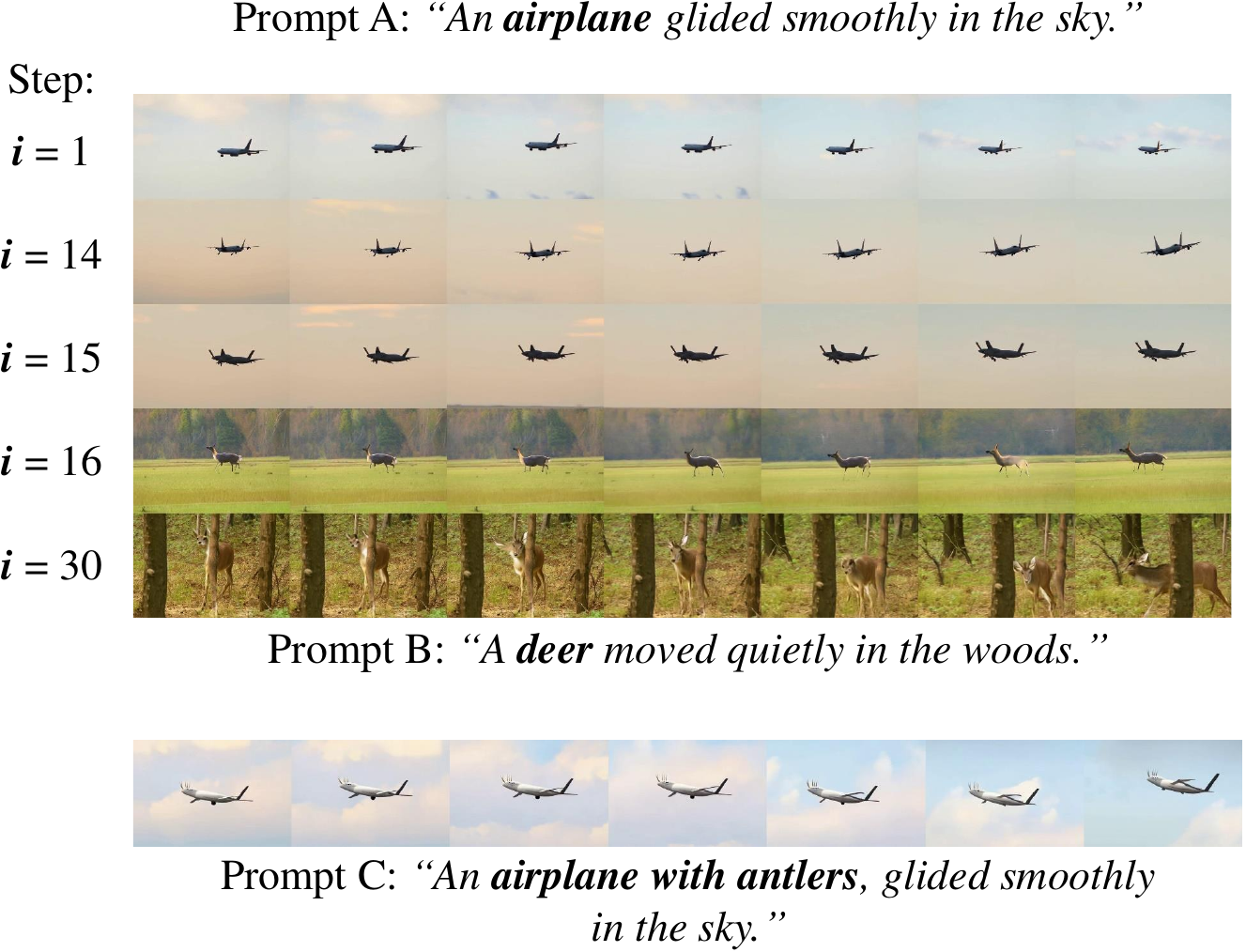}
    \caption{
    \textbf{Mixture of [``Airplane''] and [``Deer'']}. Our objective is to mix the features described in Prompt A and Prompt B with the guidance of Prompt C. We set the total number of interpolation steps to $30$. Using Algorithm~\ref{alg:find_optimal_interpolation}, we identify the $15$-th interpolation embedding as the optimal embedding and generate the corresponding video. The video generated directly from Prompt C does not exhibit the desired mixed features from Prompts A and B.
    }
    \label{fig:airplane_deer}
\end{figure}

\begin{figure}[!ht]
    \centering
    \includegraphics[width=0.6\linewidth]{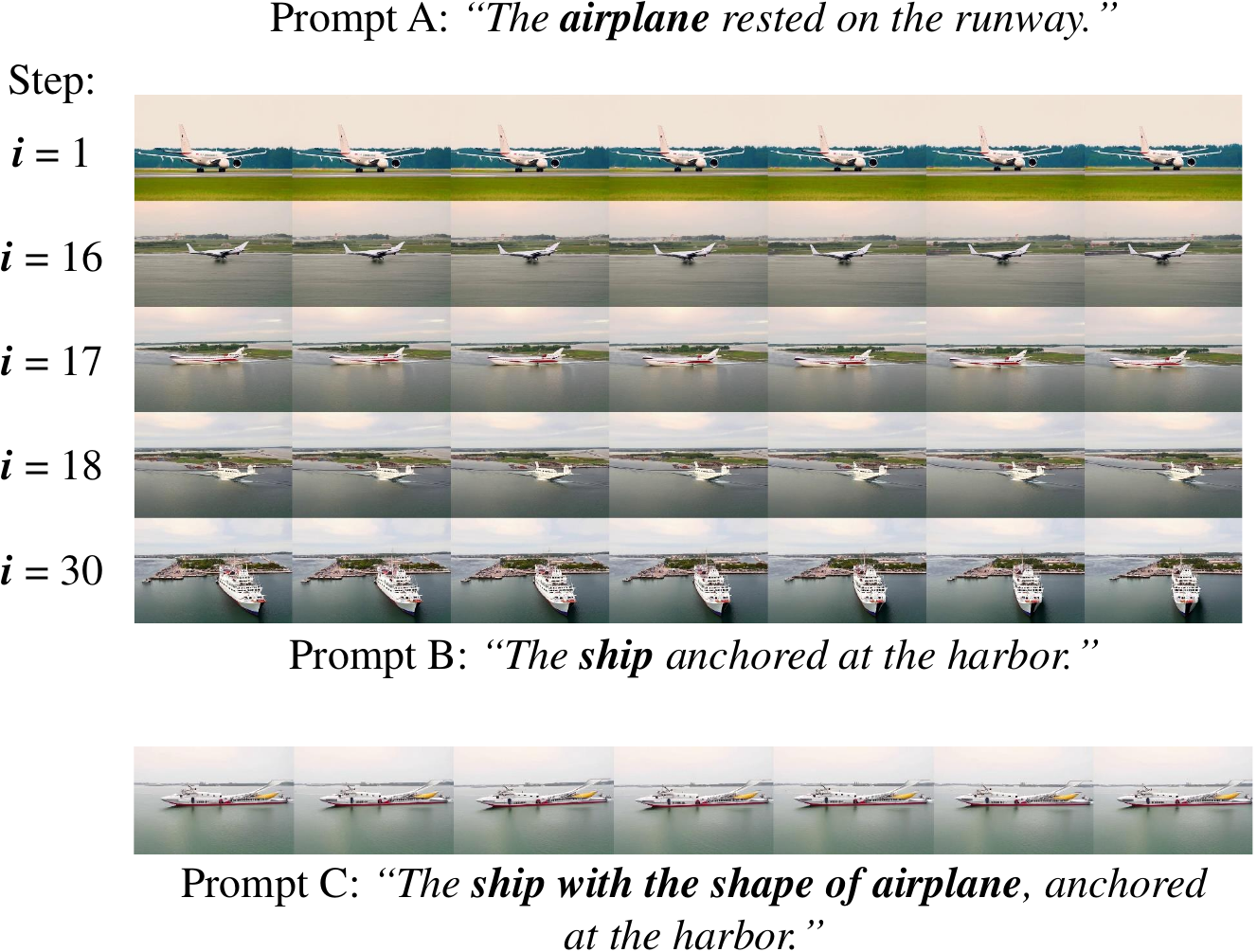}
    \caption{
    \textbf{Mixture of [``Airplane''] and [``Ship'']}. Our objective is to mix the features described in Prompt A and Prompt B with the guidance of Prompt C. We set the total number of interpolation steps to $30$. Using Algorithm~\ref{alg:find_optimal_interpolation}, we identify the $17$-th interpolation embedding as the optimal embedding and generate the corresponding video. The video generated directly from Prompt C does not exhibit the desired mixed features from Prompts A and B.
    }
    \label{fig:airplane_ship}
\end{figure}

\begin{figure}[!ht]
    \centering
    \includegraphics[width=0.6\linewidth]{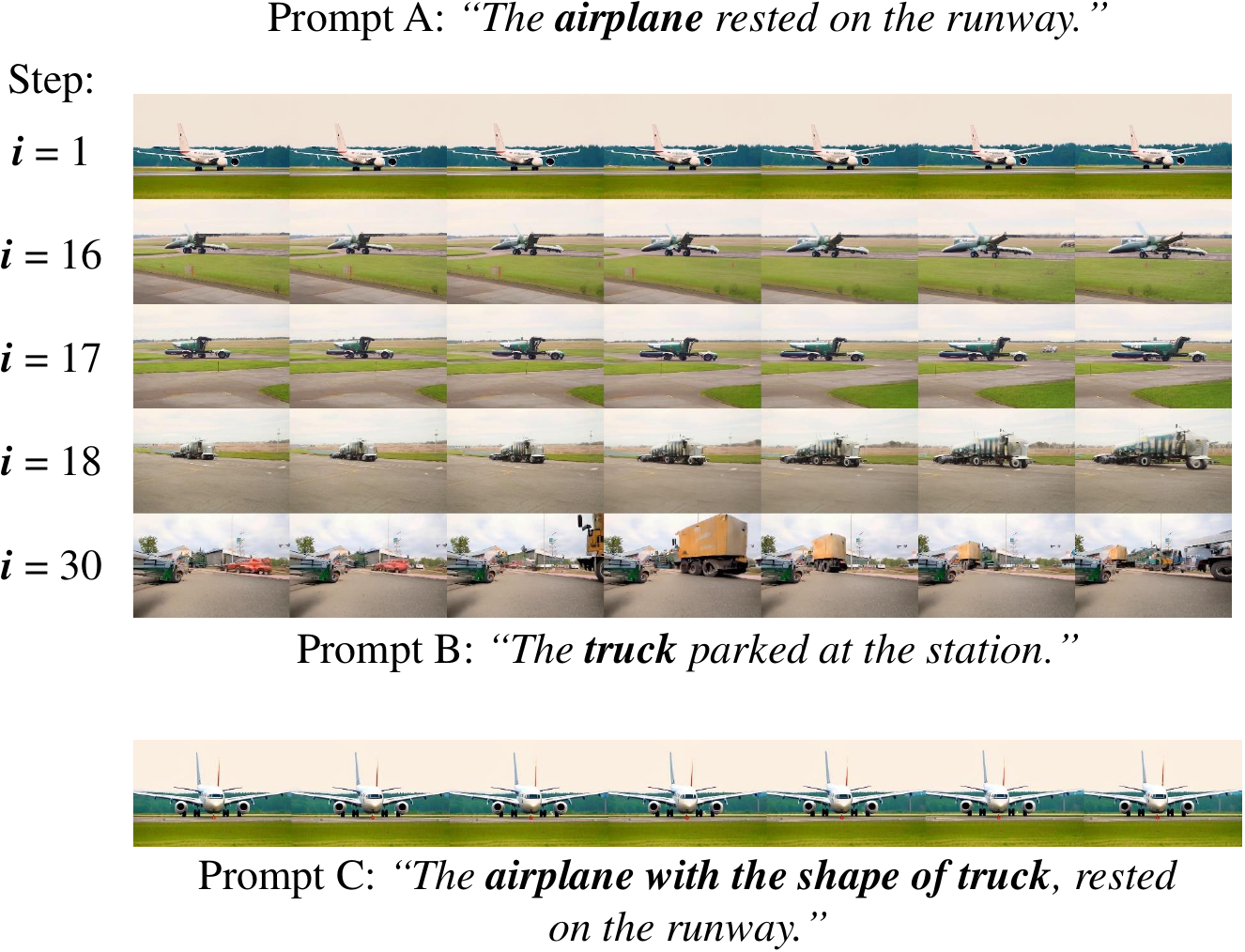}
    \caption{
    \textbf{Mixture of [``Airplane''] and [``Truck'']}. Our objective is to mix the features described in Prompt A and Prompt B with the guidance of Prompt C. We set the total number of interpolation steps to $30$. Using Algorithm~\ref{alg:find_optimal_interpolation}, we identify the $17$-th interpolation embedding as the optimal embedding and generate the corresponding video. The video generated directly from Prompt C does not exhibit the desired mixed features from Prompts A and B.
    }
    \label{fig:airplane_truck}
\end{figure}

\begin{figure}[!ht]
    \centering
    \includegraphics[width=0.6\linewidth]{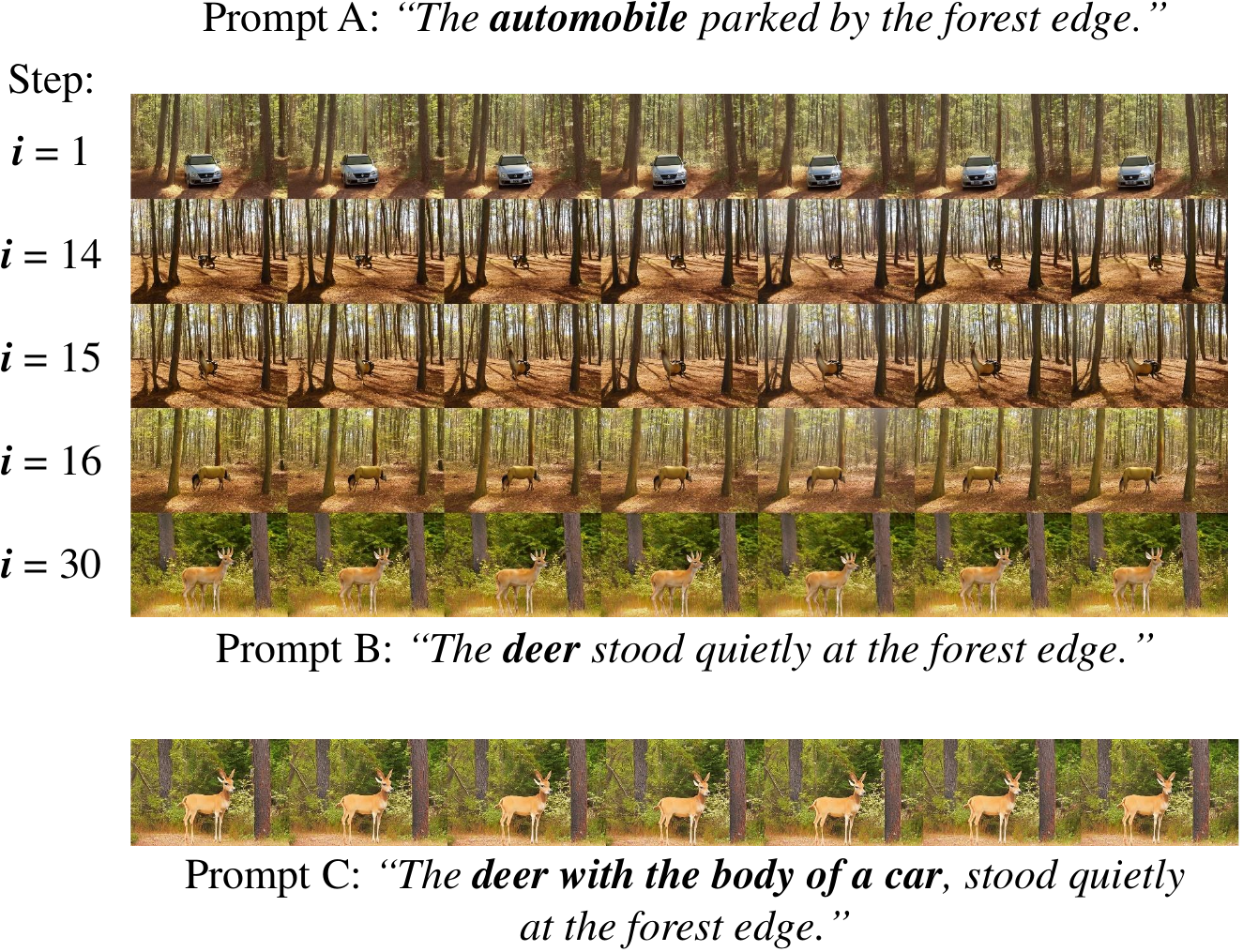}
    \caption{
    \textbf{Mixture of [``Automobile''] and [``Deer'']}. Our objective is to mix the features described in Prompt A and Prompt B with the guidance of Prompt C. We set the total number of interpolation steps to $30$. Using Algorithm~\ref{alg:find_optimal_interpolation}, we identify the $15$-th interpolation embedding as the optimal embedding and generate the corresponding video. The video generated directly from Prompt C does not exhibit the desired mixed features from Prompts A and B.
    }
    \label{fig:automobile_deer}
\end{figure}

\begin{figure}[!ht]
    \centering
    \includegraphics[width=0.6\linewidth]{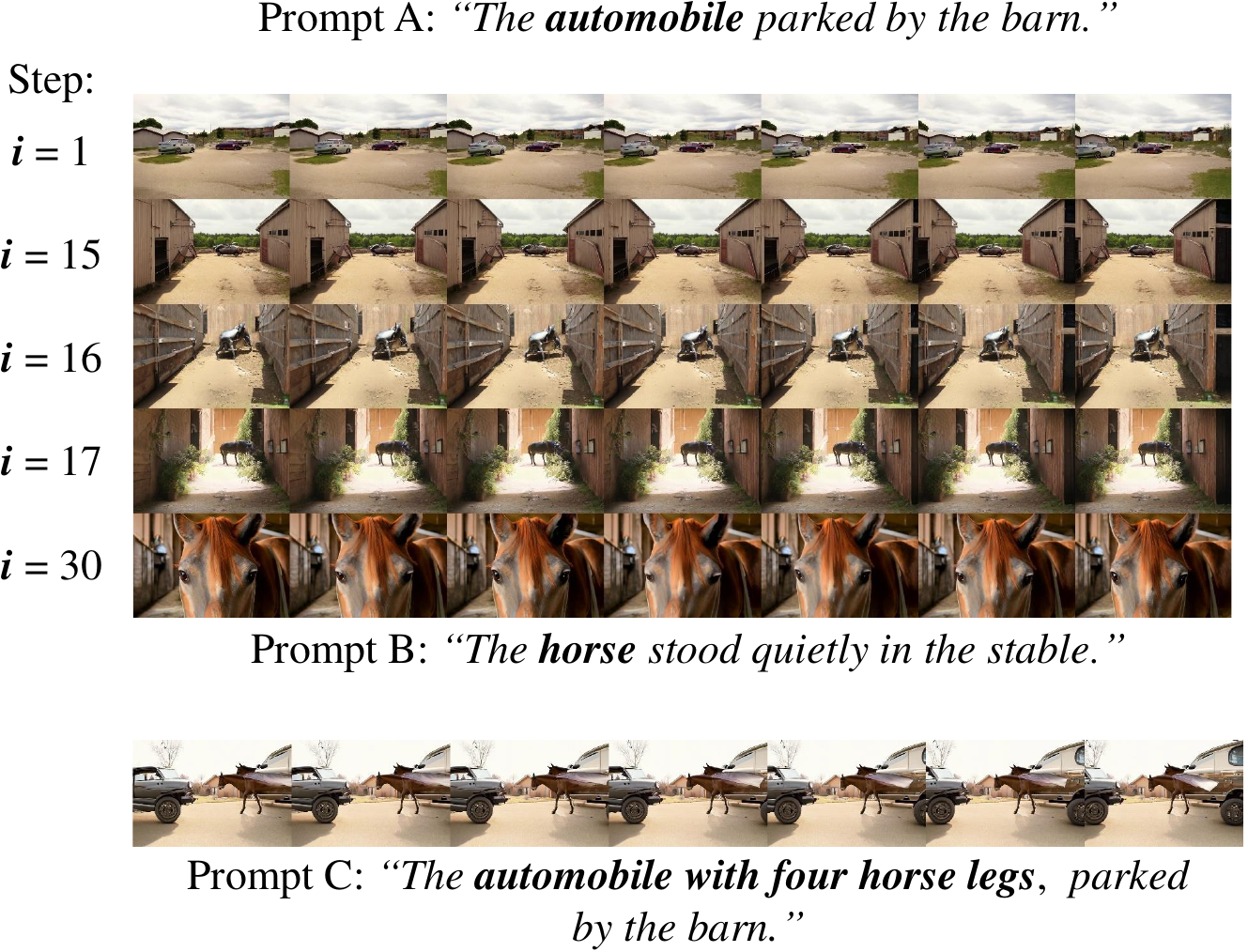}
    \caption{
    \textbf{Mixture of [``Automobile''] and [``Horse'']}. Our objective is to mix the features described in Prompt A and Prompt B with the guidance of Prompt C. We set the total number of interpolation steps to $30$. Using Algorithm~\ref{alg:find_optimal_interpolation}, we identify the $16$-th interpolation embedding as the optimal embedding and generate the corresponding video. The video generated directly from Prompt C does not exhibit the desired mixed features from Prompts A and B.
    }
    \label{fig:automobile_horse}
\end{figure}

\begin{figure}[!ht]
    \centering
    \includegraphics[width=0.6\linewidth]{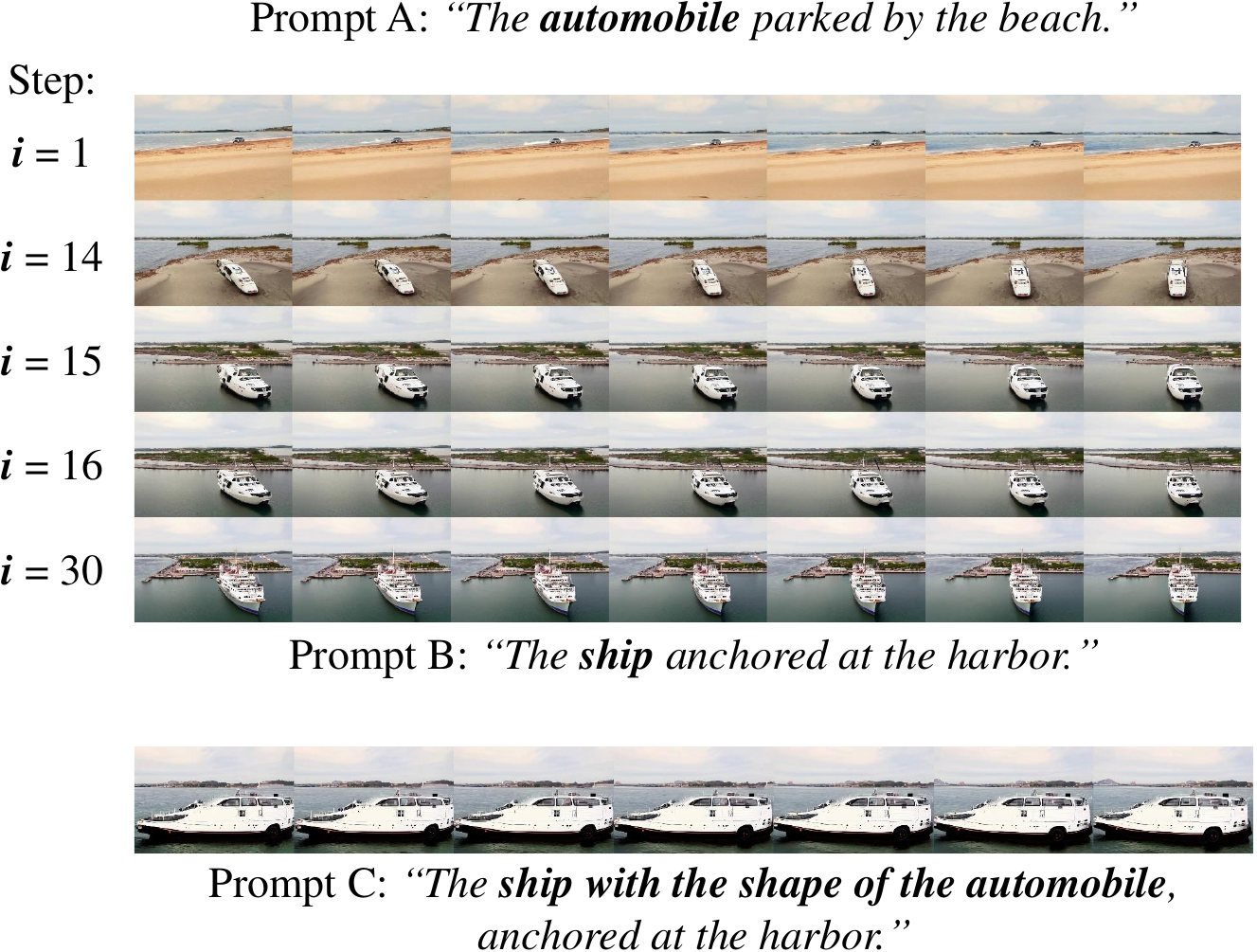}
    \caption{
    \textbf{Mixture of [``Automobile''] and [``Ship'']}. Our objective is to mix the features described in Prompt A and Prompt B with the guidance of Prompt C. We set the total number of interpolation steps to $30$. Using Algorithm~\ref{alg:find_optimal_interpolation}, we identify the $15$-th interpolation embedding as the optimal embedding and generate the corresponding video. The video generated directly from Prompt C does not exhibit the desired mixed features from Prompts A and B.
    }
    \label{fig:automobile_ship}
\end{figure}

\begin{figure}[!ht]
    \centering
    \includegraphics[width=0.6\linewidth]{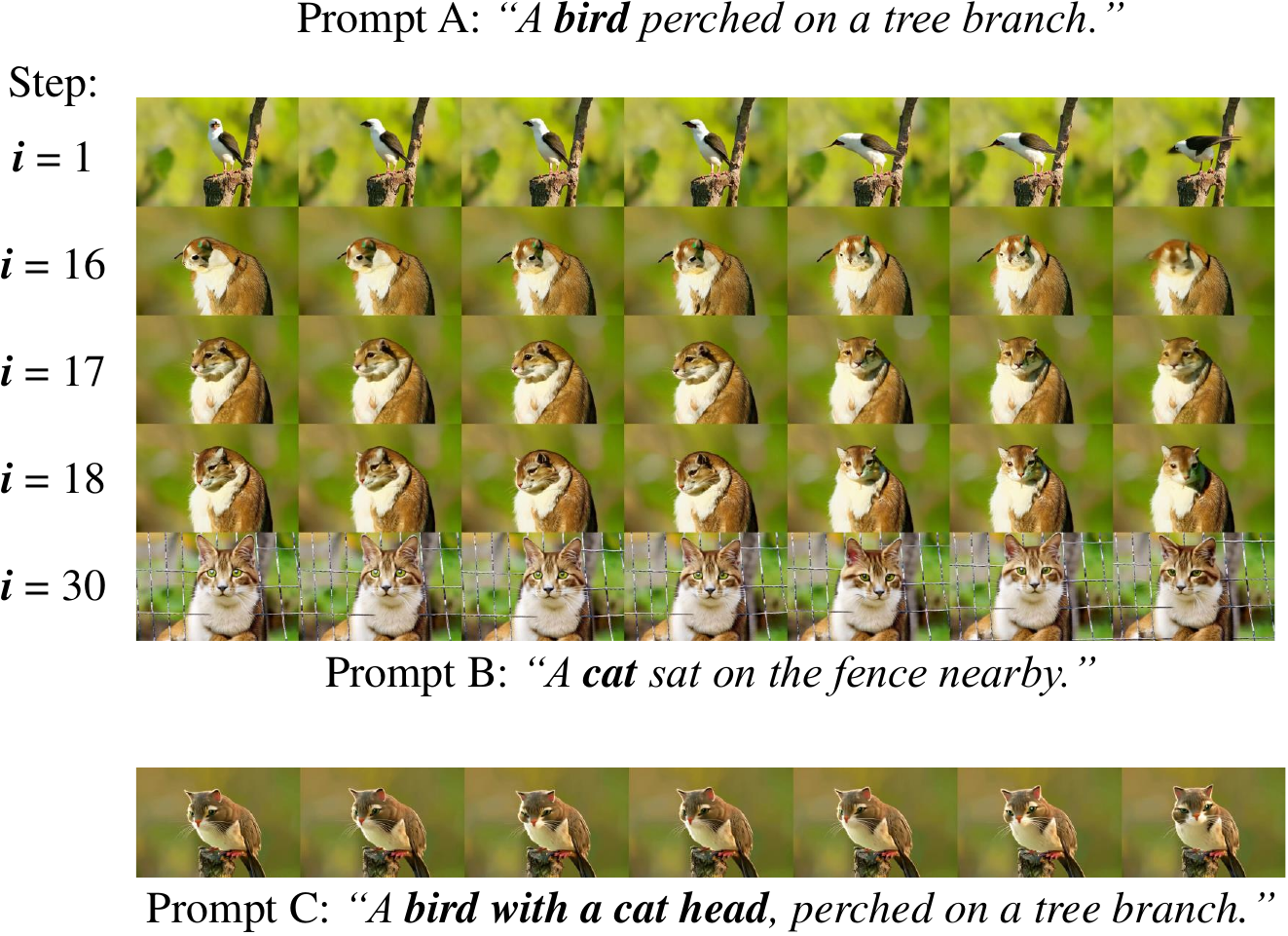}
    \caption{
    \textbf{Mixture of [``Bird''] and [``Cat'']}. Our objective is to mix the features described in Prompt A and Prompt B with the guidance of Prompt C. We set the total number of interpolation steps to $30$. Using Algorithm~\ref{alg:find_optimal_interpolation}, we identify the $17$-th interpolation embedding as the optimal embedding and generate the corresponding video. The video generated directly from Prompt C does not exhibit the desired mixed features from Prompts A and B.
    }
    \label{fig:bird_cat}
\end{figure}

\begin{figure}[!ht]
    \centering
    \includegraphics[width=0.6\linewidth]{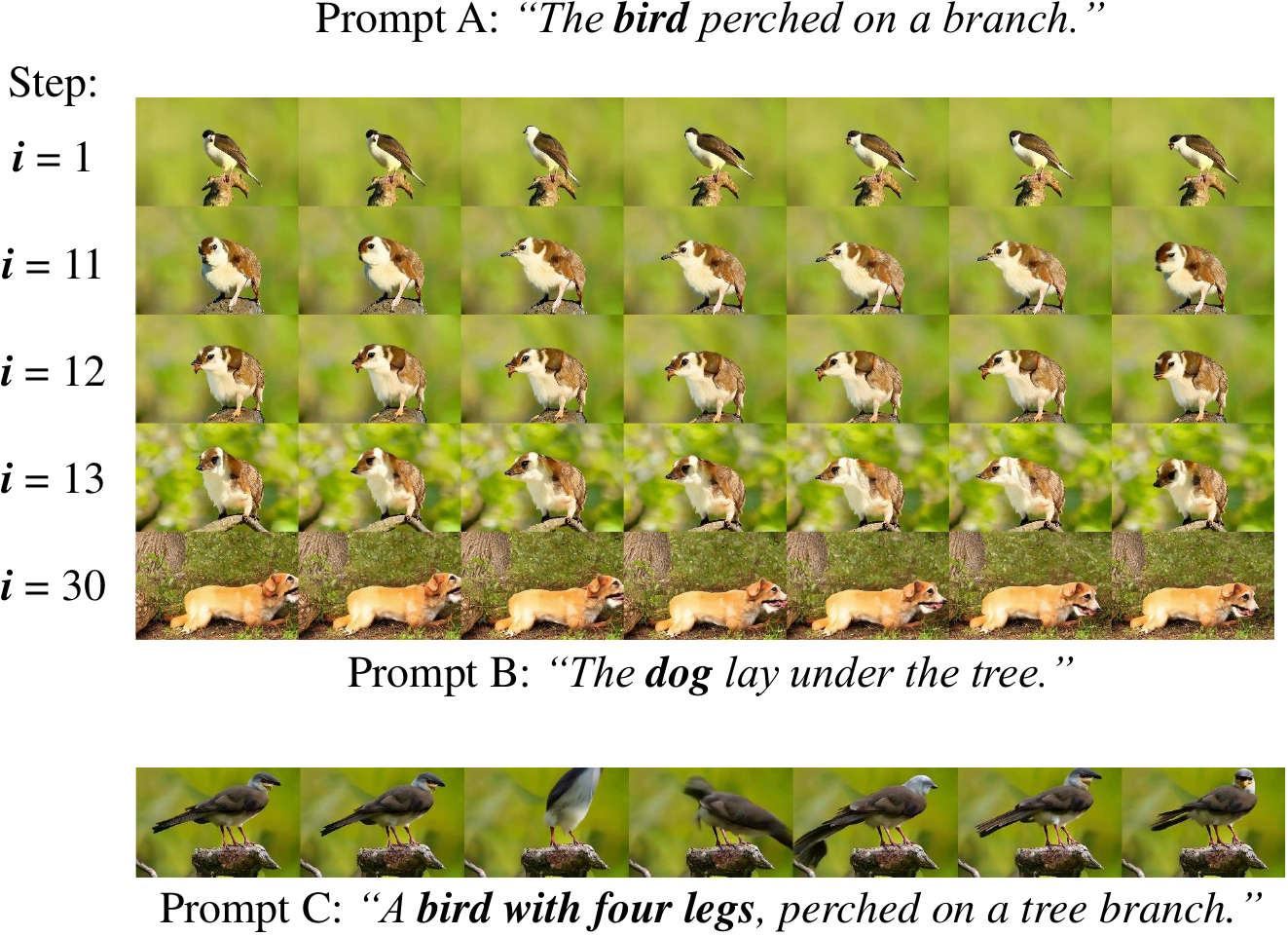}
    \caption{
    \textbf{Mixture of [``Bird''] and [``Dog'']}. Our objective is to mix the features described in Prompt A and Prompt B with the guidance of Prompt C. We set the total number of interpolation steps to $30$. Using Algorithm~\ref{alg:find_optimal_interpolation}, we identify the $12$-th interpolation embedding as the optimal embedding and generate the corresponding video. The video generated directly from Prompt C does not exhibit the desired mixed features from Prompts A and B.
    }
    \label{fig:bird_dog}
\end{figure}

\begin{figure}[!ht]
    \centering
    \includegraphics[width=0.6\linewidth]{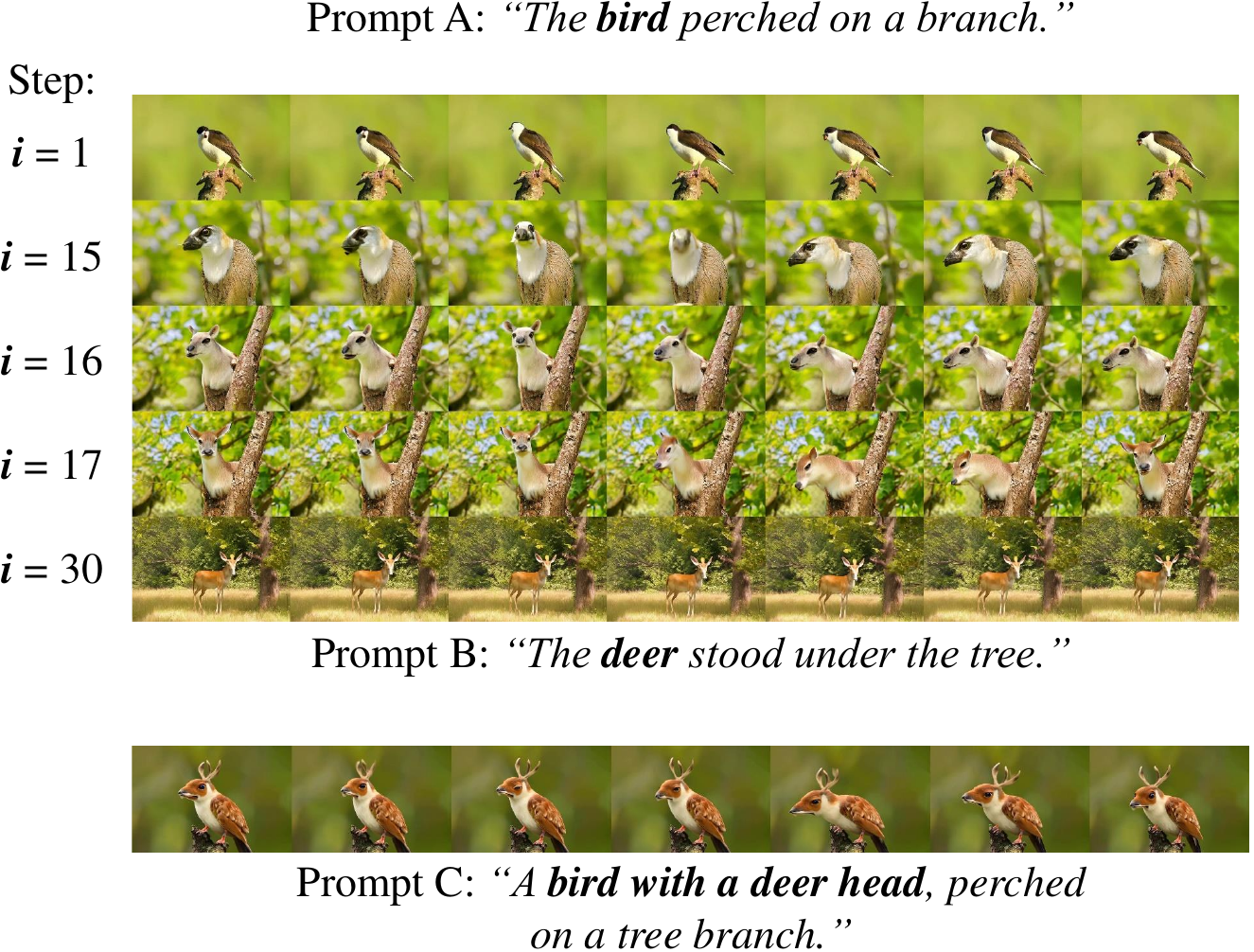}
    \caption{
    \textbf{Mixture of [``Bird''] and [``Deer'']}. Our objective is to mix the features described in Prompt A and Prompt B with the guidance of Prompt C. We set the total number of interpolation steps to $30$. Using Algorithm~\ref{alg:find_optimal_interpolation}, we identify the $16$-th interpolation embedding as the optimal embedding and generate the corresponding video. The video generated directly from Prompt C does not exhibit the desired mixed features from Prompts A and B.
    }
    \label{fig:bird_deer}
\end{figure}

\begin{figure}[!ht]
    \centering
    \includegraphics[width=0.6\linewidth]{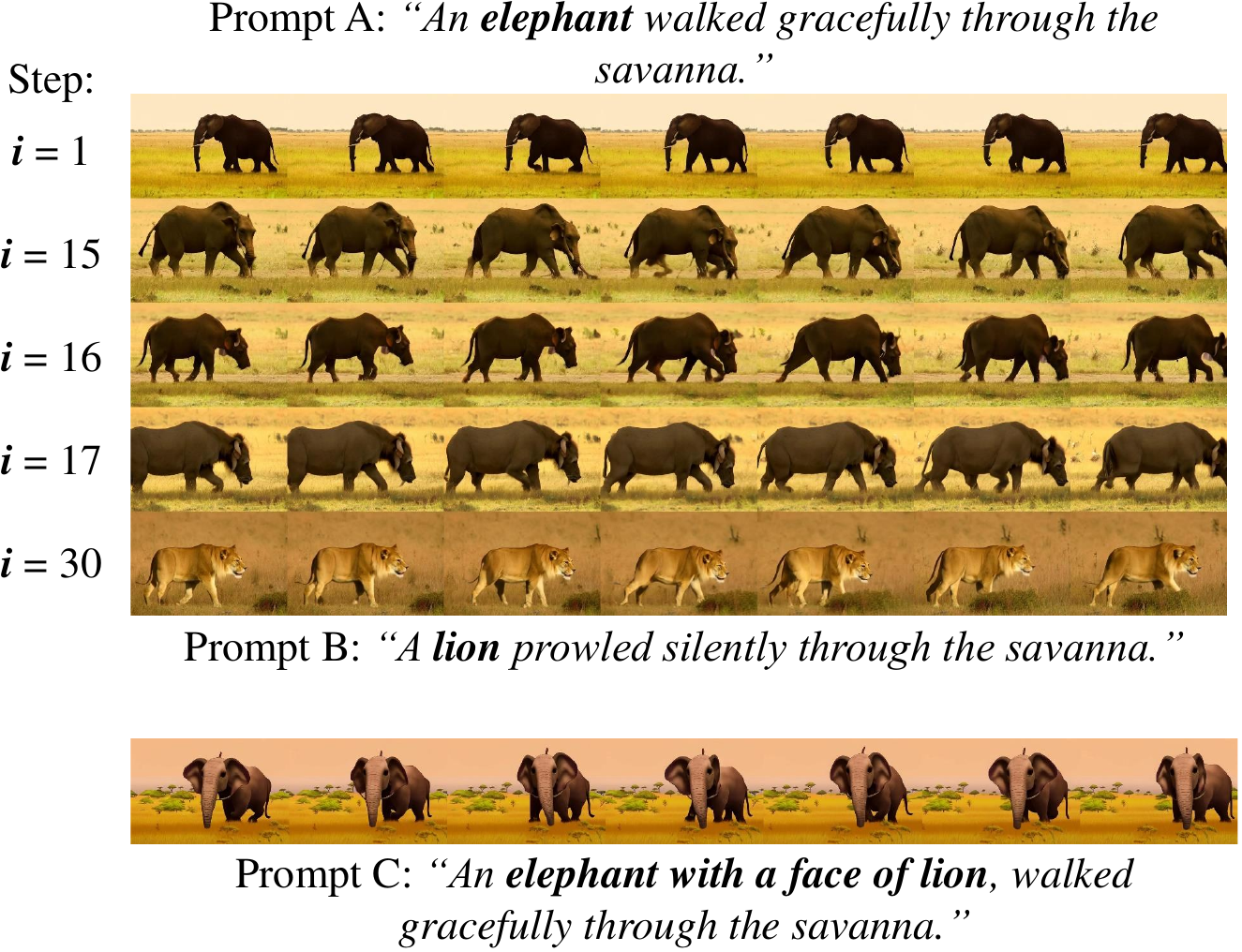}
    \caption{
    \textbf{Mixture of [``Elephant''] and [``Lion'']}. Our objective is to mix the features described in Prompt A and Prompt B with the guidance of Prompt C. We set the total number of interpolation steps to $30$. Using Algorithm~\ref{alg:find_optimal_interpolation}, we identify the $16$-th interpolation embedding as the optimal embedding and generate the corresponding video. The video generated directly from Prompt C does not exhibit the desired mixed features from Prompts A and B.
    }
    \label{fig:elephant_lion}
\end{figure}

\begin{figure}[!ht]
    \centering
    \includegraphics[width=0.6\linewidth]{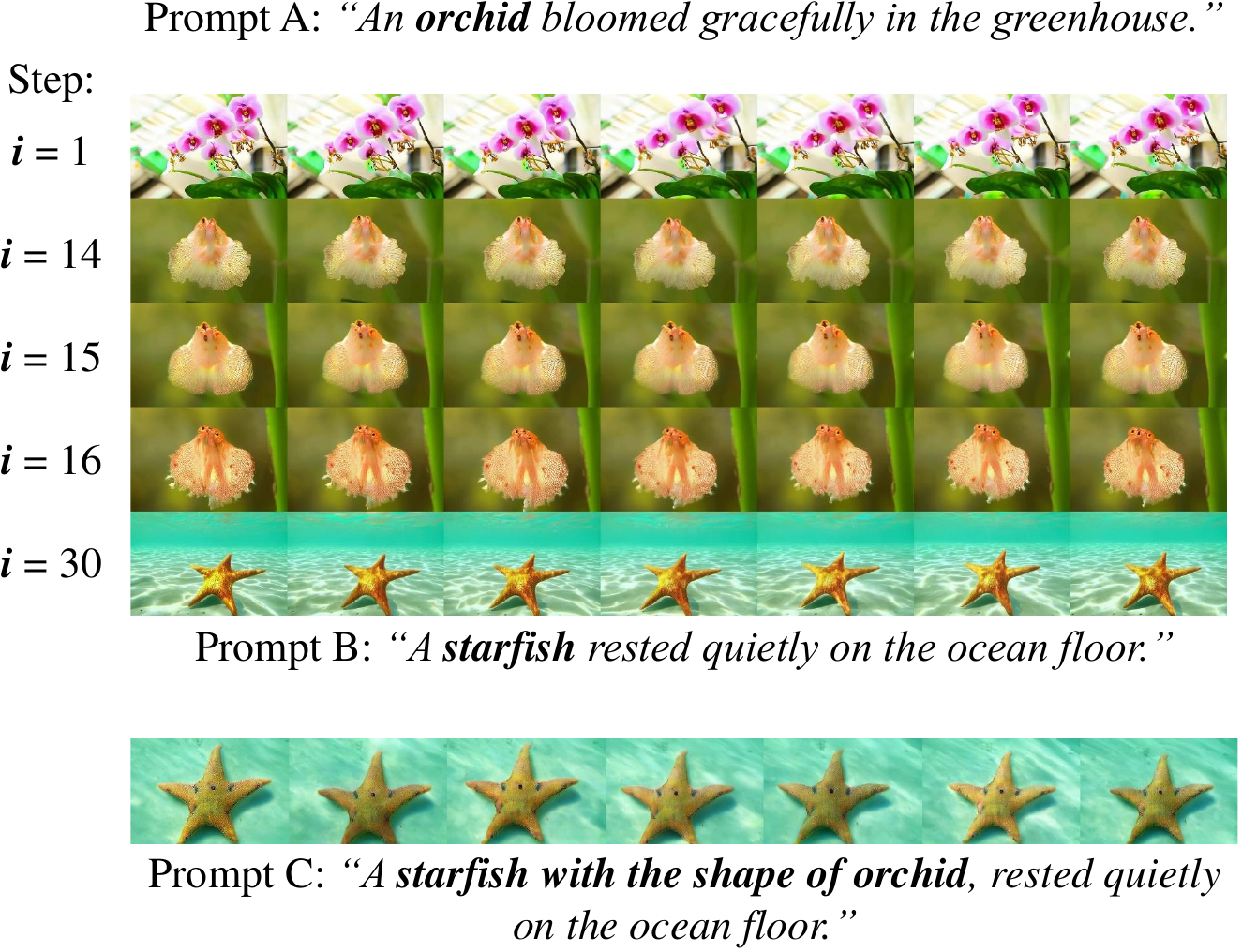}
    \caption{
    \textbf{Mixture of [``Orchid''] and [``Starfish'']}. Our objective is to mix the features described in Prompt A and Prompt B with the guidance of Prompt C. We set the total number of interpolation steps to $30$. Using Algorithm~\ref{alg:find_optimal_interpolation}, we identify the $15$-th interpolation embedding as the optimal embedding and generate the corresponding video. The video generated directly from Prompt C does not exhibit the desired mixed features from Prompts A and B.
    }
    \label{fig:orchid_starfish}
\end{figure}

\begin{figure}[!ht]
    \centering
    \includegraphics[width=0.6\linewidth]{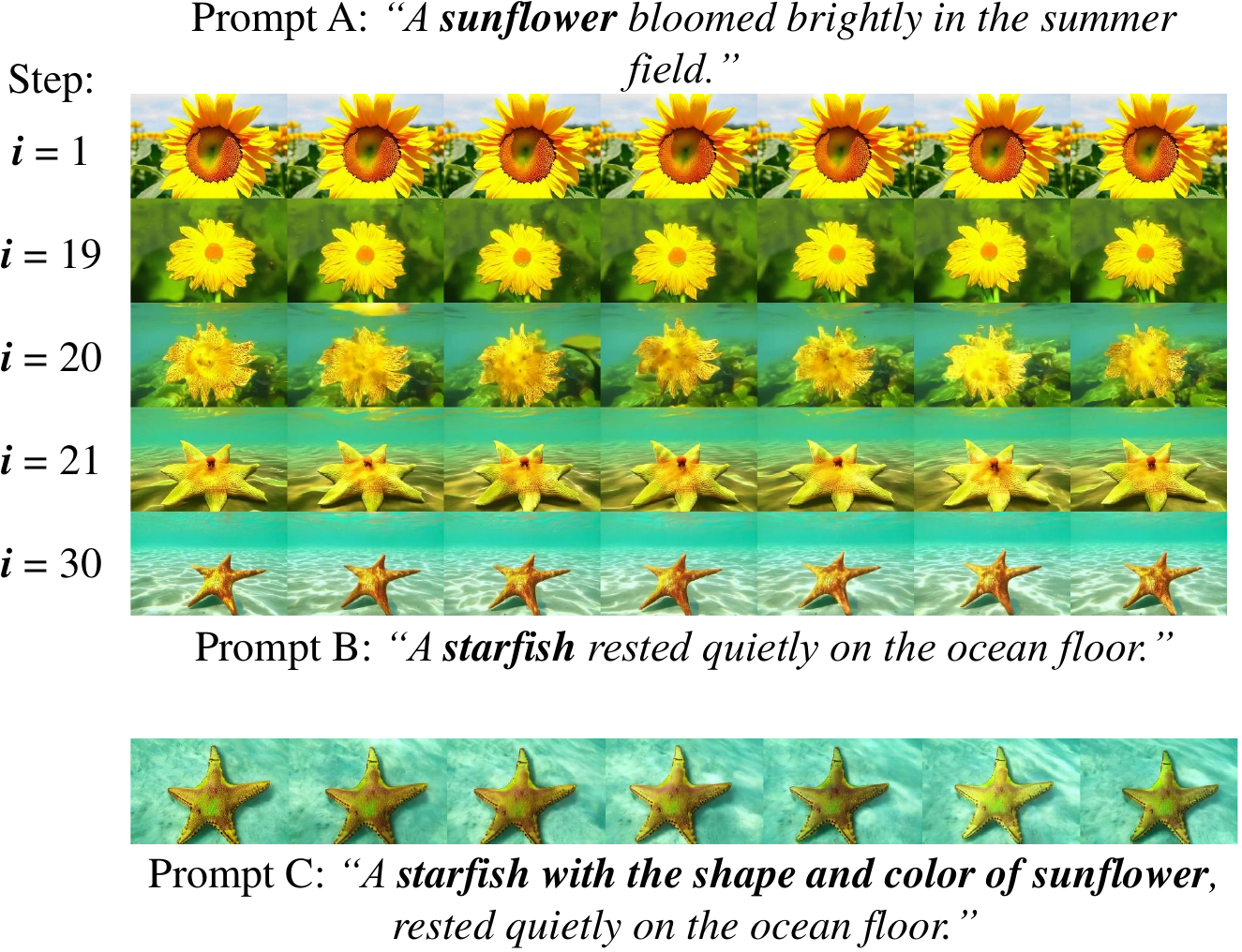}
    \caption{
    \textbf{Mixture of [``Sunflower''] and [``Starfish'']}. Our objective is to mix the features described in Prompt A and Prompt B with the guidance of Prompt C. We set the total number of interpolation steps to $30$. Using Algorithm~\ref{alg:find_optimal_interpolation}, we identify the $20$-th interpolation embedding as the optimal embedding and generate the corresponding video. The video generated directly from Prompt C does not exhibit the desired mixed features from Prompts A and B.
    }
    \label{fig:sunflower_starfish}
\end{figure}

\begin{figure}[!ht]
    \centering
    \includegraphics[width=0.6\linewidth]{sunflower_snail.pdf}
    \caption{
    \textbf{Mixture of [``Sunflower''] and [``Snail'']}. Our objective is to mix the features described in Prompt A and Prompt B with the guidance of Prompt C. We set the total number of interpolation steps to $30$. Using Algorithm~\ref{alg:find_optimal_interpolation}, we identify the $19$-th interpolation embedding as the optimal embedding and generate the corresponding video. The video generated directly from Prompt C does not exhibit the desired mixed features from Prompts A and B.
    }
    \label{fig:sunflower_snail}
\end{figure}

\begin{figure}[!ht]
    \centering
    \includegraphics[width=0.6\linewidth]{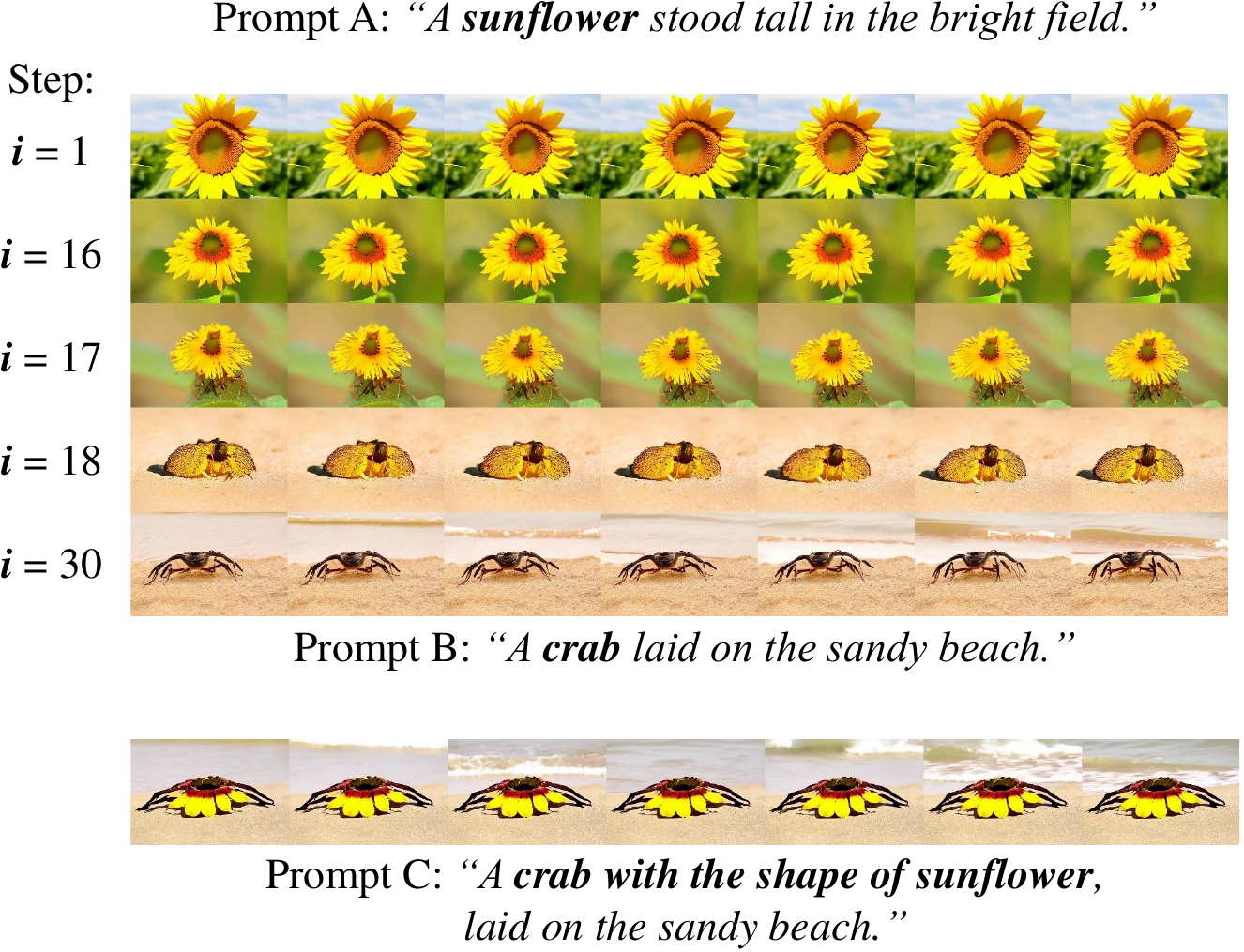}
    \caption{
    \textbf{Mixture of [``Sunflower''] and [``Crab'']}. Our objective is to mix the features described in Prompt A and Prompt B with the guidance of Prompt C. We set the total number of interpolation steps to $30$. Using Algorithm~\ref{alg:find_optimal_interpolation}, we identify the $17$-th interpolation embedding as the optimal embedding and generate the corresponding video. The video generated directly from Prompt C does not exhibit the desired mixed features from Prompts A and B.
    }
    \label{fig:sunflower_crab}
\end{figure}

\begin{figure}[!ht]
    \centering
    \includegraphics[width=0.6\linewidth]{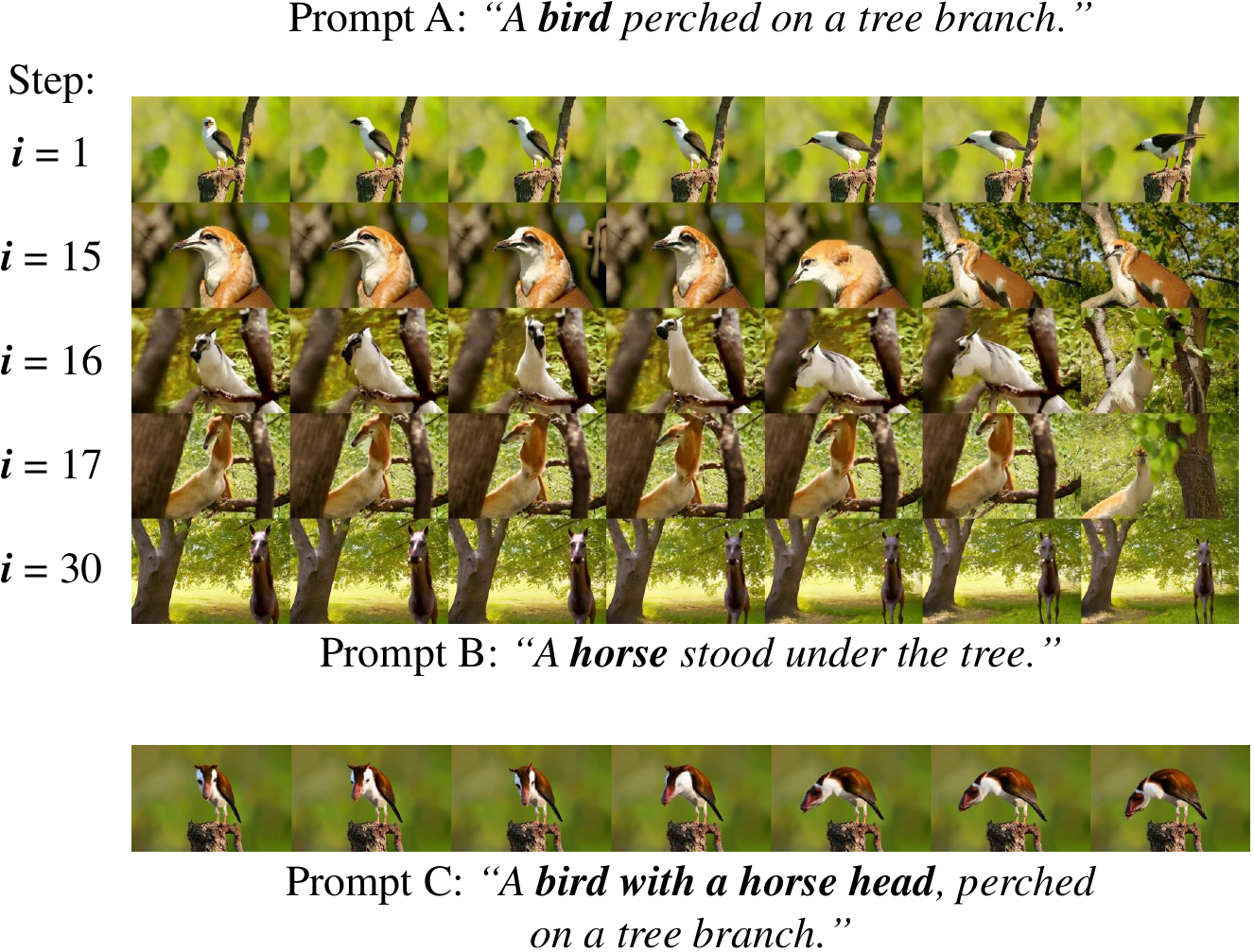}
    \caption{
    \textbf{Mixture of [``Bird''] and [``Horse'']}. Our objective is to mix the features described in Prompt A and Prompt B with the guidance of Prompt C. We set the total number of interpolation steps to $30$. Using Algorithm~\ref{alg:find_optimal_interpolation}, we identify the $16$-th interpolation embedding as the optimal embedding and generate the corresponding video. The video generated directly from Prompt C does not exhibit the desired mixed features from Prompts A and B.
    }
    \label{fig:bird_horse}
\end{figure}

\begin{figure}[!ht]
    \centering
    \includegraphics[width=0.6\linewidth]{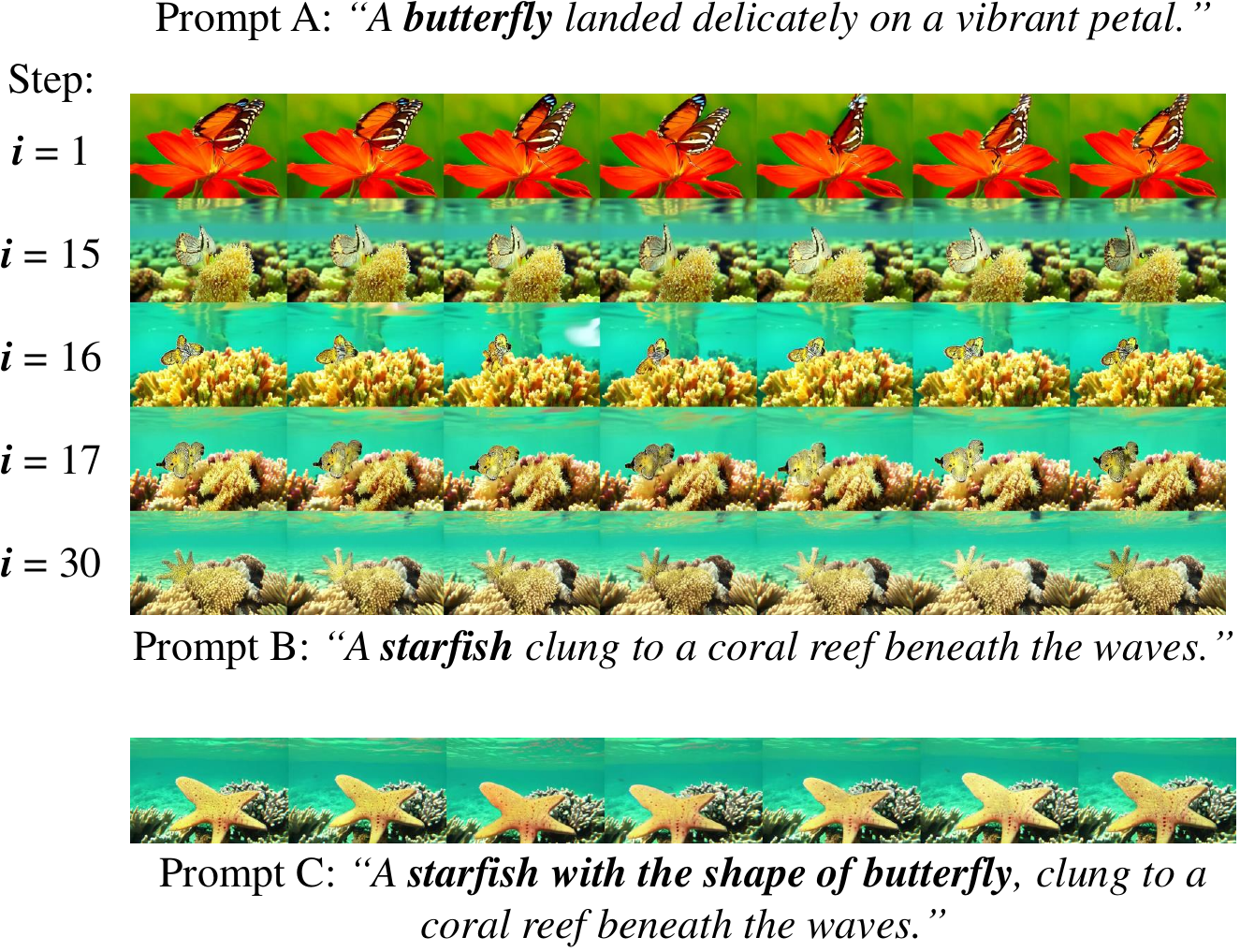}
    \caption{
    \textbf{Mixture of [``Butterfly''] and [``Starfish'']}. Our objective is to mix the features described in Prompt A and Prompt B with the guidance of Prompt C. We set the total number of interpolation steps to $30$. Using Algorithm~\ref{alg:find_optimal_interpolation}, we identify the $16$-th interpolation embedding as the optimal embedding and generate the corresponding video. The video generated directly from Prompt C does not exhibit the desired mixed features from Prompts A and B.
    }
    \label{fig:butterfly_starfish}
\end{figure}

\begin{figure}[!ht]
    \centering
    \includegraphics[width=0.6\linewidth]{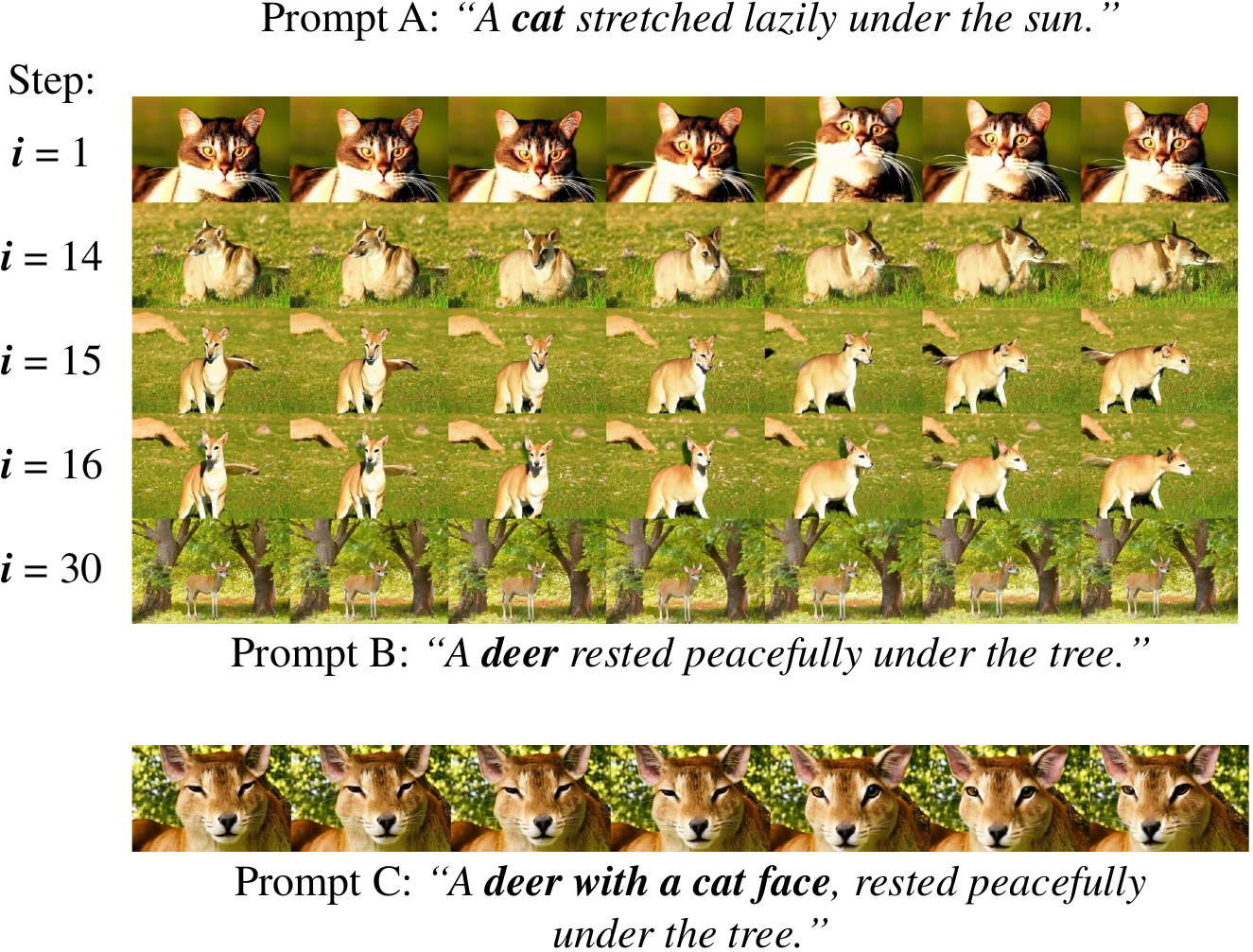}
    \caption{
    \textbf{Mixture of [``Cat''] and [``Deer'']}. Our objective is to mix the features described in Prompt A and Prompt B with the guidance of Prompt C. We set the total number of interpolation steps to $30$. Using Algorithm~\ref{alg:find_optimal_interpolation}, we identify the $15$-th interpolation embedding as the optimal embedding and generate the corresponding video. The video generated directly from Prompt C does not exhibit the desired mixed features from Prompts A and B.
    }
    \label{fig:cat_deer}
\end{figure}

\begin{figure}[!ht]
    \centering
    \includegraphics[width=0.6\linewidth]{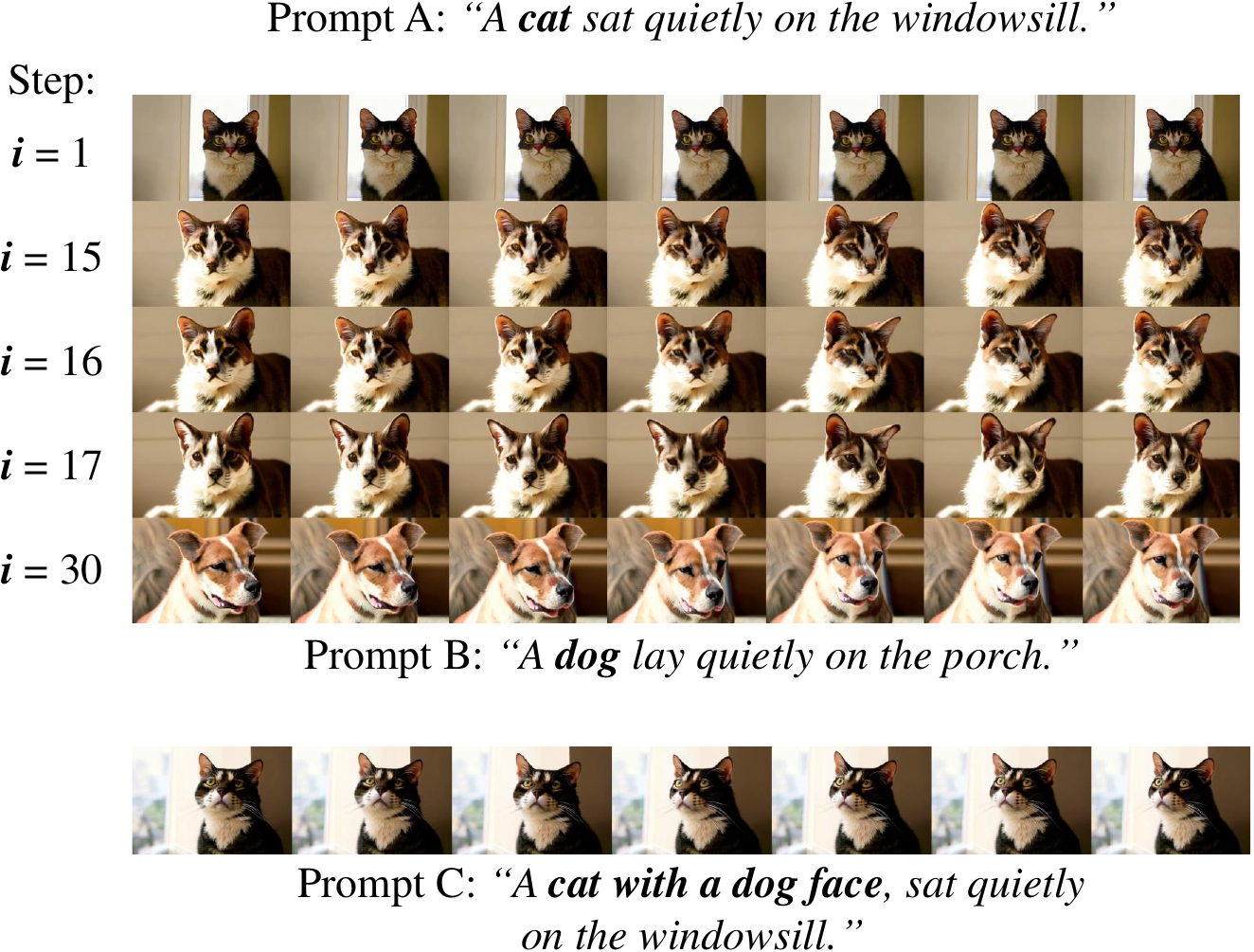}
    \caption{
    \textbf{Mixture of [``Cat''] and [``Dog'']}. Our objective is to mix the features described in Prompt A and Prompt B with the guidance of Prompt C. We set the total number of interpolation steps to $30$. Using Algorithm~\ref{alg:find_optimal_interpolation}, we identify the $16$-th interpolation embedding as the optimal embedding and generate the corresponding video. The video generated directly from Prompt C does not exhibit the desired mixed features from Prompts A and B.
    }
    \label{fig:cat_dog}
\end{figure}

\begin{figure}[!ht]
    \centering
    \includegraphics[width=0.6\linewidth]{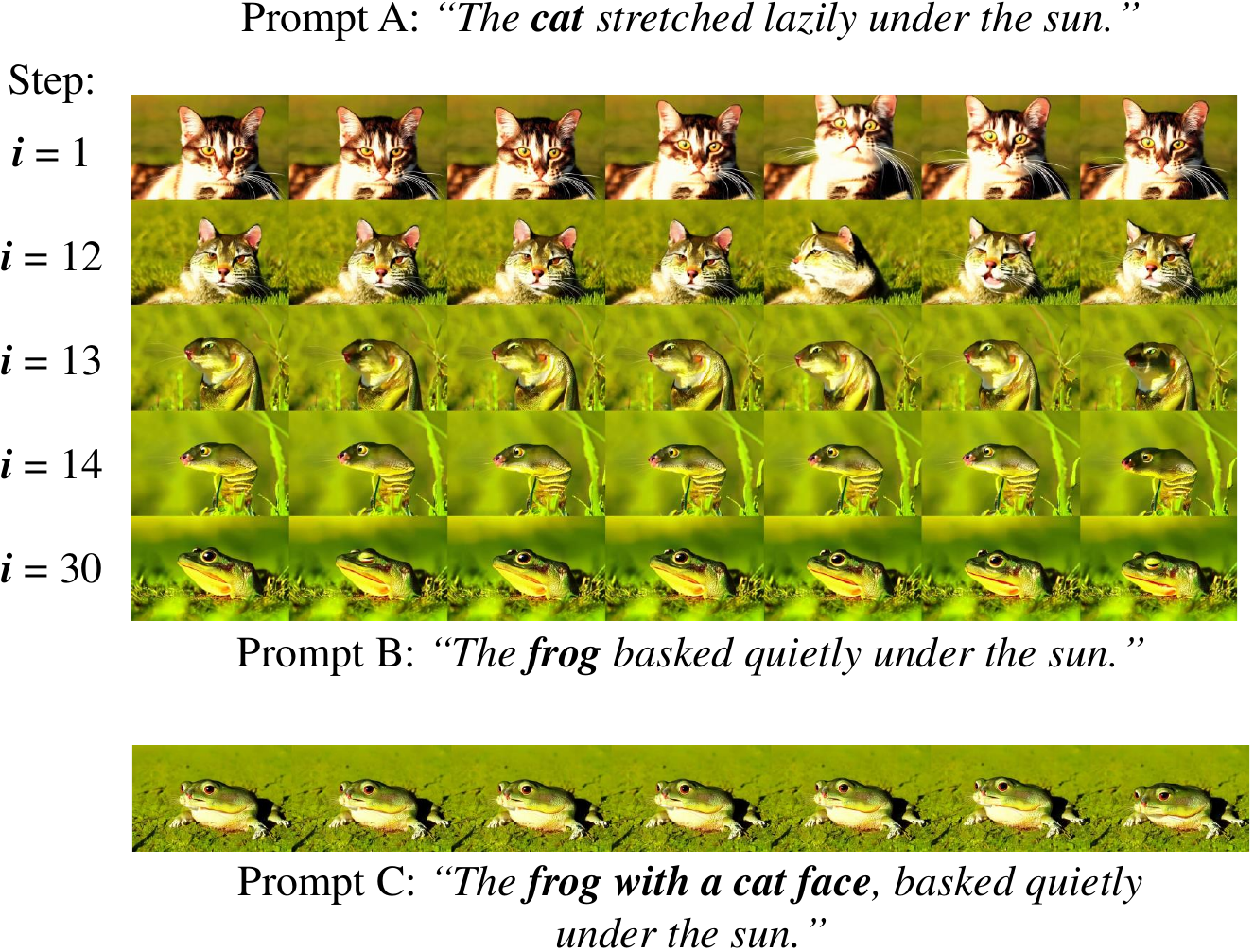}
    \caption{
    \textbf{Mixture of [``Cat''] and [``Frog'']}. Our objective is to mix the features described in Prompt A and Prompt B with the guidance of Prompt C. We set the total number of interpolation steps to $30$. Using Algorithm~\ref{alg:find_optimal_interpolation}, we identify the $13$-th interpolation embedding as the optimal embedding and generate the corresponding video. The video generated directly from Prompt C does not exhibit the desired mixed features from Prompts A and B.
    }
    \label{fig:cat_frog}
\end{figure}

\begin{figure}[!ht]
    \centering
    \includegraphics[width=0.6\linewidth]{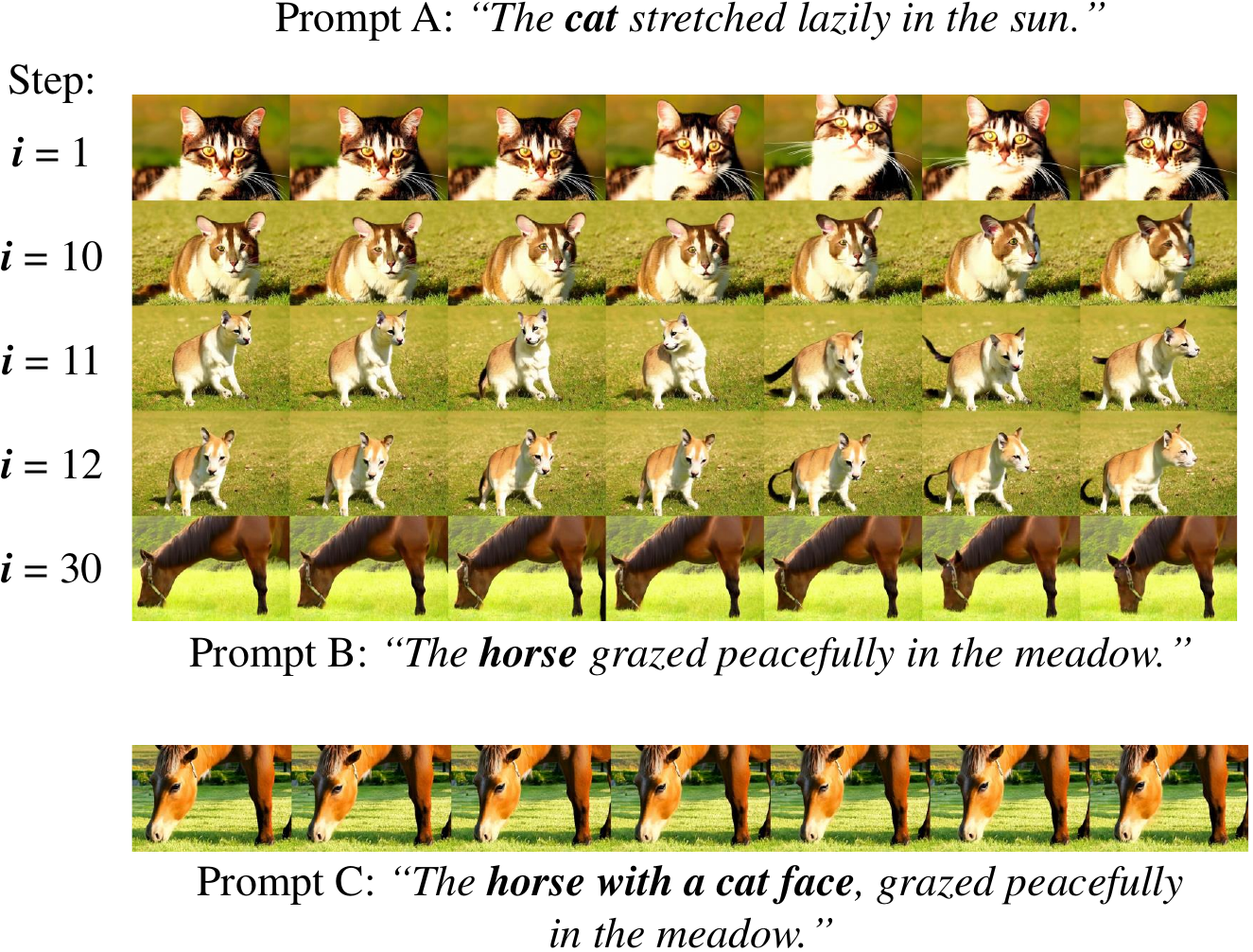}
    \caption{
    \textbf{Mixture of [``Cat''] and [``Horse'']}. Our objective is to mix the features described in Prompt A and Prompt B with the guidance of Prompt C. We set the total number of interpolation steps to $30$. Using Algorithm~\ref{alg:find_optimal_interpolation}, we identify the $11$-th interpolation embedding as the optimal embedding and generate the corresponding video. The video generated directly from Prompt C does not exhibit the desired mixed features from Prompts A and B.
    }
    \label{fig:cat_horse}
\end{figure}

\begin{figure}[!ht]
    \centering
    \includegraphics[width=0.6\linewidth]{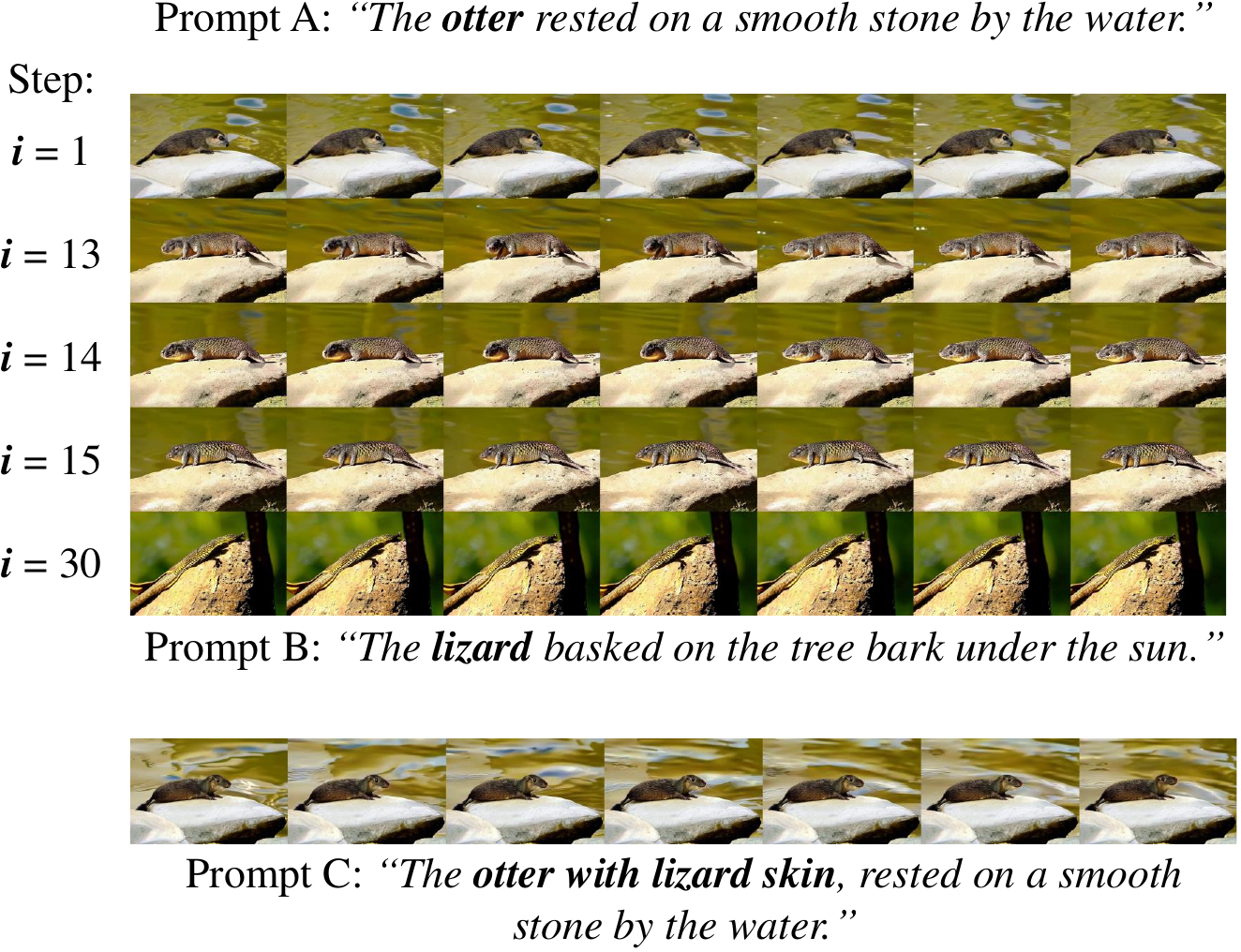}
    \caption{
    \textbf{Mixture of [``Otter''] and [``Lizard'']}. Our objective is to mix the features described in Prompt A and Prompt B with the guidance of Prompt C. We set the total number of interpolation steps to $30$. Using Algorithm~\ref{alg:find_optimal_interpolation}, we identify the $14$-th interpolation embedding as the optimal embedding and generate the corresponding video. The video generated directly from Prompt C does not exhibit the desired mixed features from Prompts A and B.
    }
    \label{fig:otter_lizard}
\end{figure}

\begin{figure}[!ht]
    \centering
    \includegraphics[width=0.6\linewidth]{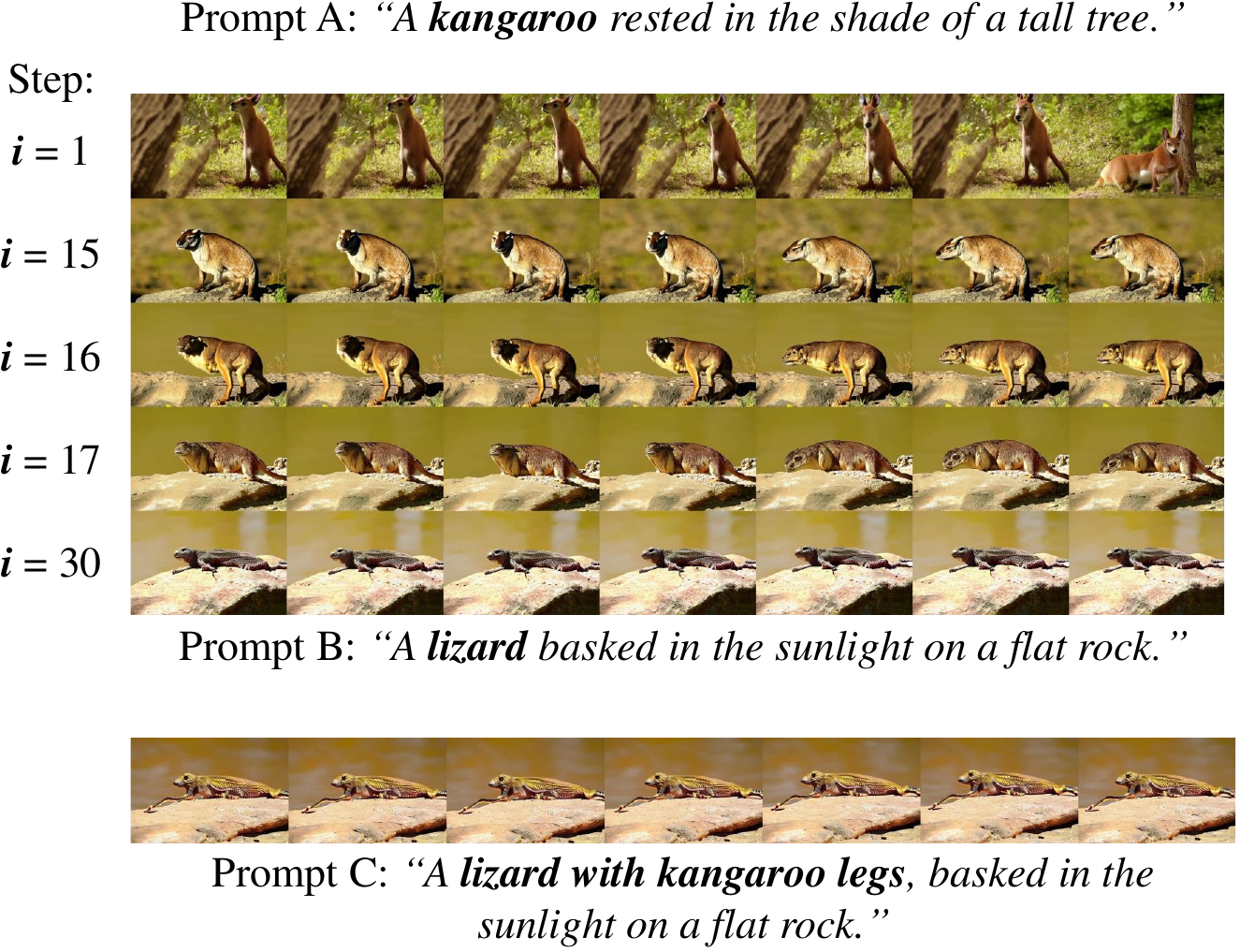}
    \caption{
    \textbf{Mixture of [``Kangaroo''] and [``Lizard'']}. Our objective is to mix the features described in Prompt A and Prompt B with the guidance of Prompt C. We set the total number of interpolation steps to $30$. Using Algorithm~\ref{alg:find_optimal_interpolation}, we identify the $16$-th interpolation embedding as the optimal embedding and generate the corresponding video. The video generated directly from Prompt C does not exhibit the desired mixed features from Prompts A and B.
    }
    \label{fig:kangaroo_lizard}
\end{figure}

\begin{figure}[!ht]
    \centering
    \includegraphics[width=0.6\linewidth]{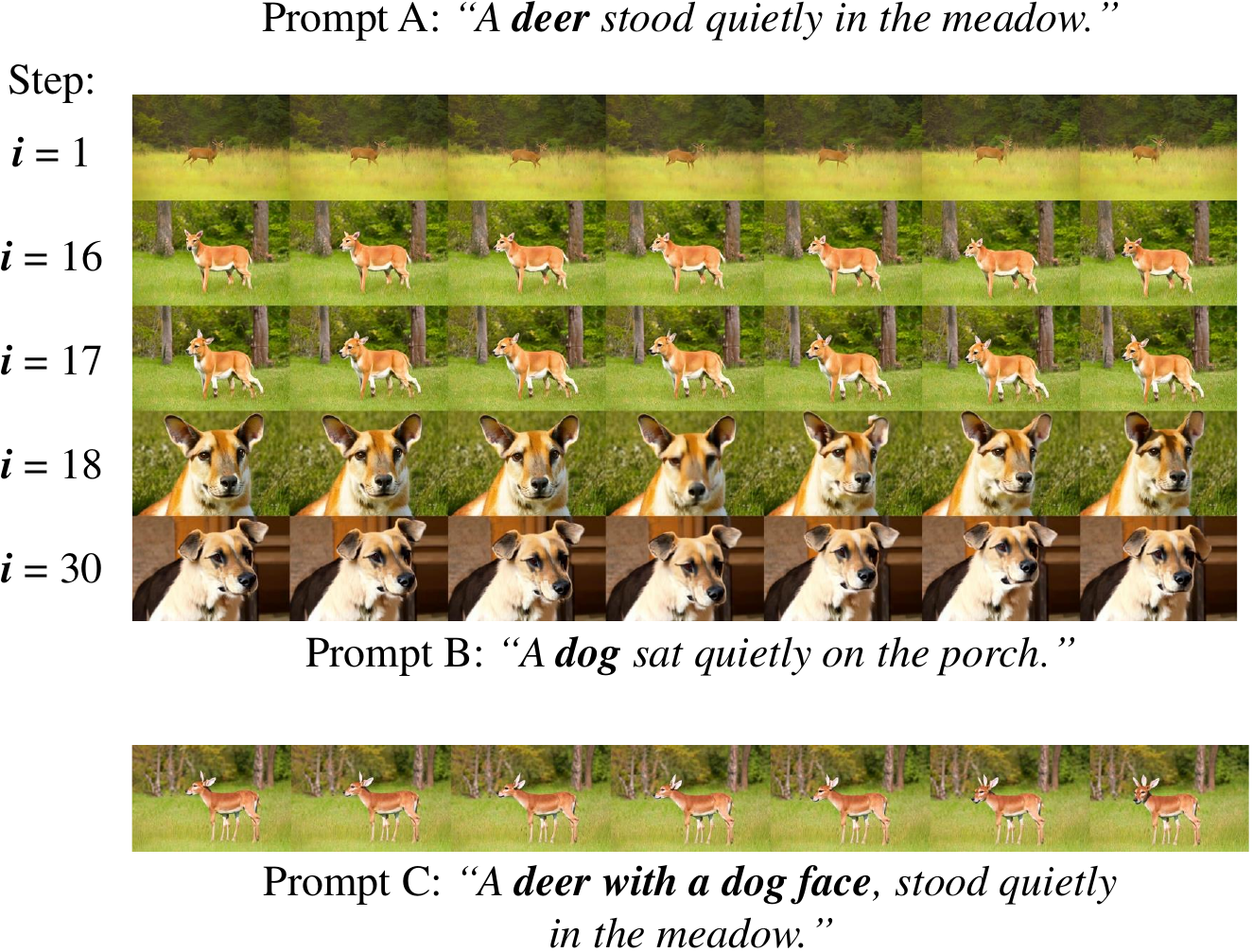}
    \caption{
    \textbf{Mixture of [``Deer''] and [``Dog'']}. Our objective is to mix the features described in Prompt A and Prompt B with the guidance of Prompt C. We set the total number of interpolation steps to $30$. Using Algorithm~\ref{alg:find_optimal_interpolation}, we identify the $17$-th interpolation embedding as the optimal embedding and generate the corresponding video. The video generated directly from Prompt C does not exhibit the desired mixed features from Prompts A and B.
    }
    \label{fig:deer_dog}
\end{figure}

\begin{figure}[!ht]
    \centering
    \includegraphics[width=0.6\linewidth]{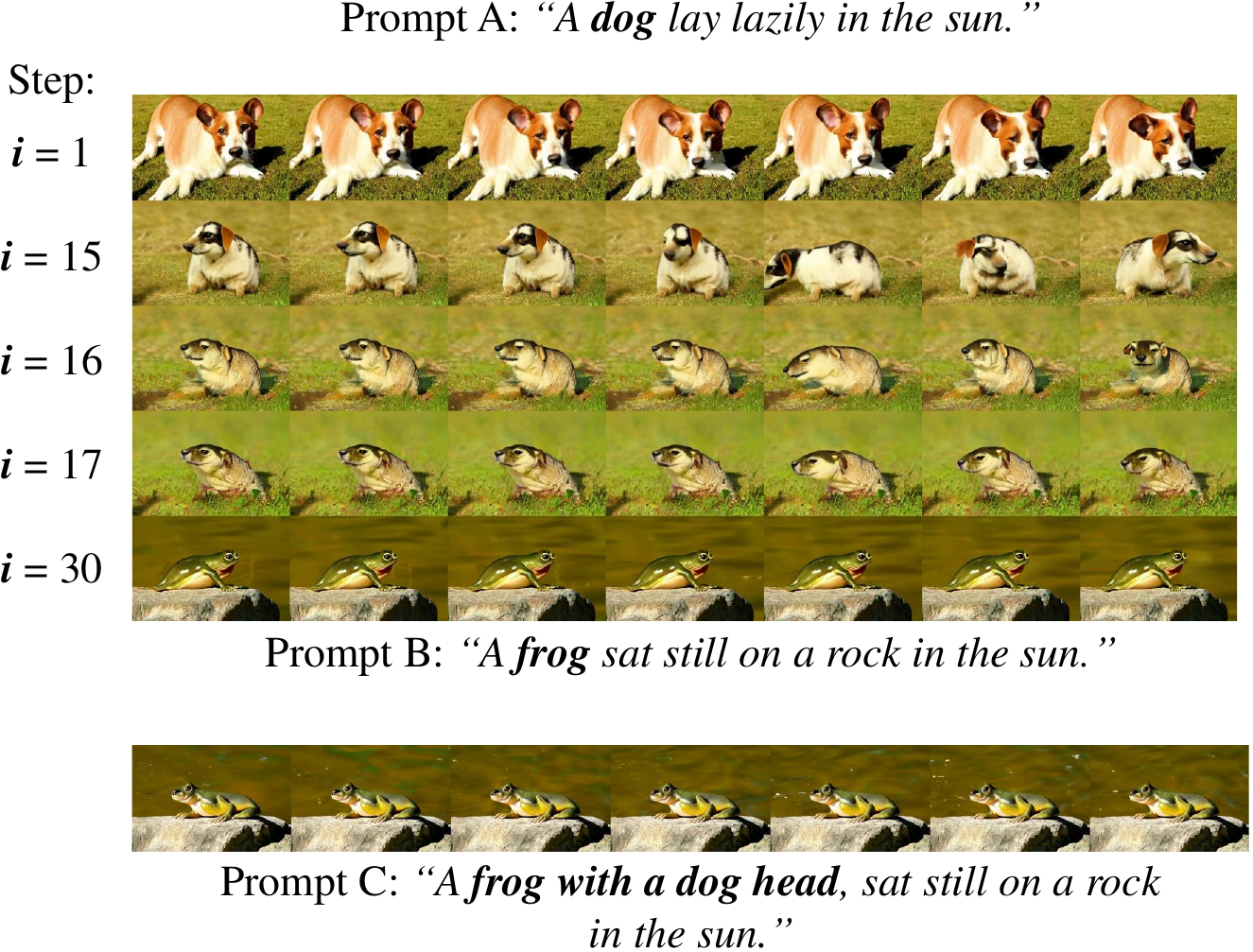}
    \caption{
    \textbf{Mixture of [``Dog''] and [``Frog'']}. Our objective is to mix the features described in Prompt A and Prompt B with the guidance of Prompt C. We set the total number of interpolation steps to $30$. Using Algorithm~\ref{alg:find_optimal_interpolation}, we identify the $16$-th interpolation embedding as the optimal embedding and generate the corresponding video. The video generated directly from Prompt C does not exhibit the desired mixed features from Prompts A and B.
    }
    \label{fig:dog_frog}
\end{figure}

\begin{figure}[!ht]
    \centering
    \includegraphics[width=0.6\linewidth]{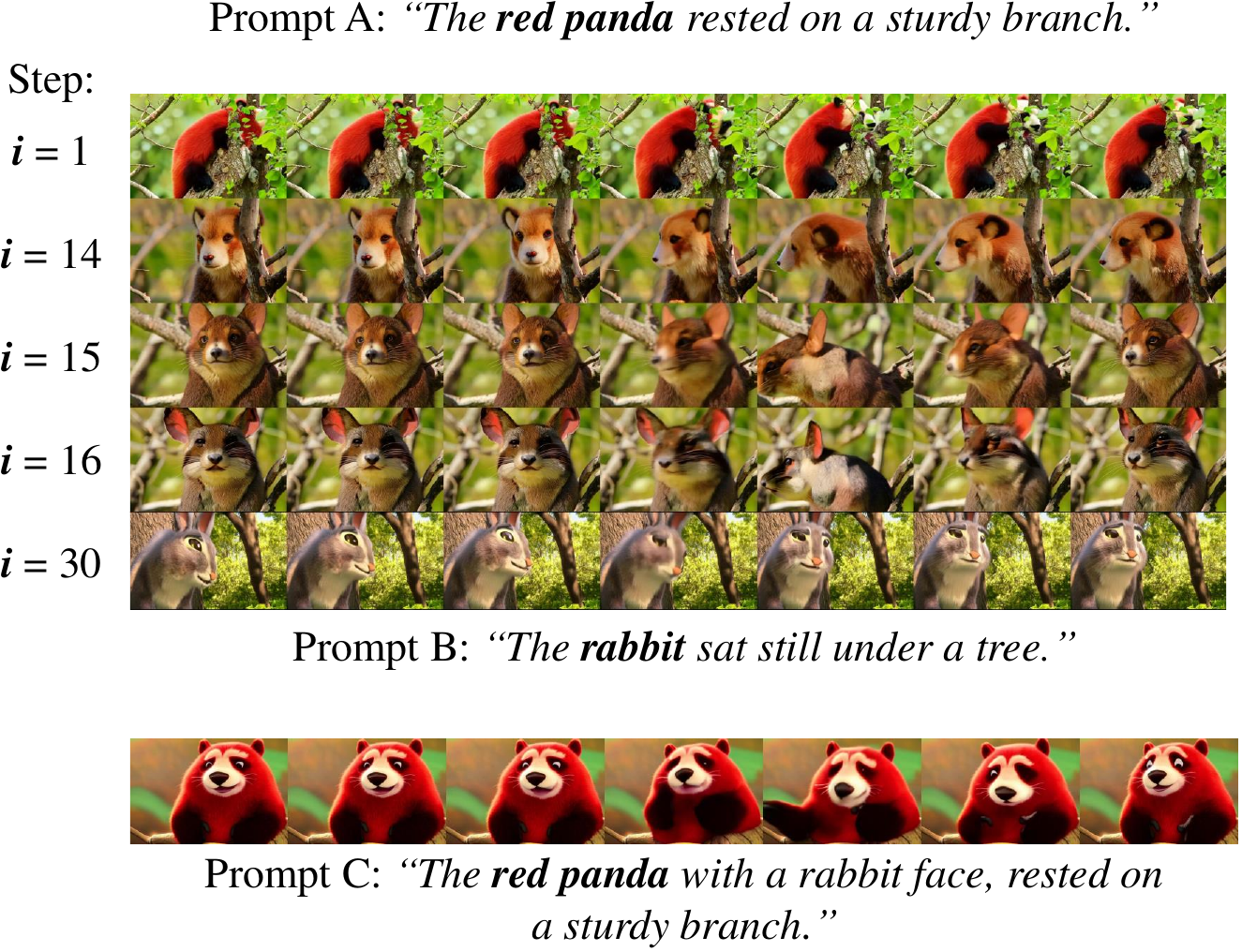}
    \caption{
    \textbf{Mixture of [``Red Panda''] and [``Rabbit'']}. Our objective is to mix the features described in Prompt A and Prompt B with the guidance of Prompt C. We set the total number of interpolation steps to $30$. Using Algorithm~\ref{alg:find_optimal_interpolation}, we identify the $15$-th interpolation embedding as the optimal embedding and generate the corresponding video. The video generated directly from Prompt C does not exhibit the desired mixed features from Prompts A and B.
    }
    \label{fig:redpanda_rabbit}
\end{figure}

\section{Full Algorithm}

In this section, we provide the algorithm for 3D attention in Algorithm~\ref{alg:3d_attention}. 

\begin{algorithm}[!ht]\caption{3D Attention}\label{alg:3d_attention}
\begin{algorithmic}[1]
\State {\bf datastructure} \textsc{3D Attention}  \Comment{Definition~\ref{def:3d_attention}}
\State {\bf members}
\State \hspace{4mm} $n \in \N$: the length of input sequence
\State \hspace{4mm} $n_f \in \N$: the number of frames
\State \hspace{4mm} $h \in \N$: the hight of video
\State \hspace{4mm} $w \in \N$: the width of video
\State \hspace{4mm} $d \in \N$: the hidden dimension
\State \hspace{4mm} $c \in \N$: the channel of video
\State \hspace{4mm} $c_{\mathrm{patch}} \in \R^{n \times d}$: the channel of patch embedding. 
\State \hspace{4mm} $E_{t} \in \R^{n \times d}$: the text embedding.
\State \hspace{4mm} $E_{\mathrm{video}} \in \R^{n_f \times h \times w \times c}$: the video embedding.
\State \hspace{4mm} $E_{\mathrm{patch}} \in \R^{n_f \times h' \times w' \times c_{\mathrm{patch}}}$: the patch embedding.
\State \hspace{4mm} $\phi_{\rm conv}(X, c_{\mathrm{in}}, c_{\mathrm{out}}, p, s)$: the convolution layer. \Comment{Definition~\ref{def:conv_layer}}
\State \hspace{4mm} $\mathsf{Attn}(X)$: the attention block. \Comment{Definition~\ref{def:attention}}
\State \hspace{4mm} $\phi_{\mathrm{linear}}(X)$: the linear projection. \Comment{Definition~\ref{def:linear_projection}}
\State {\bf end members}
\State 
\Procedure{3D Attention}{$E_t \in \R^{n \times d}, E_v \in \R^{n_f \times h \times w \times c}$} 
\State {\color{blue}/* 
$E_{\mathrm{patch}}$ dimension: $[n_f, h, w, c_v] \to [n_f, h', w', c_{\mathrm{patch}}] $ */}
\State $E_{\mathrm{patch}} \gets \phi_{\mathrm{conv}}(E_v, c_v, c_\mathrm{patch}, p=2, s=2)$
\State {\color{blue}/* 
$E_{\mathrm{patch}}$ dimension: $[n_f, h', w', c_{\mathrm{patch}}] \to [n_f \times h' \times  w', c_{\mathrm{patch}}] $ */}
\State $E_{\mathrm{patch}} \gets \mathrm{reshape}(E_{\mathrm{patch}})$ 
\State {\color{blue}/* 
$E_{\mathrm{hidden}}$ dimension: $[n + n_f \times h' \times  w', c_{\mathrm{patch}}] $ */}
\State $E_{\mathrm{hidden}} \gets \mathrm{concat}(E_t, E_\mathrm{patch})$
\State {\color{blue}/* 
$E_{\mathrm{hidden}}$ dimension: $[n + n_f \times h' \times  w', c_{\mathrm{patch}}] $ */}
\State $E_{\mathrm{hidden}} \gets \mathsf{Attn}(E_{\mathrm{hidden}})$
\State {\color{blue}/* 
$E_t$ dimension: $[n, d] $ */}
\State {\color{blue}/* 
$E_\mathrm{patch}$ dimension: $[n_f \times h' \times  w', c_{\mathrm{patch}}]$ */}
\State $E_t, E_{\mathrm{patch}} \gets \mathrm{split}(E_{\mathrm{hidden}})$
\State {\color{blue}/* 
$E_v$ dimension: $[n_f \times h' \times  w', c_{\mathrm{patch}}] \to [n_f \times h \times  w, c_v] $ */}
\State $E_v \gets \phi_{\mathrm{linear}}(E_{\mathrm{patch}})$
\State {\color{blue}/* 
$E_v$ dimension: $[n_f \times h \times  w, c_v] \to [n_f, h, w, c_v] $ */}
\State $E_v \gets \mathrm{reshape}(E_v)$
\State Return $E_v$
\EndProcedure
\end{algorithmic}
\end{algorithm}




\end{document}